\documentclass[twoside,11pt]{article}

%

\usepackage{jmlr2e}


\usepackage{hyperref}       
\usepackage{url}            
\usepackage{booktabs}       
\usepackage{amsfonts}       
\usepackage{nicefrac}       
\usepackage{microtype}      
\usepackage[matit]{echodefs}
\usepackage{thmtools, thm-restate}
\usepackage{todonotes}
\usepackage{dirtytalk}
\usepackage{graphicx}
\usepackage{cleveref}
\usepackage{tcolorbox}
\usepackage[bottom]{footmisc}
\usepackage{enumitem}
\usepackage{float}
\usepackage{algorithm}
\usepackage{algorithmic}
\usepackage{multirow}
\usepackage{soul}
\definecolor{Gray}{gray}{0.9}
\usepackage{colortbl}
\newcommand{\etal}{\textit{et al. }}

\def\bH{{\mathbb L}}
\def\bS{{\mathbb S}}
\def\bE{{\mathbb E}}
\def\bH{{\mathbb H}}

\def\acos{{\mathrm{acos}}}
\def\acosh{{\mathrm{acosh}}}

\newcommand\blfootnote[1]{%
  \begingroup
  \renewcommand\thefootnote{}\footnote{#1}%
  \addtocounter{footnote}{-1}%
  \endgroup
}





\ShortHeadings{Linear Classifiers in Product Space Forms}{Tabaghi \etal}
\firstpageno{1}

\begin{document}

\title{Linear Classifiers in Product Space Forms}

\author{\name Puoya Tabaghi$^{\dagger,\star}$ \email{tabaghi2@illinois.edu}
  \AND
   \name Chao Pan$^\dagger$ \email{chaopan2@illinois.edu}
  \AND
	\name Eli Chien$^\dagger$ \email{ichien3@illinois.edu}
  \AND
	\name Jianhao Peng$^\dagger$ \email{jianhao2@illinois.edu}
  \AND
	\name Olgica Milenkovi\'c$^{\dagger}$\email{milenkov@illinois.edu}
	\AND
  	\addr$^\dagger$Coordinated Science Lab, ECE Department\\
	University of Illinois at Urbana-Champaign, USA \\
}
\editor{ --- }
\blfootnote{$^{\star}$Corresponding author}
\maketitle

\begin{abstract}
Embedding methods for product spaces are powerful techniques for low-distortion and low-dimensional representation of complex data structures. 
Here, we address the new problem of linear classification in product space forms --- products of Euclidean, spherical, and hyperbolic spaces. First, we describe novel formulations for linear classifiers on a Riemannian manifold using geodesics and Riemannian metrics which generalize straight lines and inner products in vector spaces. Second, we prove that linear classifiers in $d$-dimensional space forms of any curvature have the same expressive power, i.e., they can shatter exactly $d+1$ points. Third, we formalize linear classifiers in product space forms, describe the first known perceptron and support vector machine classifiers for such spaces and establish rigorous convergence results for perceptrons. Moreover, we prove that the Vapnik-Chervonenkis dimension of linear classifiers in a product space form of dimension $d$ is \emph{at least} $d+1$. We support our theoretical findings with simulations on several datasets, including synthetic data, image data, and single-cell RNA sequencing (scRNA-seq) data. The results show that classification in low-dimensional product space forms for scRNA-seq data offers, on average, a performance improvement of $\sim15\%$ when compared to that in Euclidean spaces of the same dimension.
\end{abstract}

\begin{keywords}
Classification, Embeddings, Perceptrons, Product Space Forms, SVMs, VC dimension
\end{keywords}

\section{Introduction}
Many practical datasets lie in Euclidean spaces and are thus naturally represented and processed using Euclidean geometry. Nevertheless, non-Euclidean spaces have recently been shown to provide significantly improved representations compared to Euclidean spaces for various data structures~\citep{bronstein2017geometric} and measurement modalities --- e.g., metric and non-metric~\citep{tabaghi2020hyperbolic,tabaghi2020geometry}. Examples include \emph{hyperbolic spaces}, suitable for representing hierarchical data associated with trees~\citep{nickel2017poincare,sala2018representation,tifrea2018poincar}, as well as human-interpretable images~\citep{khrulkov2020hyperbolic}; \emph{spherical spaces,} which are well-suited for capturing similarities in text embeddings and cycle-structures in graphs~\citep{meng2019spherical,gu2018learning}. Another important development in non-Euclidean representation learning are methods for finding \say{good} mixed-curvature  representations for various types of complex heterogeneous datasets \citep{gu2018learning}. All three spaces considered --- hyperbolic, Euclidean, and spherical have \emph{constant curvatures} but differ in their curvature sign (negative, zero and positive, respectively). 

Despite these recent advances in nontraditional data spaces, almost all accompanying learning approaches have focused on (heuristic) neural networks in constant curvature spaces~\citep{bachmann2020constant,ganea2018hyperbolicNN,chami2019hyperboli,liu2019hyperbolic,shimizu2020hyperbolic,dai2021hyperbolic}. The fundamental building block of these neural networks, the perceptron, has received little attention outside the domain of learning in Euclidean spaces. Exceptions include two studies of linear classifiers (perceptrons and SVMs) in purely hyperbolic spaces~\citep{cho2019large,weber2020robust}. Although discussed within a limited context in~\citep{Skopek2020Mixed-curvature,bachmann2020constant}, classification in product spaces remains largely unexplored, especially from a theoretical point of view and for specific emerging data formats. 

Our contributions are as follows. We address for the first time the problem of designing linear classifiers for product space forms (and more generally, for geodesically complete Riemannian manifolds) with provable performance guarantees. Product space forms arise in a variety of applications in which graph-structured data captures both cycles and tree-like entities; examples of particular interest include social networks, such as the Facebook network for which product spaces reduce the embedding distortion by more than $30\%$ when compared to Euclidean or hyperbolic spaces alone~\citep{gu2018learning}; multiomics datasets which contain information about both cellular regulatory networks and cycles, as discussed in~\citep{tabaghi2020geometry}. An important property of product spaces is that they are endowed with logarithmic and exponential maps which play a crucial role in combining classifiers for simple space forms and establishing rigorous performance results. The key ideas behind our analysis are to define separation surfaces in constant curvature spaces directly, through the use of geodesics on Riemannian manifolds, to introduce metrics that render distances in different spaces compatible with each other, and to integrate them into one signed distance. We supplement these analyses with proofs that demonstrate that linear classifiers for $d$-dimensional space forms shatter $d+1$ points, regardless of the curvature, linear classifiers for $d$-dimensional product space forms shatter \emph{at least} $d+1$ points. A particular distance-based classifier, the product space form perceptron, is extremely simple to implement, flexible and it relies on a small number of parameters. The new perceptron comes with provable performance guarantees established via the use of indefinite kernels and their Taylor series analysis. The outlined proof technique significantly departs from the ones used in purely hyperbolic spaces~\citep{cho2019large,weber2020robust} and it allows for generalizations to SVMs. From the practical point of view, we demonstrate that the proposed product space perceptron offers excellent performance on both synthetic product-space data and real-world datasets, such as the MNIST \citep{lecun1998gradient} and Omniglot~\citep{lake2015human} datasets, but also more complex structures such as CIFAR-100 \citep{krizhevsky2009learning} and single-cell expression measurements \citep{zheng2017massively,Hodgkin2020Lymphoma,PBMCs2020HealthyDonor}, which are of paramount importance in computational biology.

The paper is organized as follows. In \Cref{sec:terms,sec:linear_classifiers_in_space_forms} we review special representations of linear classifiers in $d$-dimensional constant curvature spaces, i.e., Euclidean, hyperbolic and spherical spaces, and prove that distance-based classifiers have the same expressive power, independent on the curvature of underlying space: Their Vapnik-Chervonenkis (VC) dimension equals $d+1$. \Cref{sec:main} contains our main results, a description of an approach for generalizing linear classifiers in space forms to product spaces, the first example of a product space form perceptron algorithm that performs provably optimal classification in a finite number of steps and the first implementation of a product space form SVM algorithm. \Cref{sec:Numerical_Experiments} and the Appendix contain our simulation results, pertaining to synthetic, MNIST, Omniglot, and CIFAR-100 data sets. Proofs are delegated to the Appendix unless the insight gained from them is useful for understanding the described algorithmic solutions.
\section{Linear Classifiers in Euclidean Space} \label{sec:terms}
Finite-dimensional Euclidean spaces are inner product vector spaces over $\mathbb{R}$, the set of reals. In contrast, hyperbolic and (hyper)spherical spaces do not have the structure of a vector space. Therefore, we first have to clarify what linear classification means in spaces with nonzero curvatures. To introduce our approach, we begin by redefining Euclidean linear classifiers in terms of commonly used concepts in differential geometry such as geodesics and Riemannian metrics~\citep{ratcliffe1994foundations}. This novel formulation allows us to $(1)$ present a unified view of the classification procedure in metric spaces that are not necessarily vector spaces; $(2)$ formalize \emph{distance-based} linear classifiers in space forms, i.e., classifiers that label data points based on their \emph{signed distances} to the separation surface (\Cref{sec:linear_classifiers_in_space_forms}); and $(3)$ use the aforementioned classifiers as canonical building blocks for linear classifiers in product space forms discussed in \Cref{sec:main}.
 
In a linear (more precisely, affine) binary classification problem we are given a set of $N$ points in a Euclidean space and their binary labels, i.e., $(x_n, y_n) \in \R^{d} \times \set{-1, 1}$ for $n \in [N] \bydef \set{1, \ldots N}$. The goal is to learn a linear classifier that produces the most accurate estimate of the labels. We define a linear classifier with weight $w \in \R^{d}$ and bias $b \in \R$ as
\begin{equation}\label{eq:Euclidean_classifier}
	l^{\bE}_{b,w}(x) = \mathrm{sgn} (w^\T x + b), 
\end{equation}
where $\norm{w}_2 = 1$ and $l^{\bE}_{b,w}(x)$ denotes the estimated label of $x \in \R^d$ for the given classifier parameters $b,w$. The expression~\eqref{eq:Euclidean_classifier} may be reformulated in terms of a \say{point-line} pair as follows: Let $p$ be any point on the decision boundary and $w$ a normal vector; see~\Cref{fig:linear_classifier_definition} (left). Then, we have
\begin{equation} \label{eq:euclidean_linear}
	l^{\bE}_{b,w}(x) = \  \mathrm{sgn} ( \langle w,  x-p \rangle),
\end{equation}
where $b = -p^\T w$ and $\langle \cdot,\cdot \rangle$ stands for the standard dot product. To see how this definition may be generalized, note that the linear classifier returns the sign of the inner product of tangent vectors of two straight lines, namely
\begin{equation}\label{eq:euclidean_line}
	\gamma_{p,x}(t) = (1-t)p+t x \  \mbox{ and } \gamma_n(t) = p+t w,
\end{equation}
at their point of intersection $p \in \R^{d}$ (\Cref{fig:linear_classifier_definition}).\footnote{The unique intersection point can be translated without inherently changing the classifier. However, for general Riemannian manifolds, the classifier structure depends on this unique intersection point.} Here, $\gamma_n$ is the normal line and $\gamma_{p,x}$ is the line determined by $p$ and the point $x$ whose label we want to determine. Note that these lines are smooth curves parameterized by $t \in [0,1]$ (or an open interval in $\R$), which we interpret as \emph{time}.
\begin{figure}[t]
  \center
  \includegraphics[width=1 \linewidth]{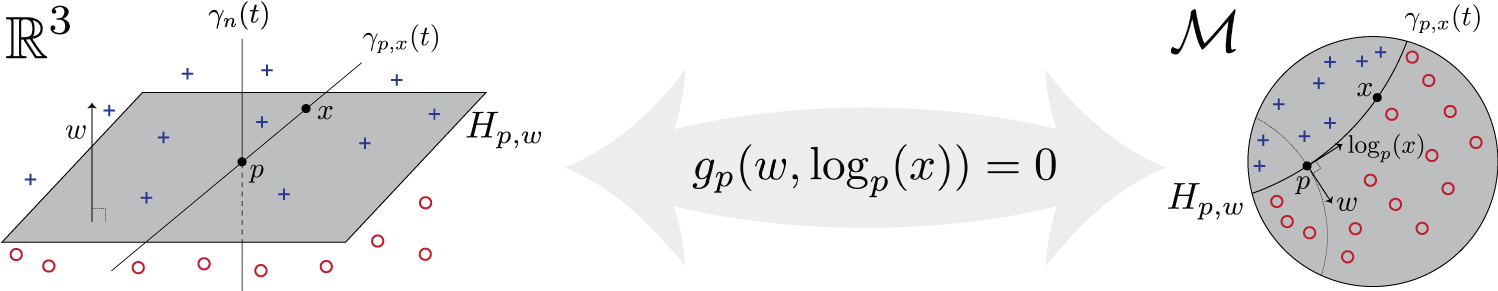}
  \caption{Linear classifiers in a Euclidean space (left) and on a Riemannian manifold (right).} 
  \label{fig:linear_classifier_definition}
\end{figure}

The linear classifier in~\eqref{eq:euclidean_linear} can be reformulated as 
	\[l^{\bE}_{b,w}(x) = \  \mathrm{sgn} ( \langle \frac{d}{dt} \gamma_{p,x}(t)|_{t = 0} , \frac{d}{dt} \gamma_n(t)|_{t = 0} \rangle), \] where the derivative of a line $\gamma(t)$ at time $0$ represents the \emph{tangent vector} (or velocity) at the point $p = \gamma(0)$. This particular formulation leads to the following intuitive definition of linear classifiers in Euclidean spaces, which can be individually generalized for hyperbolic and spherical spaces.
\begin{tcolorbox}[colback=blue!4!white,colframe=blue!4!white]
\begin{definition}\label{def:linear_classifier_euclidean}
A linear classifier in Euclidean space returns the sign of the inner product between tangent vectors of two straight lines described in \eqref{eq:euclidean_line} that meet at a unique point.
\end{definition}
\end{tcolorbox}
Often, we are interested in large-margin Euclidean linear classifiers for which we have $y_n \langle w, x_n-p \rangle \geq \varepsilon$, for all $n \in [N]$, and some margin $\varepsilon>0$. For distance-based classifiers, we want $\varepsilon$ to relate to the distance between the points $x_n$ and the separation surface. For the classifier in~\eqref{eq:euclidean_linear}, the distance between a point $x \in \R^{d}$ and the classification boundary, defined as $H_{p,w} = \set{x \in \R^{d}: \langle w, x-p \rangle = 0}$, can be computed as
\[
	\min_{y \in H_{p,w}}  d(x, y) = \abs{\langle w, x-p \rangle } = \abs{w^\T x + b}.
\]
Note that in the point-line definition~\eqref{eq:euclidean_linear}, the point $p$ can be anywhere on the decision boundary and it has $d$ degrees of freedom whereas $b$ from definition~\eqref{eq:Euclidean_classifier} is a scalar parameter.  Therefore, we prefer definition~\eqref{eq:Euclidean_classifier} as it represents a distance-based Euclidean classifier with only $d+1$ free parameters --- $w$ and $b$ --- and a norm constraint, $\langle w, w\rangle = 1$. In the next section, we show that distance-based classifiers in $d$-dimensional space forms of any curvature can be defined with $d+1$ free parameters and a norm constraint.

\section{Linear Classifiers in Space Forms} \label{sec:linear_classifiers_in_space_forms}
A space form is a complete, simply connected Riemannian manifold of dimension $d \geq 2$ and constant curvature. Space forms are equivalent to spherical, Euclidean, or hyperbolic spaces up to an isomorphism~\citep{lee2006riemannian}. To define linear classifiers in space forms, we first review fundamental concepts from differential geometry such as geodesics, tangent vectors and Riemannian metrics needed to generalize the key terms in~\Cref{def:linear_classifier_euclidean}. For a detailed review; see~\citep{gallier2020differential,ratcliffe1994foundations}. 

Let $\mathcal{M}$ be a Riemannian manifold and let $p \in \mathcal{M}$. The tangent space at the point $p$, denoted by $T_p \mathcal{M}$, is the collection of all tangent vectors at $p$. The Riemannian metric $g_p:T_p \mathcal{M} \times T_p \mathcal{M} \rightarrow \R$ is given by a positive-definite inner product in the tangent space $T_p \mathcal{M}$ which depends smoothly on the base point $p$. A Riemannian metric generalizes the notion of inner products for Riemannian manifolds. The norm of a tangent vector $v \in T_p \mathcal{M}$ is given by $\norm{v} = \sqrt{g_p(v,v)}$. The length of a smooth curve $\gamma: [0,1] \rightarrow \mathcal{M}$ (or path) can be computed as $L[\gamma] = \int_{0}^{1} \norm{\gamma^{\prime}(t)} dt$. A geodesic $\gamma_{p_1,p_2}$ on a manifold is the shortest-length smooth path between the points $p_1, p_2 \in \mathcal{M}$,
\[
\gamma_{p_1,p_2} = \argmin_{\gamma} L[\gamma]: \gamma(0) = p_1, \gamma(1) = p_2;
\]
a geodesic generalizes the notion of a straight line in Euclidean space. Consider a geodesic $\gamma(t)$ starting at $p$ with initial velocity $v \in T_p \mathcal{M}$, e.g., $\gamma(0) = p$ and $\gamma^{\prime}(0) = v$. The exponential map gives the position of this geodesic at $t = 1$, i.e., $\mathrm{exp}_p(v) = \gamma(1)$. Conversely, 
the logarithmic map is its inverse, i.e., $\mathrm{log}_p = \mathrm{exp}_p^{-1} : \mathcal{M} \rightarrow T_p \mathcal{M}$. In other words, for two points $p$ and $x \in \mathcal{M}$, the logarithmic map $\mathrm{log}_p (x)$ gives the initial velocity (tangent vector) with which we can move along the geodesic from $p$ to $x$ in one time step. 

In geodesically complete Riemannian manifolds, the exponential and logarithm maps are well-defined operators. Therefore, analogous to~\Cref{def:linear_classifier_euclidean}, we can define a general notion of linear classifiers as described next.
\begin{tcolorbox}[colback=blue!4!white,colframe=blue!4!white]
\begin{definition} \label{def:linear_classifier_space_form}
Let $(\mathcal{M}, g )$ be a geodesically complete Riemannian manifold, let $p \in \mathcal{M}$ and let $w \in T_p \mathcal{M}$ be a normal vector. A linear classifier $l_{p,w}$ over the manifold $\mathcal{M}$ is defined as
\[
l^{\mathcal{M}}_{p,w}(x) =  \mathrm{sgn} \big(  g_p( w, \mathrm{log}_{p} (x)) \big), \mbox{ where } x \in \mathcal{M}.
\]
\end{definition}
\end{tcolorbox}
\Cref{def:linear_classifier_space_form} is general but it has the following drawbacks: $(1)$ It does not formalize a distance-based classifier since $\abs{ g_p( w, \mathrm{log}_{p} (x))}$ is not necessarily related to the distance of $x$ to the decision boundary; $(2)$ For a fixed $x \in \mathcal{M}$, the decision rule $g_p( w, \mathrm{log}_{p} (x))$ varies with the choice of $p$, which is an arbitrary point on the decision boundary; $(3)$ Often, we can represent the decision boundary with other parameters that have a smaller number of degrees of freedom compared to that of $w$ and $p$ required by \Cref{def:linear_classifier_space_form} (see the Euclidean linear classifiers defined in~\eqref{eq:euclidean_linear} and \eqref{eq:Euclidean_classifier}). We resolve these issues for linear classifiers in space forms as follows.

\subsection{Spherical Spaces}
\begin{table*}[t]\footnotesize
  \caption{Summary of relevant operators in Euclidean, spherical, and hyperbolic ('Loid model) spaces with arbitrary curvatures, $C$.}
  \label{tab:diff_geom_ingredients_with_C}
	\setlength{\tabcolsep}{1.2pt}
  \begin{tabular}{ccccccc}
    \toprule
    $\mathcal{M}$ & $T_p \mathcal{M}$ &$g_p(u,v)$ & $\mathrm{log}_p(x): \theta = \sqrt{|C|} d(x,p)$ & $\mathrm{exp}_p(v)$ & $d(x,p)$ \\
    \midrule
    $\R^d$ & $\R^d$ & $\langle u, v \rangle$ & $x-p$ & $p+v$ & $\norm{x-p}_2$\\
    $\bS^d$ & $p^{\perp}$ & $\langle u, v \rangle$ & $\frac{\theta}{\mathrm{sin}(\theta)}(x -p \cos \theta)$& $\cos (\sqrt{C} \norm{v})p + \sin( \sqrt{C}\norm{v}) \frac{v}{ \sqrt{C}\norm{v}} $  &  $\frac{1}{\sqrt{C}} \mathrm{acos} (C  \langle x,p \rangle)$ \\
    $\bH^d$ & $p^{\perp}$ & $[u,v]$ & $\frac{\theta}{\mathrm{sinh}(\theta)}(x -p \cosh \theta)$& $\cosh ( \sqrt{-C}\norm{v}) p + \sinh( \sqrt{-C}\norm{v}) \frac{v}{ \sqrt{-C}\norm{v}} $ & $\frac{1}{ \sqrt{-C}}\acosh(C [x,p])$\\
  \bottomrule
\end{tabular}
\end{table*}
A $d$-dimensional spherical space with curvature $C_{\bS}>0$ is a collection of points $\mathbb{S}^{d} = \set{x \in \R^{d+1}: \langle x,x \rangle = C_{\bS}^{-1}}$, where $\langle \cdot,\cdot \rangle$ is the standard dot product. Let $p \in \bS^d$ and $w \in T_p \bS^d$ (see~\Cref{tab:diff_geom_ingredients_with_C}). The decision boundary is given by
\begin{align}
H_{p,w} &= \set{ x \in \bS^d: \langle w, \frac{\theta}{\mathrm{sin}(\theta)}  (x -p \cos \theta) \rangle = 0, \mathrm{where } \ \theta = \mathrm{acos}(C_{\bS} \langle x, p \rangle ) } \nonumber \\
&\overset{\mathrm{(a)}}{=}  \set{x \in \bS^d: \langle w, x \rangle =0 } = \bS^d \cap w^{\perp} \label{eq:H_p_w_spherical},
\end{align}
where $\mathrm{(a)}$ is due to the fact that $w \in T_p \bS^d=p^{\perp}$. This formulation uses two parameters $p \in \bS^d$ and $w \in T_p \bS^d$ to define the decision boundary~\eqref{eq:H_p_w_spherical}. We note that one can actually characterize the \emph{same} boundary with fewer parameters. Observe that for any $w \in \R^{d+1}$, we can pick an arbitrary base vector $p \in w^{\perp} \cap \bS^d$ which ensures that $w \in T_p \bS^d$. Therefore, without loss of generality, we can define the decision boundary using only one vector $w \in \R^{d+1}$, which has $d+1$ degrees of freedom. In~\Cref{prop:spherical_classifier}, we identify a specific choice of $p \in w^{\perp} \cap \bS^d$ that allows us to classify each data point based on its signed distance from the classification boundary.
\begin{tcolorbox}[colback=red!4!white,colframe=red!4!white]
\begin{restatable}[]{proposition}{spherical_classifier}  \label{prop:spherical_classifier}
Let $p \in \bS^d$, $w \in T_p \bS^d$, and let $H_{p,w}$ be the decision boundary in~\eqref{eq:H_p_w_spherical}. If $\langle w,w \rangle = C_{\bS}$, then
\[
	\forall x \in \bS^d: \min_{y \in H_{p,w}} d(x,y) = \frac{1}{\sqrt{C_{\bS}}} \mathrm{asin} |\langle w, x\rangle | =  \frac{1}{\sqrt{C_{\bS}}}\abs{g^{\bS}_{p_{\circ}}(w, \mathrm{log}_{p_{\circ}}(x) )},  
\]
where $g^{\bS}$ is the Riemannian metric for a spherical space given in~\Cref{tab:diff_geom_ingredients_with_C}, and $p_\circ =C_{\bS}^{-\frac{1}{2}}\norm{P_w^{\perp}x}^{-1} P_w^{\perp} x  \in H_{p,w}$. Here, the projection operator is defined as $P_{w}^{\perp} x = x - \frac{\langle x, w\rangle }{ \langle w, w \rangle }w$.
\end{restatable}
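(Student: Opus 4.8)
The plan is to reduce the geodesic minimization to an elementary Euclidean projection problem. Since the spherical distance on $\bS^d$ is $d(x,y) = \tfrac{1}{\sqrt{C_\bS}}\acos(C_\bS \langle x,y\rangle)$ (see \Cref{tab:diff_geom_ingredients_with_C}) and $\acos$ is monotonically decreasing, minimizing $d(x,y)$ over $y \in H_{p,w} = \bS^d \cap w^\perp$ is equivalent to maximizing $\langle x,y\rangle$ subject to $\langle y,y\rangle = C_\bS^{-1}$ and $\langle y,w\rangle = 0$. First I would solve this constrained problem directly: for $y \perp w$ we have $\langle x,y\rangle = \langle P_w^\perp x, y\rangle$, so Cauchy--Schwarz gives $\langle x,y\rangle \le \norm{P_w^\perp x}\,\norm{y} = C_\bS^{-1/2}\norm{P_w^\perp x}$, with equality precisely at $y = p_\circ := C_\bS^{-1/2}\norm{P_w^\perp x}^{-1}P_w^\perp x$, and one checks immediately that $\langle w, P_w^\perp x\rangle = 0$, so $p_\circ \in H_{p,w}$. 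The degenerate configuration $P_w^\perp x = 0$, i.e.\ $x \parallel w$ (possible when $\langle w,w\rangle = C_\bS$), is handled separately: then $H_{p,w} = \bS^d \cap x^\perp$, the minimal distance equals $\tfrac{\pi}{2}C_\bS^{-1/2}$, and since $\langle w,x\rangle = \pm 1$ the stated identities still hold (with $p_\circ$ understood by continuity).

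Next I would evaluate the optimum. Using that the orthogonal projection is idempotent and self-adjoint, $\langle x, P_w^\perp x\rangle = \norm{P_w^\perp x}^2$, hence $\langle x, p_\circ\rangle = C_\bS^{-1/2}\norm{P_w^\perp x}$ and therefore $\min_{y \in H_{p,w}} d(x,y) = \tfrac{1}{\sqrt{C_\bS}}\acos\!\big(\sqrt{C_\bS}\,\norm{P_w^\perp x}\big)$. Then I would simplify the argument of $\acos$: since $\norm{w}^2 = C_\bS$ and $\norm{x}^2 = C_\bS^{-1}$, a one-line computation gives $\norm{P_w^\perp x}^2 = C_\bS^{-1}\big(1 - \langle w,x\rangle^2\big)$, so $\sqrt{C_\bS}\,\norm{P_w^\perp x} = \sqrt{1 - \langle w,x\rangle^2}$; combining this with the elementary identity $\acos\sqrt{1-s^2} = \mathrm{asin}\,\abs{s}$ (valid for $s \in [-1,1]$, since both sides lie in $[0,\tfrac\pi2]$ with equal cosines) yields the first claimed equality $\min_{y \in H_{p,w}} d(x,y) = \tfrac{1}{\sqrt{C_\bS}}\,\mathrm{asin}\,\abs{\langle w,x\rangle}$.

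For the final equality I would unfold $g^{\bS}_{p_\circ}(w, \log_{p_\circ}(x))$ using the spherical metric and logarithmic map from \Cref{tab:diff_geom_ingredients_with_C}: with $\theta = \sqrt{C_\bS}\,d(x,p_\circ)$ we get $g^{\bS}_{p_\circ}(w,\log_{p_\circ}(x)) = \tfrac{\theta}{\sin\theta}\big(\langle w,x\rangle - \cos\theta\,\langle w,p_\circ\rangle\big) = \tfrac{\theta}{\sin\theta}\langle w,x\rangle$, since $p_\circ \in w^\perp$. From the distance formula, $\cos\theta = C_\bS\langle x,p_\circ\rangle = \sqrt{1 - \langle w,x\rangle^2} \ge 0$, which forces $\theta \in [0,\tfrac\pi2]$, hence $\sin\theta = \abs{\langle w,x\rangle}$ and $\theta = \mathrm{asin}\,\abs{\langle w,x\rangle}$; substituting gives $\tfrac{1}{\sqrt{C_\bS}}\abs{g^{\bS}_{p_\circ}(w,\log_{p_\circ}(x))} = \tfrac{1}{\sqrt{C_\bS}}\,\mathrm{asin}\,\abs{\langle w,x\rangle}$, matching the middle expression (and when $\langle w,x\rangle = 0$ the $\tfrac{\theta}{\sin\theta}$ factor is read off by continuity, both sides being $0$).

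I expect the main obstacle to be the bookkeeping in the first step --- verifying that the Cauchy--Schwarz maximizer $p_\circ$ genuinely lies on the decision boundary, is the \emph{unique} optimum, and cleanly disposing of the degenerate case $x \parallel w$ where the stated formula for $p_\circ$ is undefined. Once the curvature normalization $\langle w,w\rangle = C_\bS$ is in hand, the remaining steps are just short trigonometric identities.
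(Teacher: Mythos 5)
Your proposal is correct and follows the same overall architecture as the paper's proof: reduce the geodesic minimization over $H_{p,w}=\bS^d\cap w^{\perp}$ to maximizing $\langle x,y\rangle$, identify the maximizer $p_{\circ}$, and finish with the identity $\acos\sqrt{1-s^2}=\mathrm{asin}\abs{s}$ and the $p_{\circ}\perp w$ cancellation in the log map. The one genuine difference is in how the maximizer is found: the paper posits the ansatz $p_{\circ}=\alpha x+\beta w$ from the first-order Lagrangian condition and solves for $\alpha,\beta$ from the two constraints, which strictly speaking only yields a necessary condition for optimality, whereas you write $\langle x,y\rangle=\langle P_w^{\perp}x,y\rangle$ for $y\perp w$ and apply Cauchy--Schwarz, which directly certifies that $p_{\circ}$ is the \emph{global} and unique maximizer. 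Your route is marginally more rigorous on that point, and you also explicitly dispose of the degenerate case $P_w^{\perp}x=0$ (i.e., $x\parallel w$), which the paper silently ignores and for which the stated formula for $p_{\circ}$ is undefined; your observation that the first equality still holds there (both sides equal $\tfrac{\pi}{2}C_{\bS}^{-1/2}$) is a worthwhile addition. The trigonometric endgame is the same in substance; the paper routes it through $\psi=\acos(\langle x,w\rangle)$ while you work with $\theta=\sqrt{C_{\bS}}\,d(x,p_{\circ})$ directly, but these are equivalent computations.
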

\end{tcolorbox}
\begin{proof}
Let $p \in \mathbb{S}^d$ and let $w \in T_p \mathbb{S}^d = p^{\perp}$ be such that $\langle w, w\rangle  = C_{\bS}$. The separation surface $H_{p,w}$ is defined as 
\begin{align*}
H_{p,w} = \set{x \in \mathbb{S}^d: g_p( \mathrm{log}_{p}(x),w) = 0} = \set{x \in \mathbb{S}^d: \langle w, x\rangle = 0}.
\end{align*}
The distance between $x \in \bS^d$ and $H_{p,w}$ is given by $d(x, H_{p,w}) = \min_{y \in H_{p,w}} \frac{1}{\sqrt{C_{\bS}}} \mathrm{acos} (C_{\bS} \langle y, x \rangle )$. Hence, the projection of a point $x$ onto $H_{p,w}$ can be computed by solving the following constrained optimization problem:
\[
    \max_{p \in \R^{d+1}} \langle x, p \rangle \ \mbox{ such that } \ \langle w, p \rangle = 0 \ \mbox{,} \ \langle p, p \rangle = C_{\bS}^{-1}, \ \mbox{and} \ \langle w,  w \rangle = C_{\bS}.
\]
From the first-order optimality condition for the Lagrangian, the projected point takes the form $p_{\circ} = \alpha x + \beta w,$ where $\alpha, \beta \in \R$. Now, we impose the following subspace constraint
\begin{align*}
    \langle w,  p_{\circ}\rangle = \alpha \langle w, x \rangle + \beta \langle w, w \rangle = \alpha  \langle w, x \rangle + \beta C_{\bS} = 0,
\end{align*}
which gives $\beta = -\alpha C_{\bS}^{-1} \langle x, w \rangle$. Subsequently, we have $p_{\circ} = \alpha( x - C_{\bS}^{-1} \langle x, w\rangle w)$. On the other hand, from the norm constraint, we have
\begin{align*}
    \langle p_{\circ},p_{\circ} \rangle &= \alpha^2 (C_{\bS}^{-1} + C_{\bS}^{-1} \langle x ,w\rangle^2 - 2 C_{\bS}^{-1} \langle x, w\rangle^2) \\
    &= \alpha^2 C_{\bS}^{-1}(1 - \langle x, w\rangle^2 ) = C_{\bS}^{-1},
\end{align*}
which gives $\alpha = (1-\langle x, w\rangle^2)^{-\frac{1}{2}}$. Then,
\begin{equation} \label{eq:Px_spherical}
	p_{\circ} = \sqrt{\frac{1}{ 1 -\langle x, w\rangle^2 }}( x - \frac{ \langle x, w \rangle }{ \langle w, w \rangle }  w) = \sqrt{\frac{1}{ 1 -\langle x, w \rangle^2 }} P_{w}^{\perp} x =  \frac{C_{\bS}^{-\frac{1}{2}}}{ \norm{P_{w}^{\perp} x }_2}  P_{w}^{\perp} x, 
\end{equation}
where $P_{w}^{\perp}x = x - \frac{\langle x, w \rangle }{ \langle w, w \rangle }w$. Next, let us define $\psi = \acos( \langle x, w \rangle )$, where $\psi \in [0,\pi]$. Then,
\begin{align*}
	d(x,p_{\circ}) &=  \frac{1}{\sqrt{C_{\bS}}}  \acos \big( C_{\bS} \langle x, p_{\circ} \rangle \big) =   \frac{1}{\sqrt{C_{\bS}}}  \acos \big( \sqrt{\frac{1}{ 1 -\cos^2 \psi }}( 1 - \cos^2  \psi ) \big) =  \frac{1}{\sqrt{C_{\bS}}}  \acos \big( \abs{\sin \psi} \big)\\
	&\stackrel{\mathrm{(a)}}{=}  \frac{1}{\sqrt{C_{\bS}}}  \mathrm{asin} \big(\abs{ \cos \psi} \big) =  \frac{1}{\sqrt{C_{\bS}}}  \mathrm{asin} ( |\langle x, w \rangle |),
\end{align*}
where $\mathrm{(a)}$ follows due to the fact that $\acos (\abs{ \mathrm{sin}\psi} ) = \mathrm{asin}( \abs{ \cos \psi})$, which can be seen from
\begin{align*}
	 \mathrm{cos} \big(\mathrm{asin} (\abs{ \cos{\psi}}) \big) &=  \mathrm{cos}\Big( \mathrm{asin}\big( \abs{\sin ({\frac{\pi}{2} - \psi}) } \big) \Big)  = \mathrm{cos} 	\Big(\abs{ \mathrm{asin} (\sin ({\frac{\pi}{2} - \psi}) \big)} \Big)  = \mathrm{cos} \big( \abs{\frac{\pi}{2} - \psi} \big)   \\
	 &=\abs{ \sin \psi},
\end{align*}
for $\psi \in [0, \pi]$. Now, let $x \in \bS^d$ and let $p_{\circ}$ be as given in~\eqref{eq:Px_spherical}. We readily have $p_{\circ}\perp w$. Therefore, 
\begin{align*}
g^{\bS}_{p_{\circ}} \big( w, \mathrm{log}_{p_{\circ}}(x)\big)  &= \frac{\acos ( C_{\bS} \langle p_{\circ}, x \rangle ) }{\sin (\acos(C_{\bS} \langle p_{\circ}, x \rangle )) }  \langle x, w\rangle \stackrel{\mathrm{(a)}}{=}  \frac{\acos (C_{\bS} \langle p_{\circ}, x\rangle ) }{ |\langle x, w \rangle | } \langle x, w \rangle  \\ 
 &=\mathrm{asin} (|\langle x, w\rangle |) \mathrm{sgn}(\langle x, w\rangle ) 
=  \mathrm{asin} ( \langle x, w \rangle ) =  \mathrm{sgn}(\langle x, w\rangle ) \sqrt{C_{\bS}} d(x,p_{\circ}),
\end{align*} 
where $\mathrm{(a)}$ follows from
\begin{align*}
	\sin \big( \acos ( C_{\bS} \langle p_{\circ}, x\rangle ) \big)	&=\sin \big( \acos (  \sqrt{1 - \langle x, w\rangle^2} ) \big) = \sin \big( \mathrm{asin} ( |\langle x, w\rangle | ) \big) = |\langle x, w \rangle |.
\end{align*}
This completes the proof.
\end{proof}
It is important to point out that the classification boundary is invariant with respect to the choice of the base vector, i.e., $H_{p,w} = H_{p_{\circ},w}$. From \Cref{prop:spherical_classifier}, if we have
\[
	\forall n \in [N]: y_n \mathrm{asin}( \langle w, x_n \rangle ) \geq \varepsilon,
\]
then all data points are correctly classified and have a minimum distance of at least $(C_{\bS})^{-\frac{1}{2}}\varepsilon$ to the classification boundary. In summary, we can define distance-based linear classifiers in a spherical space as follows.
\begin{tcolorbox}[colback=blue!4!white,colframe=blue!4!white]
\begin{definition}\label{def:Spherical_classifier}
Let $w \in \R^{d+1}$ with $\langle w, w\rangle = C_{\bS}$. We define a spherical linear classifier as follows
\begin{align*}
	l^{\bS}_{w}(x) = \mathrm{sgn} \big(\mathrm{asin}(\langle w, x\rangle ) \big).
\end{align*}
\end{definition}
\end{tcolorbox}
\subsection{Hyperbolic Spaces}
The 'Loid model of a $d$-dimensional hyperbolic space~\citep{cannon1997hyperbolic}, with curvature $C_{\bH}<0$, is a Riemannian manifold $\mathcal{L}^d= (\bH^{d} , g^{\bH})$ for which 
\[
\bH^{d} = \set{x \in \R^{d+1}: [x,x] = C_{\bH}^{-1} \ \mbox{and} \ x_1 > 0},
\]
and $g_p^{\bH}(u,v)$ corresponds to the Lorentzian inner product of $u$ and $v \in T_p \bH^d$, defined as
\begin{equation} \label{eq:lorentz_inner_product}
	[u, v] = u^\T H v,\ \ H = \begin{pmatrix}
	-1 & 0^\T\\
	0 & I_d
	\end{pmatrix},
\end{equation}
where $I_d$ is the $d \times d$ identity matrix; see~\Cref{tab:diff_geom_ingredients_with_C}. Let $p \in \bH^d$ and $w \in T_p \bH^d$. The classification boundary of interest is given by
\begin{align}
H_{p,w} &= \set{x \in \bH^d: [w,\frac{\theta}{\mathrm{sinh}(\theta)}(x -p \cos \theta)] = 0,  \mbox{where}\ \theta = \mathrm{acosh}(C_{\bH}[x, p])} \nonumber \\ 
&=\set{x \in \bH^d: [w,x] =0 } = \bH^d \cap w^{\perp}. \label{eq:H_p_w_hyperbolic}
\end{align}
Note that equation~\eqref{eq:H_p_w_hyperbolic} follows from the definition of null tangent (sub)spaces below.
\begin{tcolorbox}[colback=blue!4!white,colframe=blue!4!white]
\begin{definition} \label{def:null_tangent_space}
Let $(\mathcal{M},g)$ be a Riemannian manifold. The null tangent space of $V \subseteq T_p \mathcal{M}$ equals $V^{\perp} = \set{u \in T_p \mathcal{M}: g_{p}(u,v) = 0, \ \forall v \in V}$.
\end{definition}
\end{tcolorbox}

Similarly as for the case of spherical spaces, we can simplify` the formulation as follows. If $w$ is a time-like vector --- a vector that satisfies $w \in \set{x: [x,x] > 0}$~\citep{ratcliffe1994foundations} --- and $p \in \bH^d \cap w^{\perp}$, then we have $w \in T_{p}\bH^d$. In \Cref{prop:hyperbolic_classifier}, we describe a special choice for $p \in \bH^d \cap w^{\perp}$ that allows us to formulate a distance-based hyperbolic linear classifier. 
\begin{tcolorbox}[colback=red!4!white,colframe=red!4!white]
\begin{restatable}[]{proposition}{hyperbolic_classifier}  \label{prop:hyperbolic_classifier}
Let $p \in \bH^d$, $w \in T_p \bH^d$, and let $H_{p,w}$ be the decision boundary in \eqref{eq:H_p_w_hyperbolic}. If $[w,w] = -C_{\bH}$, then
\[
	\min_{y \in H_{p,w}} d(x, y ) =  \frac{1}{\sqrt{-C_{\bH}}} \mathrm{asinh} \abs{[w, x]}  = \frac{1}{\sqrt{-C_{\bH}}}\abs{g^{\bH}_{p_\circ}(w, \mathrm{log}_{{p_\circ}}(x) )},
\]
where $g^{\bH}$ is the Riemannian metric for the hyperbolic space given in \Cref{tab:diff_geom_ingredients_with_C} and $p_{\circ} = (-C_{\bH})^{-\frac{1}{2}}\norm{P_w^{\perp} x}^{-1} P_w^{\perp} x  \in H_{p,w}$. Note that $P_{w}^{\perp} x$ is the orthogonal projection of $x$ onto $w^{\perp}$, i.e., $P_{w}^{\perp}x = x - \frac{[x,w]}{[w,w]}w$. 
\end{restatable}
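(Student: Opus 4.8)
The plan is to run the argument of \Cref{prop:spherical_classifier} essentially verbatim, swapping the round trigonometric functions for their hyperbolic counterparts, the Euclidean dot product for the Lorentzian form $[\cdot,\cdot]$, and the identity $\acos\abs{\sin\psi}=\mathrm{asin}\abs{\cos\psi}$ for its hyperbolic analogue $\acosh\sqrt{1+t^2}=\mathrm{asinh}\abs{t}$. First, since $w\in T_p\bH^d=p^\perp$, the reduction in \eqref{eq:H_p_w_hyperbolic} applies and the boundary is simply $H_{p,w}=\set{x\in\bH^d:[w,x]=0}$; by the distance formula in \Cref{tab:diff_geom_ingredients_with_C},
\[
  d(x,H_{p,w})=\min_{y\in H_{p,w}}\tfrac{1}{\sqrt{-C_\bH}}\,\acosh\big(C_\bH[y,x]\big).
\]
Because $\acosh$ is increasing and $C_\bH<0$, the nearest point is the maximizer of $[y,x]$ over $y\in\bH^d$ with $[w,y]=0$; writing the Lagrangian of
\[
  \max_{p\in\R^{d+1}}[x,p]\quad\mbox{s.t.}\quad [w,p]=0,\ [p,p]=C_\bH^{-1},\ [w,w]=-C_\bH,\ p_1>0,
\]
the first-order condition gives an optimizer of the form $p_\circ=\alpha x+\beta w$.

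Next I would solve for $\alpha,\beta$ exactly as in the spherical case. Imposing $[w,p_\circ]=0$ together with $[w,w]=-C_\bH$ gives $\beta=C_\bH^{-1}[x,w]\alpha$, so $p_\circ=\alpha\big(x-\tfrac{[x,w]}{[w,w]}w\big)=\alpha\,P_w^\perp x$; using $[x,x]=C_\bH^{-1}$, a short computation yields $[P_w^\perp x,P_w^\perp x]=C_\bH^{-1}(1+[x,w]^2)$, so the norm constraint forces $\alpha=(1+[x,w]^2)^{-1/2}$ and hence $p_\circ=(-C_\bH)^{-1/2}\norm{P_w^\perp x}^{-1}P_w^\perp x$, matching the statement. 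Substituting back, $C_\bH[x,p_\circ]=\alpha(1+[x,w]^2)=\sqrt{1+[x,w]^2}$, and therefore
\[
  d(x,p_\circ)=\tfrac{1}{\sqrt{-C_\bH}}\,\acosh\!\big(\sqrt{1+[x,w]^2}\big)=\tfrac{1}{\sqrt{-C_\bH}}\,\mathrm{asinh}\,\abs{[w,x]}.
\]
Finally, to recover the Riemannian-metric form, set $\theta=\acosh(C_\bH[x,p_\circ])=\mathrm{asinh}\abs{[w,x]}$, so that $\cosh\theta=\sqrt{1+[w,x]^2}$ and $\sinh\theta=\abs{[w,x]}$; then, using $[w,p_\circ]=0$,
\[
  g^{\bH}_{p_\circ}\big(w,\log_{p_\circ}(x)\big)=\Big[w,\tfrac{\theta}{\sinh\theta}\big(x-p_\circ\cosh\theta\big)\Big]=\tfrac{\theta}{\sinh\theta}[w,x]=\mathrm{asinh}\abs{[w,x]}\cdot\mathrm{sgn}[w,x],
\]
whose absolute value equals $\sqrt{-C_\bH}\,d(x,p_\circ)$, completing the proof.

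The bulk of this is routine hyperbolic-trig bookkeeping; the one genuinely new ingredient compared to the spherical proof, and the step I expect to be the main obstacle, is the Lorentzian causal-character accounting. Concretely I must verify: (i) the constrained maximum of $[x,p]$ is attained and the Lagrange stationary point is actually the global minimizer of the distance --- on the compact sphere this was automatic, but here it needs either the standard convexity of $d(x,\cdot)$ along geodesics of $\bH^d$ or a direct argument exploiting that $[x,p]\le C_\bH^{-1}<0$ on the forward sheet with equality iff $p=x$; (ii) $P_w^\perp x$ is admissible as an (unnormalized) point of $\bH^d$, i.e.\ $[P_w^\perp x,P_w^\perp x]<0$, which the identity $[P_w^\perp x,P_w^\perp x]=C_\bH^{-1}(1+[x,w]^2)$ confirms; and (iii) the scaling $\alpha$ must be taken positive so that $p_\circ$ lies on the forward sheet $p_1>0$ --- this is forced because $C_\bH[x,p_\circ]=\sqrt{1+[x,w]^2}>0$ together with $C_\bH<0$ requires $[x,p_\circ]<0$, which is exactly the condition that $x$ and $p_\circ$ lie on the same sheet. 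Once these points are settled, the algebra parallels the spherical derivation line for line.
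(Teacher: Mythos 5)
Your proposal is correct and follows essentially the same route as the paper's proof: the same Lagrangian setup maximizing $[x,p]$ over $H_{p,w}$, the same ansatz $p_\circ=\alpha x+\beta w$ with the same resolution of $\alpha,\beta$, the same identity $\acosh\sqrt{1+t^2}=\mathrm{asinh}\abs{t}$, and the same final computation of $g^{\bH}_{p_\circ}(w,\log_{p_\circ}(x))$ using $[w,p_\circ]=0$. Your closing checks (attainment of the maximum, timelikeness of $P_w^\perp x$, and the sign of $\alpha$ forcing $p_\circ$ onto the forward sheet) are points the paper leaves implicit, and you resolve them correctly.
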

\end{tcolorbox}
\begin{proof}
Let $\bH^d$ be  the 'Loid model with curvature $C_{\bH} < 0$. The projection of $x \in \bH^d$ onto $H_{p,w}$ is a point $p_{\circ} \in H_{p,w}$ that has the smallest distance to $x$. In other words, $p_{\circ}$ is the solution to the following constrained optimization problem
\[
    \max_{p \in \R^{d+1}} \ [y,x] \ \mbox{ such that } [p,p] = C_{\bH}^{-1},\ \ [w,p] = 0 \mbox{, and} \ [w,w] = -C_{\bH}.
\]
The solution to this problem takes the form $p_{\circ} = \alpha x + \beta w$, where $\alpha, \beta \in \R$. We can enforce the subspace condition as follows:
\begin{align*}
    [p_{\circ},w] &= \alpha [x,w] + \beta [w,w] = \alpha [x,w] + \beta(-C_{\bH}) = 0,
\end{align*}
which gives $\beta = \alpha C_{\bH}^{-1} [x,w]$, or $p_{\circ}= \alpha (x +  C_{\bH}^{-1} [x,w] w)$. 
On the other hand, we also have
\begin{align*}
    [p_{\circ}, p_{\circ}] = \alpha^2 (C_{\bH}^{-1} -C_{\bH}^{-1}  [x,w]^2 +2 C_{\bH}^{-1}[x,w]^2) =\alpha^2 C_{\bH}^{-1}(1+ [x,w]^2) = C_{\bH}^{-1}.
\end{align*}
Then,
\begin{equation}\label{eq:Px_hyperbolic}
	p_{\circ} = \sqrt{\frac{1}{1 +[x,w]^2}} (x +C_{\bH}^{-1} [x,w] w) = \frac{(-C_{\bH})^{-\frac{1}{2}}}{\norm{P_{w}^{\perp}x}} P_{w}^{\perp}x,
\end{equation}
where $P_{w}^{\perp}x = x - \frac{[x,w]}{[w,w]}w$ and $\norm{P_w^{\perp}x} = \sqrt{-[P_w^{\perp}x,P_w^{\perp}x]}$. We also have
\begin{align*}
    d(x,p_{\circ}) &= \frac{1}{\sqrt{-C_{\bH}}} \mathrm{acosh} (C_{\bH} [p_{\circ},x]) =  \frac{1}{\sqrt{-C_{\bH}}} \mathrm{acosh} (C_{\bH}\sqrt{\frac{1}{1 + [x,w]^2}} C_{\bH}^{-1}(1 + [x,w]^2) ) \\
    &=  \frac{1}{\sqrt{-C_{\bH}}}\mathrm{acosh} (\sqrt{ 1 + [x,w]^2} ).
\end{align*}
This expression can be further simplified as\footnote{Since $\mathrm{cosh}(x)^2 - \mathrm{sinh}(x)^2 = 1$.} $d(x,p_{\circ})  = \frac{1}{\sqrt{-C_{\bH}}}\, \mathrm{asinh} \abs{[x,w]}$.

Now, let $x \in \bH^d$ and let $p_{\circ}$ be given in \eqref{eq:Px_hyperbolic}. We can easily see that $[p_{\circ},w] = 0$. Therefore, we have
\begin{align*}
	g^{\bH}_{p_{\circ}}(w, \mathrm{log}_{p_{\circ}}(x)) &= \frac{\mathrm{acosh} (C_{\bH}[p_{\circ}, x]) }{\mathrm{sinh} (\mathrm{acosh}(C_{\bH}[p_{\circ},x])) }  [x, w]  = \frac{\mathrm{asinh}( \abs{[x,w]})}{\abs{[x,w]}} [x,w] \\
	&= \mathrm{asinh} ( \abs{[x,w]} ) \mathrm{sgn} ([x,w]) = \mathrm{asinh} ([x,w]) = \mathrm{sgn} ([x,w]) \sqrt{-C_{\bH}}d(x, p_{\circ}).
\end{align*}
This completes the proof.
\end{proof}
As a result, we have the following definition of distance-based linear classifiers in a hyperbolic spaces.
\begin{tcolorbox}[colback=blue!4!white,colframe=blue!4!white]
\begin{definition}\label{def:hyperbolic_classifier}
Let $w \in \R^{d+1}$ with $[w,w] = -C_{\bH}$. We define a hyperbolic linear classifier as follows
\begin{align*}
	l^{\bH}_{w}(x) = \mathrm{sgn} \big( \mathrm{asinh}( [w, x] ) \big).
\end{align*}
\end{definition}
\end{tcolorbox}
\Cref{fig:linear_classifier_space_form} illustrates linear classifiers in two-dimensional hyperbolic, Euclidean, spherical spaces. The classification criteria for linear classifiers in space forms can be compactly written as follows: 
\[
    l(x) = \left\{\begin{array}{lr}
        \mathrm{sgn} \big( \mathrm{asin}_{C} \circ g_{C}(w, x) \big) & \text{for} \ \ C \neq 0\\
        \mathrm{sgn} \big(  w^{\T}x + b\big) & \text{for} \ \ C = 0
        \end{array} 
        \right. 
\]
where $x$ belongs to a $d$-dimensional space form with curvature $C$ and
\[
	\mathrm{asin}_{C}(\cdot)=\begin{cases}
		\mathrm{asin}(\cdot)  &\text{for} \ \ C > 0\\
		\mathrm{asinh}(\cdot) &\text{for} \ \ C > 0
	\end{cases}
\]
$g_C(\cdot, \cdot)$ computes the standard dot product of its inputs if $C>0$ and the Lorentzian product if $C<0$. The vector $w \in \R^{d+1}$ is such that $g_{C}(w,w) = |C|$.
\begin{figure}[t]
  \center
  \includegraphics[width=1 \linewidth]{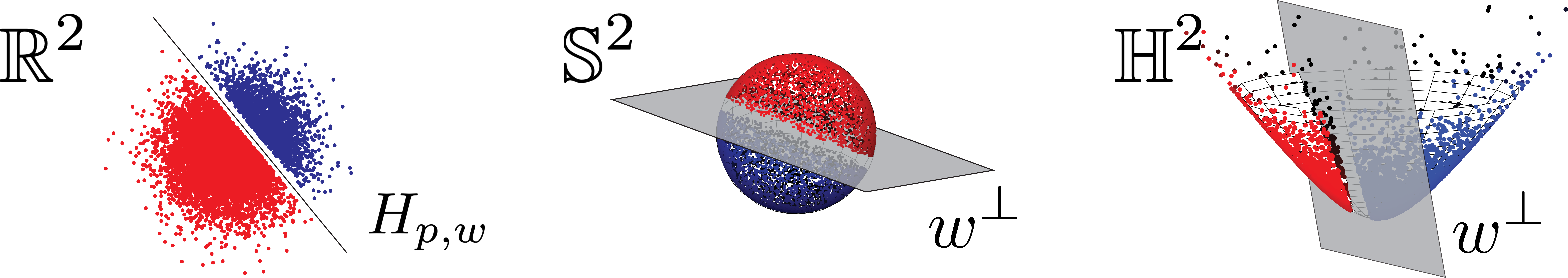}
  \caption{Linear classifiers in Euclidean, spherical, and hyperbolic spaces.}
  \label{fig:linear_classifier_space_form}
\end{figure}
\subsection{VC Dimension of Linear Classifiers in Space Forms}
From the previous discussion, we can deduce that \emph{linear classifiers in $d$-dimensional space forms can be characterized using $d+1$ free parameters and a norm constraint.} This supports the following result pertaining to the VC dimension~\citep{vapnik2013nature} of linear classifiers in space forms.
\begin{tcolorbox}[colback=red!4!white,colframe=red!4!white]
\begin{restatable}[]{theorem}{vp_dimension}  \label{thm:vc_dimension}
The VC dimension of a linear classifier in a space form $S$ is $\mathrm{dim}(S)+1$.
\end{restatable}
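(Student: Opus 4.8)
\emph{Proof proposal.} By the classification of space forms, every $d$-dimensional space form $S$ is isometric to $\R^d$, $\bS^d$, or $\bH^d$ for a suitable curvature, and since the classifiers of \Cref{def:linear_classifier_space_form} are defined intrinsically --- through geodesics, the $\log$ map, and the Riemannian metric --- an isometry $\phi$ intertwines all of these, so $l^{\mathcal N}_{\phi(p),\,d\phi_p w}\circ\phi = l^{\mathcal M}_{p,w}$ and the VC dimension is an isometry invariant; it therefore suffices to treat the three models. For each, the plan is to prove the matching bounds (VC dimension $\le d+1$ and $\ge d+1$) for the explicit classifier families in \eqref{eq:Euclidean_classifier}, \Cref{def:Spherical_classifier}, and \Cref{def:hyperbolic_classifier}.

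\emph{Upper bound.} In each model I would realize the classifier family inside $\{x\mapsto\mathrm{sgn}(f(x)):f\in F\}$ for a vector space $F$ of real functions on the manifold with $\dim F\le d+1$. For $\R^d$, $F=\{x\mapsto w^\T x+b\}$ has dimension $d+1$. For $\bS^d$, since $\mathrm{asin}$ is odd and strictly increasing with $\mathrm{asin}(0)=0$, the classifier equals $\mathrm{sgn}\langle w,x\rangle$, so $F$ is the set of restrictions to $\bS^d$ of the functionals $x\mapsto\langle w,x\rangle$; as $\bS^d$ spans $\R^{d+1}$, $\dim F=d+1$. For $\bH^d$, $\mathrm{asinh}$ is likewise odd and strictly increasing, and $[w,x]=(Hw)^\T x$ with $H$ invertible, so $F$ is again the restriction to $\bH^d$ of all linear functionals, and since $\bH^d$ spans $\R^{d+1}$, $\dim F=d+1$. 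I would then invoke the classical vector-space bound on VC dimension: on any $d+2$ candidate points the evaluation map $f\mapsto(f(x_1),\dots,f(x_{d+2}))$ has image of dimension at most $d+1$, so some nonzero $c$ (replacing $c$ by $-c$ if needed so that $c$ has a positive entry) annihilates it; the dichotomy assigning label $+1$ where $c_i>0$ and $-1$ elsewhere cannot be realized, since realizing it would force $\sum_i c_i f(x_i)>0$. Hence VC dimension $\le d+1$ in all three cases.

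\emph{Lower bound.} I would exhibit $d+1$ shatterable points. In $\R^d$, the origin together with $e_1,\dots,e_d$ is affinely independent, so every dichotomy is realized by an affine halfspace. In $\bS^d$, take $p_i=C_{\bS}^{-1/2}e_i$ for $i\in[d+1]$: these lie on $\bS^d$ and are linearly independent, and for any sign pattern $s$ the vector $w=\sqrt{C_{\bS}/(d+1)}\sum_i s_i e_i$ satisfies $\langle w,w\rangle=C_{\bS}$ and $\langle w,p_i\rangle=s_i/\sqrt{d+1}$, which has sign $s_i$ and magnitude at most $1$, so $l^{\bS}_w$ realizes $s$. In $\bH^d$ with $R=|C_{\bH}|^{-1/2}$, take the vertex $p_0=Re_1$ and $p_j=\sqrt{R^2+1}\,e_1+e_{j+1}$ for $j\in[d]$: these lie on $\bH^d$ and are linearly independent in $\R^{d+1}$, and for any $(s_0,\dots,s_d)$ the vector $w=-s_0\delta\,e_1+M\sum_{j=1}^d s_j e_{j+1}$ gives $[w,p_0]=s_0\delta R$ and $[w,p_j]=s_0\delta\sqrt{R^2+1}+Ms_j$, whose signs are $s_0$ and, once $M$ is taken large relative to $\delta$, $s_j$; moreover $[w,w]=dM^2-\delta^2>0$, so after rescaling $w$ to $[w,w]=-C_{\bH}$ (which preserves all signs) it is an admissible hyperbolic normal and $l^{\bH}_w$ realizes the dichotomy. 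Combining the two bounds gives VC dimension $=d+1$ for each model, hence for $S$.

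I expect the only genuine friction to be the lower bound in the curved cases: checking that the constraints on the normal vector --- the norm constraint $\langle w,w\rangle=C_{\bS}$ in \Cref{def:Spherical_classifier} and the time-like requirement $[w,w]=-C_{\bH}>0$ in \Cref{def:hyperbolic_classifier} --- do not shrink the set of realizable dichotomies. For the sphere this is immediate from positive rescaling together with the Cauchy--Schwarz bound $|\langle w,x\rangle|\le 1$ that keeps $\mathrm{asin}$ well defined; for the hyperboloid it is exactly the point of the construction above, where the sample points are crowded near the ideal boundary and the chosen normal has a dominant space-like part so that $[w,w]>0$ can be arranged alongside any prescribed signs. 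The upper-bound step is essentially a citation to the vector-space VC bound, and the isometry-invariance reduction is routine once \Cref{def:linear_classifier_space_form} is unwound.
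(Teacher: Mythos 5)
Your proposal is correct and follows essentially the same route as the paper: reduce to the three models, get the upper bound by viewing spherical and hyperbolic classifiers as homogeneous linear classifiers on a subset of $\R^{d+1}$ (the monotone odd maps $\mathrm{asin}$, $\mathrm{asinh}$ being sign-preserving), and get the lower bound by exhibiting $d+1$ explicitly shatterable points with a normal vector whose norm/time-like constraint is verified directly. The only differences are cosmetic: your spherical witness (scaled standard basis) and hyperbolic witness (small time-like plus dominant space-like component) replace the paper's normalization of a Euclidean shattered set and its linear-system construction with $t_n = k y_n$, $k > \sqrt{2}+1$, but both serve the identical purpose of arranging prescribed signs while keeping $[w,w]>0$.
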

\end{tcolorbox}
\begin{proof}
The VC dimension of a linear classifier is equal to the maximum size of a point set that a set of linear classifiers can \emph{shatter}, i.e., completely partition into classes independent on how the points in the set are labeled. The VC dimension of affine classifiers in $\R^d$ is $d+1$; see the treatment of VC dimensions of Dudley classes described in~\citep{dudley1978central}. Note again the distinction between affine and linear classifiers in Euclidean spaces. Now, we establish the VC dimension for spherical and hyperbolic spaces.

Let $x_{1}, \ldots, x_{N}$ be a set of point in spherical space $\bS^d$, which can be shattered by linear classifiers. In other words, we have
\begin{align*}
	\forall n \in [N]: \ y_n =\mathrm{sgn} \big( \mathrm{asin}( \langle w, x_{n} \rangle \big), 
\end{align*}
for any set of binary labels $(y_n)_{n \in [N]}$. The linear classifiers in spherical space are a subset of linear classifiers in a $(d+1)-$dimensional Euclidean space. Hence, their VC dimension must be $\leq d+1$. On the other hand, we can project a set of $d+1$ points that can be shattered by linear classifiers in $\R^{d+1}$ onto $\bS^d$ through a simple normalization. This way, we can find a set of exactly $d+1$ points that can be shattered by linear classifiers in $\S^d$. Hence, the VC dimension of linear classifiers in $\bS^d$ is exactly $d+1$.

Next, we turn our attention to the $d$-dimensional 'Loid model of hyperbolic spaces. Let $x_1, \ldots, x_{d+1}$ be a set of $d+1$ points in this space such that
\[
	x_n = \begin{bmatrix}
	\sqrt{1+\norm{z_n}^2} \\
	z_n
	\end{bmatrix}, \  \mbox{where} \ z_n \in \R^d \ \mbox{for all} \ n \in [d+1].
\]
Furthermore, we assume that $z_1 = 0$ and $z_n = e_{n-1}$ for $n \in \set{2, \ldots, d+1}$, where $e_n$ is the $n$-th standard basis vector of $\R^d$. We claim that this point set can be shattered by the set of linear classifiers in hyperbolic spaces, i.e.,
\begin{equation}\label{eq:hyperbolic_classifiers}
	\forall n \in [d+1]: l^{\bH}_{w}(x_n)= \mathrm{sgn} \big( \mathrm{asinh}([w,x_n]) \big) = y_n,
\end{equation}
where $w \in \set{x \in \R^{d+1}: [x,x] > 0}$ and $(y_1, \ldots, y_{d+1})$ is an arbitrary set of labels in $\set{-1,1}$. We define
\begin{equation} \label{eq:t_and_y}
	\forall n \in \set{2, \ldots, d+1}: t_1 = y_1, \ t_n = k y_n,
\end{equation}
where $k > \sqrt{2}+1$. Therefore, if we can show that there exists a $w \in \set{x \in \R^{d+1}: [x,x] > 0}$ such that $t_n =  [w,x_n]$ for all $n \in [d+1]$, then \eqref{eq:hyperbolic_classifiers} holds true. This is equivalent to showing that the equation $t = X^\T H w$ has a solution $w \in \set{x: [x,x] > 0}$, where $H$ is defined in~\eqref{eq:lorentz_inner_product}, $t = (t_1, \ldots, t_{d+1})$, and 
\[
    X^\T = \begin{bmatrix}
                \sqrt{1+\norm{z_1}^2} & z_1^\T \\
                \sqrt{1+\norm{z_2}^2} & z_2^\T \\
                \vdots& \vdots \\
                \sqrt{1+\norm{z_{d+1}}^2} & z_{d+1}^\T \\
            \end{bmatrix} = \begin{bmatrix}
                                1 & 0^\T \\
                                \sqrt{2} & e_1^\T \\
                                \vdots& \vdots \\
                                \sqrt{2} & e_{d}^\T \\
                            \end{bmatrix}.
\]
The solution is $w = H(X^\T)^{-1}t$, described below,
\begin{align*}
    w = H \begin{bmatrix}
                1 & 0^\T \\
                -\sqrt{2} & e_1^\T \\
                \vdots& \vdots \\
                -\sqrt{2} & e_{d}^\T \\
                \end{bmatrix}t = \begin{bmatrix}
                                    -t_1 \\
                                    -\sqrt{2}t_1 + t_2 \\
                                    \vdots \\
                                    -\sqrt{2}t_1 + t_{d+1} \\
                                  \end{bmatrix}.
\end{align*}
As the final step, we show that $w \in \set{x: [x,x] > 0}$. To this end, we observe that
\begin{align*}
    [w,w] &= -t_1^2 + \sum_{n=2}^{d+1} (-\sqrt{2} t_1 +t_n)^2 \stackrel{\mathrm{(a)}}{=}  y_1^2 \big(-1+ \sum_{n=2}^{d+1} (-\sqrt{2} + k \frac{y_n}{y_1} )^2 \big) \\
    &\stackrel{\mathrm{(b)}}{=}  -1+ \sum_{n=2}^{d+1} (-\sqrt{2} + k \frac{y_n}{y_1} )^2  \stackrel{\mathrm{(c)}}{>} 0, 
\end{align*}
where $\mathrm{(a)}$ is due to~\eqref{eq:t_and_y}, $\mathrm{(b)}$ follows from $y_n \in \set{-1,1}$, and $\mathrm{(c)}$ is a consequence of $k > \sqrt{2}+1$. Therefore, linear hyperbolic classifiers can generate any set of labels for the point set $\set{x_n}_{n \in [d+1]}$. Furthermore, hyperbolic classifiers in \eqref{eq:hyperbolic_classifiers} can be seen as linear classifiers in $(d+1)$-dimensional Euclidean space. Hence, the VC dimension of linear classifiers in hyperbolic space is exactly $d+1$.
\end{proof}
From \Cref{thm:vc_dimension} and the fundamental theorem of concept learning~\citep{shalev2014understanding}, it follows that the set of linear classifiers in a $d$-dimensional space form $S$, denoted by $\mathcal{L}$, is probably accurately correctly learnable. More precisely, let $\Delta$ be a family of probability distributions on $S \times \set{-1,1}$, and let $(x_n,y_n)_{n \in [N]}$ be a set of i.i.d. samples from $P \in \Delta$. Then, we have
\begin{align*}
	\inf_{l \in \mathcal{L}} P( \widehat{l}_N(X) \neq Y) \leq \inf_{l \in \mathcal{L}} P(l(X) \neq Y) + c \sqrt{\frac{d+1}{n}} + \sqrt{\frac{2 \log(\frac{1}{\delta})}{n}},
\end{align*}
where $c$ is a constant and $\widehat{l}_N = \argmin_{l \in \mathcal{L}} \frac{1}{N} \sum_{n \in [N]} 1(l(x_n) \neq y_n)$ is the empirical risk minimizer. Therefore, spherical, hyperbolic, and Euclidean linear classifiers have the same learning complexity. Next, we show how the three classifiers, all of which have the same expressive power, can be \say{combined} to define a linear classifier in product space forms. 
\section{Linear Classifiers in Product Space Forms} \label{sec:main}
\Cref{def:linear_classifier_space_form} of linear classifiers applies to geodesically complete Riemannian manifolds. Our focus is on linear classifiers in product space forms which are built from the results presented in \Cref{sec:linear_classifiers_in_space_forms}. We first describe a perceptron algorithm for such spaces that provably learns an optimal classifier for linearly separable points in a finite number of iterations. Then, we extend this learning scheme to large-margin classifiers in product space forms. Consider a product space of Euclidean, spherical, and hyperbolic manifolds, e.g., $(\bE^{d_{\bE}}, g^{\bE})$, $(\bS^{d_{\bS}}, g^{\bS})$, $(\bH^{d_{\bH}}, g^{\bH})$ with sectional curvatures $0, C_{\bS}, C_{\bH}$, respectively. The Euclidean manifold is simply $\R^{d_{\bE}}$ while the hyperbolic space is the 'Loid model. Two observations are in place. First, we choose to work with the 'Loid model rather than the Pincar\'e disk (or other isometric hyperbolic models) as this model is amendable for integration with other space forms of nonnegative curvature. This is due to the fact that deriving a distance-based linear classifier in the Poincar\'e model requires a complicated analysis to identify an appropriate base point (see to \Cref{prop:hyperbolic_classifier}). Second, unlike Euclidean spaces in which the product of two subspaces is still Euclidean, this is not the case for spherical and hyperbolic spaces. For example, $\mathbb{S}^2 \times \mathbb{S}^2 \neq \mathbb{S}^4$ and $\mathbb{H}^2 \times \mathbb{H}^2 \neq \mathbb{H}^4$.

The product manifold $\mathcal{M} = \bE^{d_{\bE}} \times \bS^{d_{\bS}} \times \bH^{d_{\bH}}$ admits a canonical Riemannian metric $g$, called the \emph{product Riemannian metric}. The tangent space of $\mathcal{M}$ at a point $p = (p_{\bE}, p_{\bS}, p_{\bH})$ can be decomposed as~\citep{tu2011introduction}
\begin{equation} \label{eq:tangent_product_space_form}
	T_p \mathcal{M} = \bigoplus_{S \in \set{ \bE,\bS,\bH}} T_{p_S} S^{d_S},
\end{equation}
where the right-hand side expression is the direct sum $\bigoplus$ of individual tangent spaces $T_{p_{\bE}}\bE^{d_{\bE}}$, $T_{p_{\bS}}\bS^{d_{\bS}}$, and $T_{p_{\bH}}\bH^{d_{\bH}}$. The \emph{scaled} Riemannian metric used on $\mathcal{M}$ is 
\begin{equation} \label{eq:general_metric}
g_p(u, v) = \sum_{S \in \set{\bE,\bS,\bH} }\alpha_S g_{p_S}^{S}(u_{S}, v_{S}), 
\end{equation}
where $u = (u_{\bE},u_{\bS},u_{\bH}), v = (v_{\bE},v_{\bS},v_{\bH}) \in T_p \mathcal{M}$, $p = (p_{\bE}, p_{\bS},p_{\bH})$, and $\alpha_{\bE}$, $\alpha_{\bS}$, $\alpha_{\bH}$ are positive weights. The choice of the scaled Riemannian metric in equation~\eqref{eq:general_metric} allows for scaling the distances between two points while keeping geodesics, tangent spaces, exponential and logarithmic maps unchanged (see~\Cref{sec:linear_classifiers_in_space_forms}). Moreover, it resolves the potential \say{distance compatibility} issues that arise from possibly vastly different ranges and variances of each component (e.g., $x_{\bE}$, $x_{\bS}$, and $x_{\bH}$) which could lead to a classification criterion that is dominated by the component with the largest variance. 


Based on our previous discussions, in order to describe linear classifiers on the above manifold $\mathcal{M}$, we first need to identify the logarithmic map (see~\Cref{def:linear_classifier_space_form}). For this purpose, we invoke the following known result that formalizes geodesics, exponential and logarithmic maps on $\mathcal{M}$.
\begin{fact} \citep{gallier2020differential}
\label{fact:product_space_concepts}
Let $\mathcal{M} = \bE^{d_{\bE}} \times \bS^{d_{\bS}} \times \bH^{d_{\bH}}$, with a Riemannian metric given by ~\eqref{eq:general_metric}. Then, the geodesics, exponential, and logarithmic maps on $\mathcal{M}$ are the concatenation of the corresponding maps of the individual space forms, i.e., $\gamma(t) = \big( \gamma_{\bE}(t) , \gamma_{\bS}(t), \gamma_{\bH}(t) \big)$, $\mathrm{exp}_{p}(v) = \big( \mathrm{exp}_{p_{\bE}}(v_{\bE}), \mathrm{exp}_{p_{\bS}}(v_{\bS}), \mathrm{exp}_{p_{\bH}}(v_{\bH}) \big)$, and $\mathrm{log}_{p}(x) = \big( \mathrm{log}_{p_{\bE}}(x_{\bE}),  \mathrm{log}_{p_{\bS}}(x_{\bS}) ,\mathrm{log}_{p_{\bH}}(x_{\bH}) \big)$, where $p = (p_{\bE},p_{\bS},p_{\bH})$, $x = (x_{\bE},x_{\bS},x_{\bH}) \in \mathcal{M}$, $v = (v_{\bE},v_{\bS},v_{\bH}) \in T_p \mathcal{M}$, and $\gamma_{\bE}, \gamma_{\bS}, \gamma_{\bH}$ are geodesics in their corresponding space form.\footnote{The distance between $x,y \in \mathcal{M}$ is given by $d(x,y) =  \big( \sum_{S \in \set{ \bE,\bS,\bH}} \alpha_S^2 d_{S}(x_{S},y_{S})^2\big)^{\frac{1}{2}}$; see~\Cref{tab:diff_geom_ingredients_with_C}. }
\end{fact}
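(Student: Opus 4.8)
The plan is to reduce the statement to the classical fact that the Levi--Civita connection of a product Riemannian metric is the direct sum of the Levi--Civita connections of the factors, so that the geodesic equation decouples. First I would dispose of the positive weights $\alpha_S$: for a constant $\alpha>0$, the Christoffel symbols of $\alpha g^S$ coincide with those of $g^S$, since $\alpha$ cancels in $\Gamma^k_{ij} = \tfrac12 g^{kl}(\partial_i g_{jl}+\partial_j g_{il}-\partial_l g_{ij})$. Hence $\alpha g^S$ has exactly the same geodesics, exponential map, and logarithmic map as $g^S$, and it suffices to treat the case $\alpha_{\bE}=\alpha_{\bS}=\alpha_{\bH}=1$. (This invariance is precisely what is invoked after~\eqref{eq:general_metric} when we say the scaled metric leaves geodesics, tangent spaces, and exp/log maps unchanged.)

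Next, working in product coordinates $(x_{\bE},x_{\bS},x_{\bH})$, the metric~\eqref{eq:general_metric} is block diagonal, and crucially each block $g^S_{p_S}$ depends only on the coordinates of its own factor, since it is the pullback of the factor metric under the projection $\mathcal{M}\to S^{d_S}$. Computing the Christoffel symbols from this block structure, every symbol with indices drawn from two distinct factors vanishes, and a symbol with all indices in factor $S$ equals the corresponding Christoffel symbol of $(S^{d_S},g^S)$. Consequently the geodesic system $\ddot\gamma^k + \Gamma^k_{ij}\dot\gamma^i\dot\gamma^j=0$ splits into three independent systems, one per factor. By the existence--uniqueness theorem for geodesics, together with the fact that each space form is geodesically complete (as recorded in \Cref{sec:linear_classifiers_in_space_forms}, whence the product is complete as well), the unique geodesic through $p=(p_{\bE},p_{\bS},p_{\bH})$ with initial velocity $v=(v_{\bE},v_{\bS},v_{\bH})\in T_p\mathcal{M}$ is $\gamma(t)=(\gamma_{\bE}(t),\gamma_{\bS}(t),\gamma_{\bH}(t))$, where $\gamma_S$ is the geodesic of $S^{d_S}$ with $\gamma_S(0)=p_S$ and $\gamma_S'(0)=v_S$. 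Evaluating at $t=1$ yields the claimed formula for $\mathrm{exp}_p(v)$, and inverting gives the componentwise expression for $\mathrm{log}_p(x)$.

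Finally, for the distance formula in the footnote I would argue directly from the length functional. The concatenation of constant-speed factor geodesics from $p$ to $x$ has length determined by the three factor distances $d_S(p_S,x_S)$; and for an arbitrary path $\sigma$ with factor projections $\sigma_S$, the integral Minkowski inequality gives $L[\sigma]=\int_0^1 (\sum_S \alpha_S\|\sigma_S'(t)\|_S^2)^{1/2}\,dt \ge (\sum_S \alpha_S L[\sigma_S]^2)^{1/2} \ge (\sum_S \alpha_S\, d_S(p_S,x_S)^2)^{1/2}$ (matching the footnote up to the convention on whether the weights enter squared), so the concatenation is minimizing. The one step that genuinely needs to be carried out, though it is routine rather than deep, is the verification that the block-diagonal and factorwise-dependent structure of the metric forces all mixed Christoffel symbols to vanish; everything else is a direct appeal to ODE uniqueness and to the scaling invariance of the connection. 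Since this is the standard product-manifold computation, in the write-up I would cite \citep{gallier2020differential,tu2011introduction} for the connection splitting and make explicit only the weight-scaling reduction and the decoupling of the geodesic ODE.
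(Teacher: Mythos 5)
Your proposal is correct, but note that the paper does not actually prove this statement: it is recorded as a \emph{Fact} with a citation to \citep{gallier2020differential}, so there is no in-paper argument to compare against. What you supply is the standard product-manifold proof, and it is sound: constant rescaling of a metric leaves the Christoffel symbols, hence geodesics and the exponential and logarithmic maps, unchanged; the block-diagonal, factorwise-dependent structure of the product metric kills every mixed Christoffel symbol, so the geodesic ODE decouples and uniqueness of geodesics gives the componentwise formulas; completeness of each factor guarantees the maps are globally defined, consistent with the paper's use of geodesic completeness in \Cref{sec:linear_classifiers_in_space_forms}. Your derivation of the distance formula via the integral triangle inequality (equality attained by constant-speed factor geodesics) is also correct, and you rightly flag a small inconsistency: with the metric exactly as written in~\eqref{eq:general_metric}, namely $g = \sum_S \alpha_S g^S$, the induced distance is $\big(\sum_S \alpha_S\, d_S(x_S,y_S)^2\big)^{1/2}$ with the weights entering to the first power, whereas the footnote to the Fact writes $\alpha_S^2$; the footnote is consistent only under the convention $g = \sum_S \alpha_S^2 g^S$. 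Since the result is invoked as known, your write-up plan of citing \citep{gallier2020differential,tu2011introduction} for the connection splitting and spelling out only the weight-scaling reduction is entirely appropriate.
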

Combining the results regarding distance-based linear classifiers in space forms (\Cref{sec:linear_classifiers_in_space_forms}), the definition of tangent product spaces in terms of the product of tangent spaces in~\eqref{eq:tangent_product_space_form}, and the choice of the Riemannian metrics given in \Cref{tab:diff_geom_ingredients_with_C}, we arrive at the following formulation for a product space linear classifier. 
\begin{tcolorbox}[colback=red!4!white,colframe=red!4!white]
\begin{proposition} \label{prop:linear_classifier_in_general_product_space_forms}
Let $\bS^{d_{\bS}}$ and $ \bH^{d_{\bH}}$ be space forms with curvatures $C_{\bS} >0$, and $C_{\bH} <0$. Let $\mathcal{M} = \bE^{d_{\bE}} \times \bS^{d_{\bS}} \times \bH^{d_{\bH}}$ with the metric given by~\eqref{eq:general_metric}. We define a linear classifier on $\mathcal{M}$ as
\[
	l^{\mathcal{M}}_w(x) = \mathrm{sgn} \big( \langle w_{\bE}, x_{\bE} \rangle + \alpha_{\bS}\mathrm{asin}( \langle w_{\bS}, x_{\bS}\rangle ) + \alpha_{\bH}\mathrm{asinh}( [w_{\bH},  x_{\bH}]) +b \big),
\]
where $w_{\bE}, w_{\bS}$ and $w_{\bH}$ have norms of $\alpha_{\bE}$, $\sqrt{C_{\bS}}$, and $\sqrt{-C_{\bH}}$, respectively, and $w$ concisely represents all parameters involved.
\end{proposition}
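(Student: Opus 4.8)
The plan is to assemble the claimed formula directly from \Cref{def:linear_classifier_space_form}, \Cref{fact:product_space_concepts}, and the two component-wise results \Cref{prop:spherical_classifier,prop:hyperbolic_classifier}; essentially no new analysis is required, only careful bookkeeping of base points and scaling weights. I would start from the general definition applied to $\mathcal{M}$: for a base point $p=(p_{\bE},p_{\bS},p_{\bH})$ lying on the decision boundary and a normal vector $w=(w_{\bE},w_{\bS},w_{\bH})\in T_p\mathcal{M}$, write $l^{\mathcal{M}}_{p,w}(x)=\mathrm{sgn}\big(g_p(w,\log_p(x))\big)$. By \Cref{fact:product_space_concepts} the logarithmic map splits as $\log_p(x)=\big(\log_{p_{\bE}}(x_{\bE}),\log_{p_{\bS}}(x_{\bS}),\log_{p_{\bH}}(x_{\bH})\big)$, and combining the scaled product metric \eqref{eq:general_metric} with the tangent decomposition \eqref{eq:tangent_product_space_form} gives
\[
 g_p(w,\log_p(x)) = \alpha_{\bE}\,g^{\bE}_{p_{\bE}}(w_{\bE},\log_{p_{\bE}}(x_{\bE})) + \alpha_{\bS}\,g^{\bS}_{p_{\bS}}(w_{\bS},\log_{p_{\bS}}(x_{\bS})) + \alpha_{\bH}\,g^{\bH}_{p_{\bH}}(w_{\bH},\log_{p_{\bH}}(x_{\bH})).
\]

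Next I would evaluate each summand using the canonical base points identified in \Cref{sec:linear_classifiers_in_space_forms}. For the Euclidean factor, $g^{\bE}_{p_{\bE}}(w_{\bE},\log_{p_{\bE}}(x_{\bE}))=\langle w_{\bE},x_{\bE}-p_{\bE}\rangle$; absorbing the positive weight $\alpha_{\bE}$ into the weight vector so that $\|w_{\bE}\|_2=\alpha_{\bE}$, this term contributes $\langle w_{\bE},x_{\bE}\rangle-\langle w_{\bE},p_{\bE}\rangle$. For the spherical factor, choose $p_{\bS}=p_{\circ}$ as in \Cref{prop:spherical_classifier} and $w_{\bS}$ with $\langle w_{\bS},w_{\bS}\rangle=C_{\bS}$ (i.e.\ $\|w_{\bS}\|_2=\sqrt{C_{\bS}}$); the proof of \Cref{prop:spherical_classifier} shows $g^{\bS}_{p_{\circ}}(w_{\bS},\log_{p_{\circ}}(x_{\bS}))=\mathrm{asin}(\langle w_{\bS},x_{\bS}\rangle)$, so this summand equals $\alpha_{\bS}\,\mathrm{asin}(\langle w_{\bS},x_{\bS}\rangle)$. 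Symmetrically, for the hyperbolic factor choose $p_{\bH}=p_{\circ}$ as in \Cref{prop:hyperbolic_classifier} and $w_{\bH}$ with $[w_{\bH},w_{\bH}]=-C_{\bH}$; then $g^{\bH}_{p_{\circ}}(w_{\bH},\log_{p_{\circ}}(x_{\bH}))=\mathrm{asinh}([w_{\bH},x_{\bH}])$, contributing $\alpha_{\bH}\,\mathrm{asinh}([w_{\bH},x_{\bH}])$. Collecting the three terms and setting the scalar bias $b=-\langle w_{\bE},p_{\bE}\rangle$ produces exactly the stated expression for $l^{\mathcal{M}}_w(x)$, with free parameters $w_{\bE},w_{\bS},w_{\bH},b$ subject to the indicated norm constraints.

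The only point requiring genuine care — and the closest thing to an obstacle — is verifying that the three independently chosen base points assemble into a single admissible pair $(p,w)$ in the sense of \Cref{def:linear_classifier_space_form}: we need $p=(p_{\bE},p_{\circ,\bS},p_{\circ,\bH})$ to be a point of $\mathcal{M}$ on the decision boundary and $w\in T_p\mathcal{M}$. The former is immediate because $\mathcal{M}$ is a product, so any triple of points of the factors is a valid point of $\mathcal{M}$; the latter reduces, via \eqref{eq:tangent_product_space_form}, to the componentwise conditions $w_{\bS}\in T_{p_{\circ,\bS}}\bS^{d_{\bS}}=p_{\circ,\bS}^{\perp}$ and $w_{\bH}\in T_{p_{\circ,\bH}}\bH^{d_{\bH}}=p_{\circ,\bH}^{\perp}$, which hold by construction since in \Cref{prop:spherical_classifier,prop:hyperbolic_classifier} the chosen $p_{\circ}$ lies in $H_{p,w}$, i.e.\ is orthogonal to $w$ in the relevant inner product. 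Finally, I would note, as in the single-factor cases, that the decision boundary itself is independent of which admissible base point is used ($H_{p,w}=H_{p_{\circ},w}$ componentwise), so the distance-based form above is well-defined; this also justifies writing $l^{\mathcal{M}}_w$ rather than $l^{\mathcal{M}}_{p,w}$.
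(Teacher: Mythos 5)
Your proposal is correct and follows essentially the same route as the paper's own proof: apply \Cref{def:linear_classifier_space_form} to the product manifold, split the metric and logarithmic map componentwise via \Cref{fact:product_space_concepts} and \eqref{eq:tangent_product_space_form}, and substitute the distance-based forms from \Cref{prop:spherical_classifier,prop:hyperbolic_classifier}. Your extra step verifying that the componentwise base points assemble into an admissible pair $(p,w)\in\mathcal{M}\times T_p\mathcal{M}$ is a worthwhile piece of bookkeeping that the paper leaves implicit, but it does not change the argument.
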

\end{tcolorbox}
\begin{figure}[t]
  \center
  \includegraphics[width=.9 \linewidth]{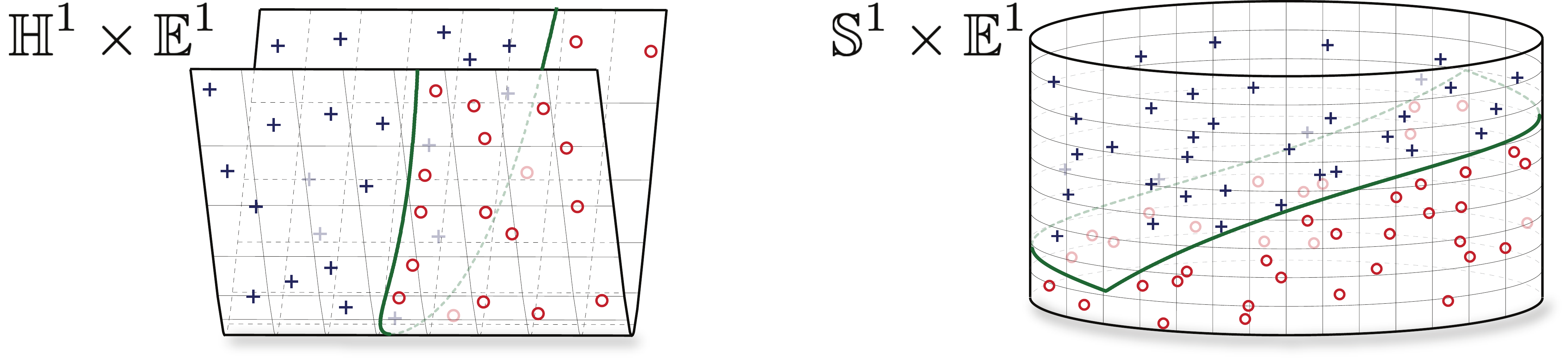}
  \caption{Classification boundaries for linear classifiers in product space forms. }
  \label{fig:product_space_classifiers_margin} 
\end{figure}
\begin{proof}
Let $\mathcal{M} = \mathbb{E}^{d_{\mathbb{E}}} \times \mathbb{S}^{d_{\mathbb{S}}} \times \mathbb{H}^{d_{\mathbb{H}}}$ be a product space with the Riemannian metric $g = \alpha_{\bE} g^{\bE}+ \alpha_{\bS} g^{\bS}+ \alpha_{\bH} g^{\bH}$.
Fact 1 gives us the logarithm map and tangent space at a point $p = (p_{\bE},p_{\bS}, p_{\bH}) \in \mathcal{M}$. A tangent vector $w \in T_{p}\mathcal{M}$ can be expressed as $w = (w_{\bE}, w_{\bS}, w_{\bH})$ where $w_{\bE} \in T_{p_{\bE}} \bE^{d_{\bE}}$, $w_{\bS} \in T_{p_{\bS}} \bS^{d_{\bS}}$, and $w_{\bH} \in T_{p_{\bH}} \bH^{d_{\bH}}$. From the point-line definition of linear classifiers (Definition 2), we have
\begin{align*}
l_{p,w}^{\mathcal{M}} (x) &= \mathrm{sgn} \big( g_{p} (\mathrm{log}_p(x), w ) \big) \\
&= \mathrm{sgn} \big(\alpha_{\bE} g^{\bE}_{p_{\bE}} (\mathrm{log}_{p_{\bE}}(x_{\bE}), w_{\bE} ) + \alpha_{\bS}g^{\bS}_{p_{\bS}} (\mathrm{log}_{p_{\bS}}(x_{\bS}), w_{\bS} )  + \alpha_{\bH} g^{\bH}_{p_{\bH}} (\mathrm{log}_{p_{\bH}}(x_{\bH}), w_{\bH} ) \big).
\end{align*}
In~\Cref{prop:spherical_classifier,prop:hyperbolic_classifier}, we derived specific spherical and hyperbolic base points to formalize distance-based classifiers. From these results, we may define a linear classifier in $\mathcal{M}$ that is parameterized only with a tangent vector $w$, i.e.,
\begin{equation}\label{eq:linear_classifier_in_M}
l_{w}^{\mathcal{M}} (x)= \mathrm{sgn} \big( (\alpha_{\bE} w_{\bE}) ^\T x_{\bE} +b + \alpha_{\bS} \mathrm{asin} ( \langle w_{\bS}, x_{\bS} \rangle ) + \alpha_{\bH} \mathrm{asinh}([w_{\bH},x_{\bH}] ) \big)
\end{equation}
where $\norm{w_{\bE} } = 1$, $\langle w_{\bS}, w_{\bS} \rangle = C_{\bS}$, and $[w_{\bH},w_{\bH}] = -C_{\bH}$. This completes the proof. 
\end{proof}
This classifier can be associated with three linear classifiers, Euclidean, hyperbolic, and spherical space classifiers. For a point $x = (x_{\bE},x_{\bS}, x_{\mathbb{H}}) \in \mathcal{M}$, the product space classifier takes a weighted vote based on the signed distances of each component (e.g, $x_{\bE}$, $x_{\bS}$, and $x_{\mathbb{H}}$) to its corresponding classifier's boundary.\footnote{It is worth mentioning that the Euclidean metric scale $\alpha_{\bE}$ has been absorbed in the norm of $w_{\bE}$.} Two illustrative examples of such classifiers are given in \Cref{fig:product_space_classifiers_margin}.\footnote{Non-Euclidean spaces have dimensions $\geq 2$, but we reduced them to one for visualization purposes only.}

{\bf Remark.} The linear classifier of~\eqref{eq:linear_classifier_in_M} is not a distance-based classifier with respect to our choice of the Riemannian metric $g$. The distance between a point $x$ and the classification boundary $H_{p,w}$ can be computed as
\begin{align*}
\min_{y \in H_{p,x}} d(x, y ) = \Big( \alpha^2_{\bE} \norm{x_{\bE}-y^*_{\bE}}^2+ \frac{ \alpha^2_{\bS}}{C_{\bS}} \mathrm{acos}^2 ( C_{\bS} \langle x_{\bS}, y^*_{\bS} \rangle )+ \frac{\alpha^2_{\bH}}{-C_{\bH}} \mathrm{acosh}^2 ( C_{\bH}[x_{\bH}, y^*_{\bH}]) \Big)^{1/2}
\end{align*}
where $y^*= (y^{*}_{\bE},y^{*}_{\bS}, y^{*}_{\mathbb{H}})$ is the projection of $x$ onto the separation plane $H_{p,w}$. It is easy to verify that this distance is not related to the decision criteria, i.e., $(\alpha_{\bE} w_{\bE}) ^\T x_{\bE} +b + \alpha_{\bS} \mathrm{asin} ( \langle w_{\bS}, x_{\bS} \rangle ) + \alpha_{\bH} \mathrm{asinh}([w_{\bH},x_{\bH}] )$, which only takes the weighted sum of signed distances between $x_{S}$ and $H_{p_{S}, w_{S}}$ for $S \in \set{\bE, \bS, \bH}$. For such a classifier, the classification margin is \emph{the sum ($\ell_1$ norm) of the distances of the individual space components to its classification boundary}, which is related to the weighted vote majority classification approach of~\Cref{sec:main}. The resulting distance is a proper upper bound for the \emph{true} distance of a point to the classification boundary (see Fact~\ref{fact:product_space_concepts}).

\subsection{VC Dimension of Linear Classifiers in Product Space Forms}
In \Cref{thm:mixed_vc} that follows, we provide a lower bound for the VC dimension of linear classifiers in product space forms, introduced in \Cref{prop:linear_classifier_in_general_product_space_forms}, which only depends on the dimension of the ambient manifold. Due to its technical nature, the proof is delegated to \Cref{sec:mixed_vc_proof}.
\begin{tcolorbox}[colback=red!4!white,colframe=red!4!white]
\begin{theorem} \label{thm:mixed_vc}
The VC dimension of linear classifiers in a product space form $\mathcal{M}$ is at least $\mathrm{dim}(\mathcal{M})+1$.
\end{theorem}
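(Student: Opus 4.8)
The plan is to \emph{construct} an explicit set of $N=d_{\bE}+d_{\bS}+d_{\bH}+1$ points in $\mathcal{M}=\bE^{d_{\bE}}\times\bS^{d_{\bS}}\times\bH^{d_{\bH}}$ and show it is shattered by the classifiers of \Cref{prop:linear_classifier_in_general_product_space_forms}. I split these points into an \emph{anchor} $x_0$ and three \emph{blocks}: $d_{\bE}$ points that differ from $x_0$ only in the Euclidean factor, $d_{\bS}$ points that differ only in the spherical factor, and $d_{\bH}$ points that differ only in the hyperbolic factor. The anchor sits at $p=(0,\,p_{\bS},\,p_{\bH})$ with $p_{\bS}=C_{\bS}^{-1/2}e_1$ and $p_{\bH}=(-C_{\bH})^{-1/2}e_1$ (the ``origins'' of the sphere and of the 'Loid model), and each block point perturbs $p$ along a single coordinate axis \emph{within its own factor}.

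The crucial structural step is a \emph{decoupling} observation. If we restrict the weights to $w_{\bS}=(0,\tilde w_{\bS})$ and $w_{\bH}=(0,\tilde w_{\bH})$ (vanishing first ambient coordinate), then $\langle w_{\bS},p_{\bS}\rangle=0$ and $[w_{\bH},p_{\bH}]=0$, so the spherical and hyperbolic summands of the decision function of \Cref{prop:linear_classifier_in_general_product_space_forms} \emph{vanish at the reference points} $p_{\bS},p_{\bH}$; moreover $[w_{\bH},w_{\bH}]=\|\tilde w_{\bH}\|^2>0$ is automatically time-like, and $w_{\bS}\in p_{\bS}^{\perp}=T_{p_{\bS}}\bS^{d_{\bS}}$, $w_{\bH}\in T_{p_{\bH}}\bH^{d_{\bH}}$, as \Cref{prop:linear_classifier_in_general_product_space_forms} requires. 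Consequently the decision value reduces to $b$ on the anchor, to $\langle w_{\bE},\cdot\rangle+b$ on a Euclidean-block point, to $\alpha_{\bS}\,\mathrm{asin}(\langle w_{\bS},\cdot\rangle)+b$ on a spherical-block point, and to $\alpha_{\bH}\,\mathrm{asinh}([w_{\bH},\cdot])+b$ on a hyperbolic-block point. Choosing the block points so that the perturbations in each factor run along \emph{distinct} coordinate axes makes the induced linear systems diagonal; after rescaling $w_{\bS}$ (resp.\ $w_{\bH}$, resp.\ $w_{\bE}$) to meet its norm constraint $\langle w_{\bS},w_{\bS}\rangle=C_{\bS}$ (resp.\ $[w_{\bH},w_{\bH}]=-C_{\bH}$, resp.\ the Euclidean norm constraint), each summand can realize any prescribed sign with a magnitude that is a \emph{fixed positive constant}, independent of the labeling.

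Given any labeling $(y_0;\,y^{\bE}_{\bullet};\,y^{\bS}_{\bullet};\,y^{\bH}_{\bullet})$, I then build the classifier directly: set $b=\delta\,y_0$ for a small $\delta>0$ to be fixed last; take the entries of $w_{\bE}$, $\tilde w_{\bS}$, $\tilde w_{\bH}$ proportional (entrywise) to the labels of the respective block, scaled to the required norms; and take the perturbation size $\eta$ small enough that the arguments of $\mathrm{asin}$ lie in $(-1,1)$ and that the spherical-block points still lie on $\bS^{d_{\bS}}$. Since $\mathrm{asin}$ and $\mathrm{asinh}$ are odd and strictly increasing, each block summand then carries exactly the desired sign and has magnitude at least a constant determined only by the point set. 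Finally, choosing $\delta$ smaller than all of these constants (and smaller than the common magnitude of the entries of $w_{\bE}$) renders the shared bias a harmless perturbation on every block point while still giving $\mathrm{sgn}(b)=y_0$ at the anchor. Hence every point receives its prescribed label, the set of size $\mathrm{dim}(\mathcal{M})+1$ is shattered, and the bound follows. If one of the factors is trivial ($d_S=0$ for some $S\in\set{\bE,\bS,\bH}$) the corresponding block is simply omitted.

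The main obstacle to overcome is the \emph{coupling through the single shared bias} $b$, combined with the \emph{norm constraints} on $w_{\bS}$ and $w_{\bH}$: a priori the value of the spherical (resp.\ hyperbolic) summand at the reference point contaminates the equations governing all other blocks and the anchor, and the norm constraint removes exactly one degree of freedom per factor, which is precisely the number of points placed in that block, so one must check that the resulting systems stay solvable with controllable margins. Both difficulties are dissolved by the particular choices above: the orthogonality $w_{\bS}\perp p_{\bS}$ and $[w_{\bH},p_{\bH}]=0$ kills the cross-terms, while the diagonal structure of the per-factor systems makes the post-normalization magnitudes uniform over the finitely many labelings, which is exactly what licenses the concluding ``$\delta$ small'' argument. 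The remaining verifications — that the constructed points lie on $\bS^{d_{\bS}}$ and on the forward sheet of the 'Loid model, and that the tangency and time-like conditions hold — are immediate from the construction.
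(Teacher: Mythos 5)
Your construction is correct and uses the same skeleton as the paper's proof: an anchor point plus three blocks of points, each block varying in only one factor, with the spherical and hyperbolic summands forced to vanish on the anchor components so that each block point's label is governed solely by its own factor's classifier. Where you genuinely depart from the paper is in how the anchor point itself gets labeled. The paper arranges \emph{all three} summands (including the Euclidean one, via $w_{\bE}^{\T}x_{\bE,N}+b=0$) to vanish at the anchor, which leaves a tie that must be broken by perturbing the entire weight vector to $w_{\varepsilon}=w+\varepsilon t_N(x_{\bE,N},1,x_{\bS,N},-x_{\bH,N})^{\T}$ and verifying, through a lemma with first-order Taylor expansions of $\mathrm{asin}$ and $\mathrm{asinh}$, that the block labels survive while the anchor acquires the sign of $t_N$. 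You instead place the anchor's Euclidean component at the origin and reserve the bias $b=\delta y_0$ to label the anchor outright, then argue that $\delta$ can be taken below the uniform (labeling-independent) margin of every block summand. This trades the paper's perturbation lemma for a simpler smallness argument and makes the whole proof self-contained, whereas the paper leans on \Cref{thm:vc_dimension} to assert that each per-factor system is solvable for arbitrary target values; your explicit diagonal choice of weights (entries $\pm$ a common magnitude, which automatically meets the norm constraints $\langle w_{\bS},w_{\bS}\rangle=C_{\bS}$ and $[w_{\bH},w_{\bH}]=-C_{\bH}$ for every sign pattern) correctly dissolves the degree-of-freedom worry you raise. One small caveat: if the Euclidean factor is absent your anchor-labeling mechanism relies on the bias $b$ still being part of the classifier of \Cref{prop:linear_classifier_in_general_product_space_forms}; this is the natural reading of that proposition, and the paper's own proof faces the same edge case, so it is not a defect of your argument.
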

\end{tcolorbox}
From \Cref{thm:vc_dimension} and \Cref{thm:mixed_vc}, it is clear that linear product space form classifiers are \emph{at least} as expressive as linear classifiers in simple space forms. To complete our analysis, we compute an upper bound on the VC dimension of product space form classifiers. The key idea behind our approach is to view the classifiers in simple space forms as weak learners which boost the classifier in the product space form.  \Cref{prop:upperbound_vc} shows that, unlike the result of the lower bound which depends only on the dimension of the space, the upper bound depends on the \emph{signature} of the space as well. The signature of the space is the collection of dimensions of each simple space form. As an example, the space $ \mathbb{S}^2 \times \mathbb{E}^{3} \times \mathbb{H}^5$ has signature $(d_{\bS}, d_{\bE}, d_{\bH}) = (2,3,5)$, while the space $\mathbb{S}^4 \times \mathbb{H}^{2} \times \mathbb{H}^3$ has signature $(d_{\bS}, d_{\bE}, d_{\bH}) = (4,0,2 \times 3)$.
\begin{tcolorbox}[colback=red!4!white,colframe=red!4!white]
\begin{proposition}\label{prop:upperbound_vc}
Let $\mathcal{M} $ be a product of space forms with dimensions at least two (allowing for repetitions of the same space form), i.e., 
\[
\mathcal{M} = \Big( \bigtimes_{k=1}^{K-1} S_k^{d_k} \Big) \times \bE^{d_K}  \ \mbox{where} \ S_k \in \set{\bS, \bH} \ \mbox{and} \ d_{k} \geq 2 \ \mbox{for all} \ k \in [K].
\]
The VC dimension of linear classifiers in $\mathcal{M}$ is upper bounded by 
\begin{equation}\label{eq:vc_upper_bound}
\max \set{N \in \mathbb{N}: \frac{N}{\log_2 N} \leq \sum_{k=1}^{K}  ({d_{k}+1})} 
\end{equation}
\end{proposition}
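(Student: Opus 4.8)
\emph{Proof proposal.} The plan is a Sauer--Shelah counting argument applied to the parametric family of \Cref{prop:linear_classifier_in_general_product_space_forms}. Write $D := \sum_{k=1}^{K}(d_{k}+1)$ for the quantity on the right-hand side of \eqref{eq:vc_upper_bound}. After folding the global bias $b$ and the Euclidean scale $\alpha_{\bE}$ into a single $(d_{K}+1)$-parameter affine block on the Euclidean factor, every classifier in the family has the form $x \mapsto \mathrm{sgn}\, F(\theta, x)$, where $\theta$ collects all the weight parameters (so $\theta$ ranges over a subset of $\R^{D}$) and
\[
  F(\theta, x) \;=\; \sum_{k=1}^{K} \sigma_{k}\!\big( \langle \theta_{k}, v_{k}(x) \rangle \big), \qquad \sigma_{k} \in \{\, \mathrm{id},\ \alpha_{\bS}\,\mathrm{asin},\ \alpha_{\bH}\,\mathrm{asinh} \,\},
\]
with $v_{k}(x)$ the ambient coordinate vector of the $k$-th component (a constant coordinate appended in the Euclidean block). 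I would then bound the growth function $\Pi(N)$ of this class --- the maximum over $N$-point sets of the number of distinct label vectors $\big(\mathrm{sgn}\, F(\theta, x^{(i)})\big)_{i \in [N]}$ attained over all admissible $\theta$ --- and use that shattering $N$ points forces $2^{N} \le \Pi(N)$. The target estimate is $\Pi(N) \le N^{D}$, which then gives $2^{N} \le N^{D}$, i.e.\ $N/\log_{2} N \le D$, exactly the bound in \eqref{eq:vc_upper_bound}.

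The heart of the argument --- and the precise meaning of the slogan that the simple space-form classifiers act as weak learners boosting the product classifier --- is a low-complexity bound on the decision surfaces $Z_{i} := \{ \theta : F(\theta, x^{(i)}) = 0\}$ in parameter space. The key observation is that each $Z_{i}$ meets any affine line of $\R^{D}$ in only $O(K)$ points. Indeed, restricting $\theta$ to a line turns $F(\cdot, x^{(i)})$ into a univariate map $s \mapsto \sum_{k} \sigma_{k}(a_{k} s + b_{k})$, a sum of $K$ functions each of which has at most one inflection point, since $\mathrm{asin}''$ and $\mathrm{asinh}''$ --- and trivially the second derivative of the identity --- vanish at a single point and change sign there; hence the sum has $O(K)$ inflection points and so, being piecewise convex/concave with $O(K)$ pieces, has $O(K)$ zeros along the line. (To make this global one first replaces each link function by a fixed globally $C^{2}$ extension with a single inflection; this changes no realizable labeling, because the norm constraints on $w_{\bS}, w_{\bH}$ keep the arguments of $\mathrm{asin}, \mathrm{asinh}$ in their original domains.) Thus the $N$ points give rise to $N$ hypersurfaces of ``line-degree'' $O(K)$ in $\R^{D}$, which is exactly the statement that each simple space-form classifier is a low-complexity weak learner. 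A Milnor--Thom/Warren-type arrangement bound for such hypersurfaces then caps the number of cells cut out by $Z_{1}, \dots, Z_{N}$, hence the number of realizable labelings, by $\big( O(K N / D) \big)^{D}$; using $D = \sum_{k}(d_{k}+1) \ge 3K$ (every non-Euclidean factor has dimension at least two) to absorb the constant, this is at most $N^{D}$, and Sauer--Shelah closes the argument as above.

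I expect the Milnor--Thom/Warren-type arrangement estimate to be the main obstacle. The surfaces $Z_{i}$ are transcendental, being built from $\mathrm{asin}$ and $\mathrm{asinh}$, so one cannot quote the polynomial Warren bound verbatim; one must either route through the Pfaffian-function machinery, reduce to the polynomial case by an algebraizing change of variables on the relevant bounded domains, or argue directly from the $O(K)$ line-intersection bound. Moreover, recovering the clean exponent $D = \sum_{k}(d_{k}+1)$ --- rather than $O(D)$ with a larger constant --- forces one to track constants carefully and to exploit that each non-Euclidean factor has dimension at least two; this is also why the \emph{signature} of $\mathcal{M}$, not merely $\dim \mathcal{M}$, enters the bound. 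The remaining ingredients --- the reduction to counting labelings, the inflection-point count, and the final Sauer--Shelah algebra --- are routine.
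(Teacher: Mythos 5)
Your route is genuinely different from the paper's, and unfortunately the part you yourself flag as ``the main obstacle'' is where the argument breaks. The paper never touches parameter space or arrangements: it treats the $K$ component classifiers as generating, for each data point, a vector of component labels, bounds the number of distinct such label-vector tuples over $N$ points by the product of the per-factor growth functions $\prod_{k}(eN/(d_k+1))^{d_k+1}$ (Sauer--Shelah applied factorwise, using \Cref{thm:vc_dimension} to get VC dimension $d_k+1$ in each factor), observes that the product-space decision rule then produces one label per tuple, and concludes $2^{N}\le\prod_{k}(eN/(d_k+1))^{d_k+1}\le N^{\sum_k(d_k+1)}$ using $d_k\ge 2$. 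That is the entire proof; no transcendence issues arise.

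The concrete gaps in your version are these. First, the inflection-point count is wrong as stated: if $g_k(s)=\sigma_k(a_ks+b_k)$ and each $g_k''$ changes sign exactly once, it does \emph{not} follow that $\sum_k g_k''$ changes sign $O(K)$ times --- sign changes of a sum are not controlled by the sign changes of the summands (on the interval between the individual sign-change points the summands have mixed signs and the sum can oscillate). To actually bound the zeros of $\sum_k a_k^2\sigma_k''(a_ks+b_k)$ you would need Khovanskii/Pfaffian-function estimates, and those are exponential in the number of Pfaffian charts, i.e.\ in $K$, not $O(K)$. Second, even granting an $O(K)$ line-degree for each surface $Z_i$, Warren's theorem is not a consequence of line-degree alone: its proof bounds the number of connected components of all lower-dimensional intersections $Z_{i_1}\cap\dots\cap Z_{i_j}$ via Milnor--Thom, which for these transcendental surfaces again forces you into Pfaffian machinery with constants far too large to recover the clean bound $\Pi(N)\le N^{D}$ with $D=\sum_k(d_k+1)$. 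So the target estimate $2^N\le N^D$ is not reached by the sketch. If you want to salvage a parameter-space argument you would end up proving a weaker bound with an extra $O(\cdot)$ in the exponent; the paper's composition-of-growth-functions argument is both more elementary and gives exactly \eqref{eq:vc_upper_bound}.
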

\end{tcolorbox}
\begin{proof}
In \Cref{thm:vc_dimension}, we proved that the VC dimension of linear classifiers in any $d$-dimensional space form equals $d+1$. Now, let $x = (x^1, \ldots, x^K)$ be a point in $\mathcal{M}$, where $x^k \in S_k$ for $k \in [K]$. Suppose that we are given a $K$-dimensional vector, $(h_{1}(x^1), \ldots, h_{K}(x^k))$, where $h_k$ denotes a linear classifier in the space form $S_k$. From the Sauer-Shelah lemma \citep{sauer1972density,shelah1972combinatorial}, given a set of $N$ points $x_1, \ldots, x_N \in \mathcal{M}$, one can generate $\leq  (\frac{eN}{d_{1}+1})^{d_{1}+1}  \ldots (\frac{eN}{d_{K}+1})^{d_{K}+1}$ distinct vectors in $\mathbb{R}^{K}$ using the above set of linear classifiers. For any vector generated in $\mathbb{R}^{K}$, the weighted majority vote classification rule \eqref{eq:linear_classifier_in_M} generates only one label. If the VC dimension of linear classifiers in $\mathcal{M}$ equals $N$, then we must be able to generate $2^N$ possible labels. Therefore,
\begin{align*}
N &\leq \log_2 (\frac{eN}{d_{1}+1})^{d_{1}+1}  \cdots (\frac{eN}{d_{K}+1})^{d_{K}+1} = \sum_{k=1}^{K}  ({d_{k}+1})\log_2 \frac{eN}{d_{k}+1} \\
&=  \sum_{k=1}^{K}  ({d_{k}+1})\log_2 \frac{e}{d_{k}+1} +\sum_{k=1}^{K}  ({d_{k}+1})\log_2 N \stackrel{\mathrm{(a)}}{\leq}  \sum_{k=1}^{K}  ({d_{k}+1})\log_2 N,
\end{align*}
where $\mathrm{(a)}$ is due to the assumptions on the dimension of simple space forms ($d_k \geq 2$). Note that linear product space forms classifiers in $\mathcal{M}$ must shatter at least two points. Hence, we have $\log_2 N \geq 1$. Therefore, the proposed upper bound in \eqref{eq:vc_upper_bound} is $> \mathrm{dim}( \mathcal{M})+1$. This completes the proof.
\end{proof}
\subsection{A Product Space Form Perceptron} 
We now turn our attention to an algorithm for training linear classifiers defined in~\Cref{prop:linear_classifier_in_general_product_space_forms}. To establish provable performance guarantees, we assume that the set of labeled training data points $\mathcal{X}$ satisfies the $\varepsilon>0$ margin property, i.e., 
\begin{equation}\label{eq:mixed_curv_classifier}
\forall (x,y) \in \mathcal{X}: y \big( w_{\bE}^\T x_{\bE}+b + \alpha_{\bS} \mathrm{asin}(w_{\bS}^\T x_{\bS})+  \alpha_{\bH} \mathrm{asinh}( [w_{\bH},  x_{\bH}] \big) \geq \varepsilon, 
\end{equation}
where $\norm{w_{\bE}}_2 = \alpha_{\bE}$, $\norm{w_{\bS}}_2 = \sqrt{C_{\bS}}$, and $\sqrt{[w_{\bH}, w_{\bH}]} =\sqrt{-C_{\bH}}$. 

The classification criterion \eqref{eq:mixed_curv_classifier} is a nonlinear function of the parameters $w_{\bE}$, $w_{\bS}$, $w_{\bH}$, and it requires equality constraints for all the parameters involved. To analyze the classifier and allow for sequential updates of its parameters, we relax the norm constraints and propose perceptron updates in a reproducing kernel Hilbert space (RKHS) which we denote by $\mathcal{H}$.\footnote{The kernel approach is only used to establish convergence results and is not a part of the algorithm.} In the first step of this approach, we represent the classifier in~\eqref{eq:mixed_curv_classifier} as an inner product $\langle \cdot, \cdot \rangle_{\mathcal{H}}$ of two vectors in $\mathcal{H}$, i.e., $l_{w}^{\mathcal{M}}(x)= \mathrm{sgn}\big( \langle \phi(w), M\psi(x) \rangle_{\mathcal{H}} \big),$ where $M$ is a linear operator, $\psi$ and $\phi$ are two maps that are precisely defined in \Cref{sec:perceptron_proof} and discussed in some detail in what follows. 

Note that the kernels $K_{\bE}(w_{\bE},x_{\bE}) = w_{\bE}^\T x_{\bE}$ and $K_{\bS}(w_{\bS},x_{\bS}) = \mathrm{asin}( w_{\bS}^\T x_{\bS} )$ are symmetric and positive definite.\footnote{The Euclidean classifier can be written as $l^{\bE}_{w_{\bE}, b}(x_{\bE}) = \mathrm{sgn} \big( \langle [w_{\bE}, b ], [x_{\bE}, 1] \rangle \big)$.} Hence, they lend themselves to the construction of a valid RKHS. As an example, for spherical spaces, we can write
\[
	K_{\bS}(w_{\bS},x_{\bS}) = \langle \phi_{d_{\bS}}( \frac{1}{\sqrt{C_{\bS}}}w_{\bS}), \phi_{d_{\bS}}(\sqrt{C_{\bS}} x_{\bS}) \rangle_{\mathcal{H}_{d_{\bS}}},
\]
where $\mathcal{H}_{d_{\bS}}$ is a Hilbert space of functions $\bS^{d_{\bS}} \to \R$ equipped with inner product $\langle \cdot , \cdot \rangle_{\mathcal{H}_{d_{\bS}}}$.  Unfortunately, $K_{\bH}(w_{\bH},x_{\bH}) = \mathrm{asinh}( [w_{\bH}, x_{\bH}] )$ is an indefinite kernel. Nevertheless, \Cref{lem:taylor_series_kernel} describes a straightforward approach for finding a similar representation for this kernel. 
\begin{lemma} 
Let $K_{\bS}(x_{\bS}, w_{\bS}) = \mathrm{asin} (x_{\bS}^\T w_{\bS})$, where $x_{\bS}, w_{\bS} \in B_{d_{\bS}} \bydef \set{x \in \R^{d_{\bS}+1}: \norm{x}_2 \leq 1}$. Then, there exists a Hilbert space $\mathcal{H}_{d_{\bS}}$ and a mapping $\phi_{d_{\bS}}:  B_{d_{\bS}} \rightarrow \mathcal{H}_{d_{\bS}}$ such that
\[
	K_{\bS}(x_{\bS}, w_{\bS}) = \langle \phi_{d_{\bS}}(x_{\bS}), \phi_{d_{\bS}}(w_{\bS})\rangle_{\mathcal{H}_{d_{\bS}}},
\]
where $\langle \cdot, \cdot \rangle_{\mathcal{H}_{d_{\bS}}}$ is the inner product on $\mathcal{H}_{d_{\bS}}$.

Moreover, there is exists a Hilbert space $\mathcal{H}_{d_{\bH}}$, a mapping $\phi_{d_{\bH}}:  B_{d_{\bH}} \rightarrow \mathcal{H}_{d_{\bH}}$, and an indefinite operator $M_{d_{\bH}}: \mathcal{H}_{d_{\bH}} \rightarrow \mathcal{H}_{d_{\bH}}$ which admits the following kernel representation
\[
	K_{\bH}(w_{\bH}, x_{\bH}) = \mathrm{asinh}(w_{\bH}^\T H x_{\bH}) = \langle \phi_{d_{\bH}}(w_{\bH}), M_{d_{\bH}} \phi_{d_{\bH}}(H x_{\bH})\rangle_{\mathcal{H}_{d_{\bH}}},
\]
for all $w_{\bH}, x_{\bH}$ in $B_{d_{\bH}}$, and satisfies $M_{d_{\bH}}^\T M_{d_{\bH}} = \mathrm{Id}$, where $\mathrm{Id}$ denotes the identity operator on $\mathcal{H}_{d_{\bH}}$.
\label{lem:taylor_series_kernel}
\end{lemma}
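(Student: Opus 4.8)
The plan is to expand $\mathrm{asin}$ and $\mathrm{asinh}$ in power series about the origin and to recognize each monomial $t\mapsto t^{m}$, evaluated at $t=x^\T w$ (resp.\ $t=w^\T H x$), as the positive‑definite \emph{polynomial kernel} whose canonical feature map is the tensor power $u\mapsto u^{\otimes m}$, since $\langle u^{\otimes m}, v^{\otimes m}\rangle=(u^\T v)^m$. Concretely, on $[-1,1]$ one has the Maclaurin expansions
\[
\mathrm{asin}(t)=\sum_{n\geq 0} c_n\, t^{2n+1},\qquad \mathrm{asinh}(t)=\sum_{n\geq 0} (-1)^n c_n\, t^{2n+1},\qquad c_n=\frac{(2n)!}{4^n (n!)^2 (2n+1)}\geq 0,
\]
and, crucially, $\sum_{n\geq 0} c_n=\mathrm{asin}(1)=\pi/2<\infty$, so both series converge absolutely on the \emph{closed} interval $[-1,1]$.

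For the spherical kernel I would take $\mathcal{H}_{d_{\bS}}=\bigoplus_{n\geq 0}\bigl(\R^{d_{\bS}+1}\bigr)^{\otimes(2n+1)}$ (a Hilbert‑space direct sum of finite‑dimensional tensor‑power spaces) and define $\phi_{d_{\bS}}(u)=\bigl(\sqrt{c_n}\,u^{\otimes(2n+1)}\bigr)_{n\geq 0}$. For $u\in B_{d_{\bS}}$ one has $\norm{u}_2\leq 1$, hence $\norm{\phi_{d_{\bS}}(u)}_{\mathcal{H}_{d_{\bS}}}^2=\sum_n c_n\norm{u}_2^{2(2n+1)}\leq\sum_n c_n=\pi/2$, so $\phi_{d_{\bS}}$ is a well‑defined (bounded) map; applying $\langle u^{\otimes m},v^{\otimes m}\rangle=(u^\T v)^m$ termwise and interchanging the sum with the inner product (justified by the norm bound just shown) yields $\langle\phi_{d_{\bS}}(x_{\bS}),\phi_{d_{\bS}}(w_{\bS})\rangle_{\mathcal{H}_{d_{\bS}}}=\sum_n c_n (x_{\bS}^\T w_{\bS})^{2n+1}=\mathrm{asin}(x_{\bS}^\T w_{\bS})$.

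For the hyperbolic kernel I would reuse the same construction with $d_{\bH}$ in place of $d_{\bS}$, namely $\phi_{d_{\bH}}(u)=\bigl(\sqrt{c_n}\,u^{\otimes(2n+1)}\bigr)_{n\geq 0}$ into $\mathcal{H}_{d_{\bH}}=\bigoplus_{n\geq 0}\bigl(\R^{d_{\bH}+1}\bigr)^{\otimes(2n+1)}$, and absorb the alternating signs into the block‑diagonal operator $M_{d_{\bH}}$ acting as $+\mathrm{Id}$ on the even‑$n$ blocks and $-\mathrm{Id}$ on the odd‑$n$ blocks. Then $M_{d_{\bH}}^\T M_{d_{\bH}}=\mathrm{Id}$ is immediate, and $M_{d_{\bH}}$ is indefinite because its spectrum contains both $+1$ (block $n=0$) and $-1$ (block $n=1$). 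Since $H$ is diagonal with $\pm 1$ entries, $\norm{H x_{\bH}}_2=\norm{x_{\bH}}_2\leq 1$, so $Hx_{\bH}\in B_{d_{\bH}}$, $|w_{\bH}^\T H x_{\bH}|\leq 1$ by Cauchy–Schwarz, and the series converges absolutely, giving
\[
\langle\phi_{d_{\bH}}(w_{\bH}),M_{d_{\bH}}\phi_{d_{\bH}}(Hx_{\bH})\rangle_{\mathcal{H}_{d_{\bH}}}=\sum_{n\geq 0}(-1)^n c_n\bigl(w_{\bH}^\T H x_{\bH}\bigr)^{2n+1}=\mathrm{asinh}(w_{\bH}^\T H x_{\bH}).
\]

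The termwise tensor identity and the sum/inner‑product interchange are routine once the norm convergence is in place. The one point I would be careful to state explicitly is the \emph{boundary} behaviour: because $B_{d_{\bS}}$ and $B_{d_{\bH}}$ are the \emph{closed} unit balls, the arguments of $\mathrm{asin}$ and $\mathrm{asinh}$ may equal $\pm 1$, so one cannot merely cite radius of convergence $1$ — one needs $\sum_n c_n=\pi/2<\infty$, which is exactly what makes $\phi_{d_{\bS}},\phi_{d_{\bH}}$ bounded on the closed ball. Everything else (that the direct sums are genuine Hilbert spaces, that $M_{d_{\bH}}$ is a bounded self‑adjoint involution) is immediate from the construction; I expect this closed‑ball convergence bookkeeping to be the only mild obstacle.
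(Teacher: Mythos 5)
Your proposal is correct and follows essentially the same route as the paper: expand $\mathrm{asin}$ and $\mathrm{asinh}$ in their Maclaurin series, use the nonnegativity of the coefficients to obtain a positive-definite kernel for the spherical part, and absorb the alternating signs of the hyperbolic series into a diagonal involution $M_{d_{\bH}}$ with $M_{d_{\bH}}^\T M_{d_{\bH}}=\mathrm{Id}$. The only difference is that you construct the feature map explicitly via weighted tensor powers and verify convergence on the closed ball through $\sum_n c_n=\pi/2$, whereas the paper cites the analogous construction from Steinwart's Theorem 2.1; your version is a self-contained instance of the same argument.
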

\begin{proof}
The Taylor series expansion of $\mathrm{asin}(\cdot)$ can be used to write
\begin{equation}\label{eq:asin_taylor}
	 \mathrm{asin}(w_{\bS}^\T x_{\bS}) = \sum_{n=0}^{\infty} \frac{(2n)!}{2^{2n}(n!)^2(2n+1)}(w_{\bS}^\T x_{\bS})^{2n+1},
\end{equation}
where $w_{\bS}, x_{\bS} \in B_{d_{\bS}}$. All the coefficients of this Taylor series are nonnegative. Hence, from Theorem 2.1 in~\citep{steinwart2001influence}, this function is a valid positive-definite kernel. Therefore, there is a Hilbert space $\mathcal{H}_{d_{\bS}}$ endowed with an inner product $\langle \cdot, \cdot \rangle_{\mathcal{H}_{d_{\bS}}}$ such that 
\[
	\mathrm{asin}(w_{\bS}^\T x_{\bS})  = \langle \phi_{d_{\bS}}(w_{\bS}), \phi_{d_{\bS}}(x_{\bS})\rangle_{\mathcal{H}_{d_{\bS}}},
\]
for $w_{\bS}, x_{\bS} \in B_{d_{\bS}}$ and vectors $\phi_{d_{\bS}}(w_{\bS}), \phi_{d_{\bS}}(x_{\bS}) \in \mathcal{H}_{d_{\bS}}$.

On the other hand, we have
\[
	\mathrm{asinh}(w_{\bH}^\T H x_{\bH}) =\sum_{n=0}^{\infty} (-1)^n \frac{(2n)!}{2^{2n}(n!)^2(2n+1)}(w_{\bH}^\T H x_{\bH})^{2n+1}, 
\]
where $w_{\bH}, x_{\bH} \in B_{d_{\bH}}$. This Taylor series is nearly identical to the one given in~\eqref{eq:asin_taylor}, except for the alternating signs of the coefficients. The analytical construction of the vector $\phi_{d_{\bH}}(x)$ in  \citep{steinwart2001influence} gives a straightforward way to define an indefinite operator $M_{d_{\bH}}: \mathcal{H}_{d_{\bH}} \rightarrow \mathcal{H}_{d_{\bH}}$ such that $M_{d_{\bH}}^\T M_{d_{\bH}}= \mathrm{Id}$, and
\begin{align*}
\mathrm{asinh}(w_{\bH}^\T H x_{\bH})  &= \langle \phi_{d_{\bH}}(w_{\bH}), M_{d_{\bH}} \phi_{d_{\bH}}(H x_{\bH}) \rangle_{\mathcal{H}_{d_{\bH}}} \\
&= \langle \phi_{d_{\bH}}(w_{\bH}),  M_{d_{\bH}}\psi_{d_{\bH}}(x_{\bH}) \rangle_{\mathcal{H}_{d_{\bH}}},
\end{align*}
where $\psi_{d_{\bH}}(x_{\bH})  = \phi_{d_{\bH}}(H x_{\bH})$. Note that $M_{d_{\bH}}$ can be represented as an infinite-dimensional diagonal matrix with diagonal elements $\pm 1$ that capture the signs of the Taylor series coefficients. This completes the proof.
\end{proof}

\Cref{lem:taylor_series_kernel} shows that spherical and hyperbolic kernels can be represented as inner products of \emph{feature points} and \emph{feature parameters} in a Hilbert space. Further analysis, detailed in \Cref{sec:perceptron_proof}, allows us to write the classifier \eqref{eq:mixed_curv_classifier} as 
\begin{equation} \label{eq:decision_rule_RKHS}
	l_{w}^{\mathcal{M}}(x) = \mathrm{sgn} \big( \langle \phi(w), M \psi(x) \rangle_{\mathcal{H}} \big),
\end{equation}
where $\phi(w), \psi(x) \in \mathcal{H}$ are two vectors in the Hilbert space $\mathcal{H}$, $\langle \cdot, \cdot \rangle_{\mathcal{H}}$ is the inner product defined on $\mathcal{H}$, and $M$ is an indefinite linear operator such that $M^{\T} M = \mathrm{Id}$. The separable form \eqref{eq:decision_rule_RKHS} allows us to formulate the update rule for the perceptron in $\mathcal{H}$ as
\begin{equation} \label{eq:update_product_space}
	\phi_{\mathcal{H}}^{k+1} = \phi_{\mathcal{H}}^{k} + y_n M \psi(x_n)
\end{equation}
for any misclassified point $x_n$, i.e., any point that satisfies $y_n \langle \phi_{\mathcal{H}}^{k}, M \psi(x_n)\rangle_{\mathcal{H}} < 0$. Since the decision rule~\eqref{eq:decision_rule_RKHS} only depends on the inner products of vectors in $\mathcal{H}$, the classifier trained by the proposed RKHS perceptron computes the label of a given point in $\mathcal{M}$ as described in~\Cref{alg:mixed_curvature_perceptron}. The interested reader is referred to \Cref{sec:perceptron_proof} for the detailed derivation and analysis of \Cref{alg:mixed_curvature_perceptron}.
\begin{algorithm}[!t]
	\caption{\small A Product Space Form Perceptron}
   	\label{alg:mixed_curvature_perceptron}
	\begin{algorithmic}
	\STATE {\bfseries Input:} $(x_n, y_n)_{n=1}^{N}$:  point-label pairs in $\mathcal{M} \times \set{-1, 1}$;
	\STATE {\bfseries Initialization:} $k = 0$, $n = 1$, $x_n \bydef (x_{\bE,n},x_{\bS,n},x_{\bH,n})$, $R = \max_{n \in [N]} \norm{x_{\bH,n}}_{2}$;
	\REPEAT 
	\IF{$ \mathrm{sgn} \big( g_{k}(x_n) \big) \neq y_n $}
	\STATE $K(x,x_n) \bydef 1+x_{\bE}^\T x_{\bE,n}+ \alpha_{\bS}\mathrm{asin}(C_{\bS} x_{\bS}^\T x_{\bS,n}) +  \alpha_{\bH}\mathrm{asin}( R^{-2} x_{\bH}^\T x_{\bH,n})$
	\STATE $g_{k+1}( x ) \leftarrow g_{k}(x) +y_n K(x,x_n)$
	\STATE $k \leftarrow k+1$
	\ENDIF
	\STATE $n \leftarrow \mathrm{mod}(n, N)+1$
   	\UNTIL{convergence criterion is met}
	\end{algorithmic}
\end{algorithm}
In~\Cref{thm:mixed_curvature_perceptron}, we prove that the product space perceptron in~\Cref{alg:mixed_curvature_perceptron} converges in a finite number of steps.
\begin{tcolorbox}[colback=red!4!white,colframe=red!4!white]
\begin{restatable}[]{theorem}{mixed_curvature_perceptron}\label{thm:mixed_curvature_perceptron}
Let $(x_n, y_n)_{n=1}^{N}$ be points in a compact subset of $\mathcal{M}$ with labels in $\set{-1, 1}$, and $\norm{x_{\bH,n}}_2 \leq R$ for all $n \in [N]$. If the point set is $\varepsilon$-margin linearly separable and $\norm{w_{\bH}}_2 \leq 1/R$, then~\Cref{alg:mixed_curvature_perceptron} converges in $O(\frac{1}{\varepsilon^2}) $ steps.
\end{restatable}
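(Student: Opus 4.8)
The plan is to reduce the statement to the classical Novikoff perceptron bound, executed inside the reproducing kernel Hilbert space $\mathcal{H}$ supplied by \Cref{lem:taylor_series_kernel}. First I would use \Cref{lem:taylor_series_kernel}, together with the assembly of the Euclidean, spherical and hyperbolic feature maps described in \Cref{sec:perceptron_proof}, to put the $\varepsilon$-margin hypothesis \eqref{eq:mixed_curv_classifier} into the separable form \eqref{eq:decision_rule_RKHS}: there is a vector $\phi(w^\star)\in\mathcal{H}$ built from the separating parameters $(w_{\bE},b,w_{\bS},w_{\bH})$, feature points $\psi(x_n)\in\mathcal{H}$ built from $x_n=(x_{\bE,n},x_{\bS,n},x_{\bH,n})$, and a block-diagonal operator $M=\mathrm{Id}\oplus\mathrm{Id}\oplus M_{d_{\bH}}$ with $M^\T M=\mathrm{Id}$, such that the \emph{original} decision value equals $\langle\phi(w^\star),M\psi(x_n)\rangle_{\mathcal{H}}$; hence $y_n\,\langle\phi(w^\star),M\psi(x_n)\rangle_{\mathcal{H}}\ge\varepsilon$ for every $n\in[N]$. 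Absorbing the metric weights $\alpha_S$ symmetrically into $\phi$ and $\psi$ and rescaling $w_{\bH}\mapsto Rw_{\bH}$, $x_{\bH,n}\mapsto R^{-1}x_{\bH,n}$, the assumptions $\norm{w_{\bH}}_2\le 1/R$, $\norm{x_{\bH,n}}_2\le R$ and the norm constraints on $w_{\bE},w_{\bS}$ are exactly what places all arguments of the feature maps in the unit balls $B_{d_{\bS}},B_{d_{\bH}}$ on which \Cref{lem:taylor_series_kernel} is valid, so $W\bydef\norm{\phi(w^\star)}_{\mathcal{H}}$ is finite (it is controlled by the curvatures, the weights $\alpha_S$, the bias $b$, and $R$).

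Next I would run the standard two-sided estimate on the iterates $\phi^k_{\mathcal{H}}$ produced by the update \eqref{eq:update_product_space}, initialized at $\phi^0_{\mathcal{H}}=0$ (a finite sum of elements of $\mathcal{H}$ throughout). For the progress bound, when the $k$-th update is triggered by a misclassified $x_n$, symmetry of the inner product and the margin give
\[
\langle\phi^{k+1}_{\mathcal{H}},\phi(w^\star)\rangle_{\mathcal{H}}=\langle\phi^{k}_{\mathcal{H}},\phi(w^\star)\rangle_{\mathcal{H}}+y_n\langle M\psi(x_n),\phi(w^\star)\rangle_{\mathcal{H}}\ge\langle\phi^{k}_{\mathcal{H}},\phi(w^\star)\rangle_{\mathcal{H}}+\varepsilon,
\]
so after $k$ updates $\langle\phi^k_{\mathcal{H}},\phi(w^\star)\rangle_{\mathcal{H}}\ge k\varepsilon$. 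For the norm-growth bound, using that $x_n$ was misclassified, i.e. $y_n\langle\phi^k_{\mathcal{H}},M\psi(x_n)\rangle_{\mathcal{H}}<0$, and that $\norm{M\psi(x_n)}^2_{\mathcal{H}}=\langle\psi(x_n),M^\T M\psi(x_n)\rangle_{\mathcal{H}}=\norm{\psi(x_n)}^2_{\mathcal{H}}$,
\[
\norm{\phi^{k+1}_{\mathcal{H}}}^2_{\mathcal{H}}\le\norm{\phi^{k}_{\mathcal{H}}}^2_{\mathcal{H}}+\norm{\psi(x_n)}^2_{\mathcal{H}}\le\norm{\phi^{k}_{\mathcal{H}}}^2_{\mathcal{H}}+\rho^2,
\]
where $\rho^2\bydef\sup_n\norm{\psi(x_n)}^2_{\mathcal{H}}=\sup_n K(x_n,x_n)$ with $K$ the kernel of \Cref{alg:mixed_curvature_perceptron}; since the sample lies in a compact set this is finite, $\rho^2\le\sup_n(1+\norm{x_{\bE,n}}_2^2)+\alpha_{\bS}\,\mathrm{asin}(1)+\alpha_{\bH}\,\mathrm{asin}(1)$. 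Hence $\norm{\phi^k_{\mathcal{H}}}^2_{\mathcal{H}}\le k\rho^2$.

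Finally I would combine the two estimates via Cauchy--Schwarz: $k\varepsilon\le\langle\phi^k_{\mathcal{H}},\phi(w^\star)\rangle_{\mathcal{H}}\le\norm{\phi^k_{\mathcal{H}}}_{\mathcal{H}}\,W\le\sqrt{k}\,\rho\,W$, forcing $k\le\rho^2W^2/\varepsilon^2$. Since every update corresponds to a mistake, the algorithm performs at most $\rho^2W^2/\varepsilon^2=O(1/\varepsilon^2)$ updates, after which the classifier is consistent with all labels and a single mistake-free pass over $\mathcal{X}$ certifies convergence. The main obstacle — and the only part genuinely specific to product space forms rather than generic perceptron theory — is the first step: exhibiting the feature maps and the operator $M$ with $M^\T M=\mathrm{Id}$ so that the indefinite hyperbolic kernel $\mathrm{asinh}([w_{\bH},x_{\bH}])$ is carried entirely by $M$ (and therefore drops out of the norm-growth estimate, where only $M^\T M$ appears) while the original margin $\varepsilon$ survives \emph{verbatim} as an inner-product margin in $\mathcal{H}$, and simultaneously guaranteeing that $\phi(w^\star)$ and the $\psi(x_n)$ have finite $\mathcal{H}$-norm; this is precisely where the hypotheses $\norm{w_{\bH}}_2\le 1/R$, $\norm{x_{\bH,n}}_2\le R$, and compactness of the sample are indispensable. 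Everything after that is the verbatim Novikoff argument.
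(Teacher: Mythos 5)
Your proposal is correct and follows essentially the same route as the paper's proof in Appendix~B: lift the classifier into the RKHS via the Taylor-series kernels of \Cref{lem:taylor_series_kernel}, let the block-diagonal operator $M$ with $M^\T M=\mathrm{Id}$ absorb the indefiniteness of the hyperbolic kernel so it cancels in the norm-growth estimate, and then run the standard Novikoff two-sided bound, with the constraint $\norm{w_{\bH}}_2\le 1/R$ ensuring $\norm{\phi(w^\star)}_{\mathcal{H}}<\infty$. Your bound $\rho^2\le\sup_n(1+\norm{x_{\bE,n}}_2^2)+(\alpha_{\bS}+\alpha_{\bH})\,\mathrm{asin}(1)$ coincides with the paper's constant $B=1+R_{\bE}^2+(\alpha_{\bS}+\alpha_{\bH})\frac{\pi}{2}$.
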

\end{tcolorbox}
The constraint $\norm{w_{\bH}}_2 \leq 1/R$ is necessary to establish the convergence result for the proposed update rule. From a theoretical point of view, the bound ensures that parameter features in $\mathcal{H}$ have finite norms, i.e., $\langle \phi(w) , \phi(w) \rangle_{\mathcal{H}} < \infty$ for $\phi(w) \in \mathcal{H}$. However, this norm constraint is not compatible with the definition of product space form classifiers detailed in \Cref{prop:linear_classifier_in_general_product_space_forms}. In practice, we can normalize the hyperbolic component of data points by first modifying $R$ in \Cref{thm:mixed_curvature_perceptron}, then scaling the hyperbolic metric $g^{\bH}$, and subsequently adjusting the weight vector $\alpha_{\bH}$. This process lets us tweak the norm constraint $\norm{w_{\bH}}_2 \leq 1/R$ with a great degree of flexibility to construct classifiers that perform reliably on real-world datasets.

{\bf Related work.} Linear classifiers in spherical spaces have been studied in a number of papers~\citep{novikoff1963convergence,dasgupta2009analysis}. More recent work has focused on linear classifiers in the Poincar\'e model of hyperbolic spaces, and notably, in the context of hyperbolic neural networks~\citep{ganea2018hyperbolicNN}. A purely hyperbolic perceptron was described in~\citep{weber2020robust} but exhibits  converge issues (see \Cref{sec:further_discussions_hyperbolic,sec:Convergence_Analysis_Hyperbolic_Perceptron} for detailed explanations regarding the problems associated with the approach proposed therein). We therefore describe next a modified update rule for a purely hyperbolic perceptron which is of independent interest given many emerging learning paradigms in hyperbolic spaces. Our hyperbolic perceptron uses an appropriate update direction and provably converges, as described below and proved in \Cref{sec:hyperbolic_perceptron_proof}. 
\begin{tcolorbox}[colback=red!4!white,colframe=red!4!white]
\begin{restatable}[]{theorem}{hyperbolic_perceptron}  \label{thm:hyperbolic_perceptron}
Let $(x_n, y_n)_{n=1}^{N}$ be a labeled point set from a bounded subset of $\bH^{d_{\bH}}$. Assume the point set is linearly separable by a margin $\varepsilon$. Then, the hyperbolic perceptron with the update rule $\mathrm{sgn} ( [w^k,x_n] ) \neq y_n: w^{k+1} = w^k + y_n H x_n$, where $H$ is given in \eqref{eq:lorentz_inner_product}, converges in $O \Big( \frac{1}{\mathrm{sinh}^2( \varepsilon)} \Big)$ steps.
\end{restatable}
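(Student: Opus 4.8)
The plan is to carry out a Lorentzian version of the classical Novikoff perceptron argument~\citep{novikoff1963convergence}, being careful at each step about which bilinear form appears. Without loss of generality normalize $C_{\bH}=-1$ and let $w^\star$ be the unit time-like separating vector, $[w^\star,w^\star]=1$, realizing the margin, so that $y_n\,\mathrm{asinh}([w^\star,x_n])\ge\varepsilon$ for all $n\in[N]$; since $\mathrm{asinh}$ is odd and strictly increasing, this is equivalent to $y_n[w^\star,x_n]\ge\sinh(\varepsilon)$. Initialize $w^0=0$ and let $w^1,w^2,\dots$ denote the iterates at successive mistakes, so that $w^{k+1}=w^k+y_nHx_n$ is applied exactly when $\mathrm{sgn}([w^k,x_n])\ne y_n$, i.e.\ when $y_n[w^k,x_n]\le 0$. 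The single algebraic fact that makes the analysis go through is $H^2=I$ (equivalently $H^\T H=I$), which converts any expression of the form $\langle\,\cdot\,,Hx_n\rangle$ into the Lorentzian pairing $[\,\cdot\,,x_n]$, placing the margin hypothesis and the mistake condition exactly where they are needed.

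First I would prove the progress bound by tracking the \emph{Euclidean} inner product with $w^\star$: at each mistake, $\langle w^\star,w^{k+1}\rangle=\langle w^\star,w^k\rangle+y_n\langle w^\star,Hx_n\rangle=\langle w^\star,w^k\rangle+y_n[w^\star,x_n]\ge\langle w^\star,w^k\rangle+\sinh(\varepsilon)$, so after $M$ mistakes $\langle w^\star,w^M\rangle\ge M\sinh(\varepsilon)$. Second, the norm bound: $\norm{w^{k+1}}_2^2=\norm{w^k}_2^2+2y_n\langle w^k,Hx_n\rangle+\norm{Hx_n}_2^2=\norm{w^k}_2^2+2y_n[w^k,x_n]+\norm{x_n}_2^2\le\norm{w^k}_2^2+R^2$, where the cross term is nonpositive precisely because the update is triggered by a mistake, and $\norm{x_n}_2\le R$ since on the 'Loid model $\norm{x_n}_2^2=1+2\norm{z_n}_2^2$ is bounded on a bounded subset of $\bH^{d_{\bH}}$. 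Hence $\norm{w^M}_2^2\le MR^2$.

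Combining via the (Euclidean) Cauchy--Schwarz inequality, $M^2\sinh^2(\varepsilon)\le\langle w^\star,w^M\rangle^2\le\norm{w^\star}_2^2\,\norm{w^M}_2^2\le\norm{w^\star}_2^2R^2M$, so $M\le\norm{w^\star}_2^2R^2/\sinh^2(\varepsilon)$. Since the algorithm halts as soon as it completes a full sweep of the $N$ points without an update, the total number of updates is at most this $M$, which is $O(1/\sinh^2(\varepsilon))$ with the constant depending only on $R$ and the fixed separating vector $w^\star$.

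The crux is not any single computation but the consistent bookkeeping of the two metrics: the Lorentzian form does not play well with either the additive Euclidean update $y_nHx_n$ or the Euclidean Cauchy--Schwarz step, so the argument closes only if one measures $\langle w^\star,w^k\rangle$ and $\norm{w^k}_2$ in the Euclidean sense and uses $H^2=I$ to turn the residual factors of $H$ into the Lorentzian quantities $[w^\star,x_n]$ and $[w^k,x_n]$ right where the $\varepsilon$-margin and the mistake inequality are available. This is exactly the step at which a naive update direction (e.g.\ $x_n$ itself, or a $\mathrm{log}$-map-based direction as in prior work) fails; choosing $Hx_n$ is what guarantees $[w^{k+1},x_n]=[w^k,x_n]+y_n\norm{x_n}_2^2$ moves monotonically toward correct classification. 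A minor point to check separately is simply that $R=\max_n\norm{x_n}_2<\infty$, which is immediate for a finite data set and is where the boundedness hypothesis enters the final rate as a constant.
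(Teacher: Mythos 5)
Your proposal is correct and follows essentially the same route as the paper's proof: a Novikoff-style argument tracking the Euclidean inner product $\langle w^\star, w^k\rangle$ and Euclidean norm $\norm{w^k}_2$, using $H^\T H = I$ to convert the update's cross terms into the Lorentzian pairings where the margin $y_n[w^\star,x_n]\ge\sinh(\varepsilon)$ and the mistake condition $y_n[w^k,x_n]\le 0$ apply, then closing with Cauchy--Schwarz to get $M\le \norm{w^\star}_2^2R^2/\sinh^2(\varepsilon)$.
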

\end{tcolorbox}
For small classification margin $\varepsilon$, we have $\mathrm{sinh}^2( \varepsilon) \approx \varepsilon^2$. Hence, for borderline linearly separable data points, \Cref{thm:hyperbolic_perceptron} proves that the hyperbolic and Euclidean perceptron have the same convergence rate.

\subsection{A Product Space Form SVM}\label{sec:mixed_curvature_svm}
In the previous section, we showed that the classification criterion for linear classifiers defined in~\Cref{prop:linear_classifier_in_general_product_space_forms} is a linear function of the \emph{feature} vectors, or, more precisely, of $\set{M \psi(x_n)}_{n \in [N]}$. This fact and the subsequent performance guarantees are due to the update rule operating in the RKHS which, in effect, lifts a finite dimensional point to a feature vector. Here, we use the kernel space formalism to formulate large-margin classifiers in product space forms. The idea behind our approach is to use the feature vector representation of linear classifiers. A closed-form expression for the distance between the points and the classification boundary is not available, although it can still be upper bounded as explained the remark following~\Cref{prop:linear_classifier_in_general_product_space_forms}. The described solution complements the prior work on hyperbolic SVMs~\citep{cho2019large,chien2021highly}, and as will be seen from the simulation results, improves upon the first line of work.

Let $x_1, \ldots, x_N \in \mathcal{M}$ be a collection of points. We showed that the decision rule \eqref{eq:decision_rule_RKHS} is a linear function of the feature vectors, i.e., $M\psi(x_n)$. Hence, the \emph{representer theorem}~\citep{scholkopf2001generalized} allows one to express the set of feasible parameters in the space $\mathcal{H}$ as linear combinations of measured feature vectors. More precisely,
\[
\phi( w ) = \sum_{n \in [N]} \beta_n M \psi(x_n), \mbox{ where } \ \sum_{n \in [N]} \beta_n^2 < \infty,
\]
and the parameter vector $w$ is implicit in the expressions for $\phi$ and $\beta$. The classification criterion is a linear function in $\beta = (\beta_1, \ldots, \beta_N)$, i.e.,
\begin{equation}\label{eq:product_space_classifier_kernel}
l_{w}^{\mathcal{M}}(x) = \mathrm{sgn} \big( \sum_{n \in [N]} \beta_n \langle M\psi(x_n), M\psi(x) \rangle_{\mathcal{H}} \big) =  \mathrm{sgn} \big( \sum_{n \in [N]} \beta_n K( x_n, x) \big),
\end{equation}
where $K(x,x_n) = 1+x_{\bE}^\T x_{\bE,n}+ \alpha_{\bS}\mathrm{asin}(C_{\bS} x_{\bS}^\T x_{\bS,n}) +  \alpha_{\bH}\mathrm{asin}( R^{-2} x_{\bH}^\T x_{\bH,n})$. In  \Cref{alg:mixed_curvature_perceptron}, the parameter features (in RKHS) are sequentially updated after each missclassification. Instead, here we directly optimize the weight vector $\beta = (\beta, \ldots, \beta_N)$ to ensure the maximum separability condition. In the following proposition, we derive necessary conditions for the vector $\beta$ that enable distance-based formulations of classifiers in RKHS.
\begin{tcolorbox}[colback=red!4!white,colframe=red!4!white]
\begin{restatable}[]{proposition}{three_conditions}  \label{prop:three_conditions}
For the classifier in~\eqref{eq:product_space_classifier_kernel}, the following claims hold:
\begin{itemize}[leftmargin=*] \small
\item $\langle w_{\bE}, w_{\bE} \rangle = \alpha_{\bE}^2 \Rightarrow \beta^\T K_{\bE} \beta = \alpha_{\bE}^2$, where $K_{\bE} = \big( \langle x_{\bE,i}, x_{\bE,j} \rangle \big)_{i,j \in [N]}$
\item $\langle w_{\bS}, w_{\bS} \rangle = C_{\bS} \Rightarrow \beta^\T K_{\bS} \beta = \frac{\pi}{2}$, where $K_{\bS} = \big( \mathrm{asin}( C_{\bS} \langle x_{\bS,i}, x_{\bS,j}\rangle)  \big)_{i,j \in [N]}$
\item $[ w_{\bH} , w_{\bH}] = -C_{\bH} \Rightarrow \beta^{\T} K_{\bH} \beta = \mathrm{asinh}(-R^2C_{\bH})$,  where $K_{\bH} = \big( \mathrm{asinh}(R^{-2}[ x_{\bH,i}, x_{\bH, j}]  \big )_{i,j \in [N]}$.
\end{itemize}
\end{restatable}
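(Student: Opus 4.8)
The plan is to read each of the three constraints off the representer-theorem expansion $\phi(w) = \sum_{n\in[N]}\beta_n M\psi(x_n)$ established just above the statement, by evaluating a single scalar --- the $M$-quadratic form of the parameter feature vector in each factor --- in two ways, once via the feature-map identities of \Cref{lem:taylor_series_kernel} and once via the representer expansion, and then matching. First I would record the structural facts: $\mathcal{H}$, the maps $\phi,\psi$ and the operator $M$ decompose as orthogonal direct sums over the three factors $S\in\set{\bE,\bS,\bH}$ (with the bias coordinate glued to the Euclidean block), so the representer identity holds blockwise; $M$ restricts to the identity on the Euclidean and spherical blocks (those kernels being positive definite, cf.\ the discussion before \Cref{lem:taylor_series_kernel}) and to the indefinite operator $M_{d_{\bH}}$ on the hyperbolic block; and $M^{\T}M = \mathrm{Id}$ with $M$ self-adjoint, hence $M^2 = \mathrm{Id}$. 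It is convenient to rephrase the hyperbolic part of \Cref{lem:taylor_series_kernel} by absorbing the fixed involution $H$ into $M_{d_{\bH}}$, obtaining $\langle \phi_{d_{\bH}}(u), M_{d_{\bH}}\phi_{d_{\bH}}(v)\rangle = \mathrm{asinh}([u,v])$ for $u,v\in B_{d_{\bH}}$, with $M_{d_{\bH}}^{\T}M_{d_{\bH}} = \mathrm{Id}$ still holding.

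For each block set $q_S \bydef \langle \phi(w)_S,\, M\,\phi(w)_S\rangle_{\mathcal{H}_S}$ (which equals the genuine squared norm on the Euclidean and spherical blocks). On the feature side, up to an overall metric-weight factor $\alpha_S$ that will cancel: $\phi(w)_{\bE} = w_{\bE}$, so $q_{\bE} = \langle w_{\bE}, w_{\bE}\rangle$; $\phi(w)_{\bS} = \phi_{d_{\bS}}(C_{\bS}^{-1/2}w_{\bS})$, so \Cref{lem:taylor_series_kernel} gives $q_{\bS} = \mathrm{asin}\big(C_{\bS}^{-1}\langle w_{\bS},w_{\bS}\rangle\big)$; and $\phi(w)_{\bH} = \phi_{d_{\bH}}(R\,w_{\bH})$, so $q_{\bH} = \mathrm{asinh}\big([\,Rw_{\bH},\,Rw_{\bH}\,]\big) = \mathrm{asinh}\big(R^2[w_{\bH},w_{\bH}]\big)$. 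Substituting the norm constraints $\langle w_{\bE}, w_{\bE}\rangle = \alpha_{\bE}^2$, $\langle w_{\bS}, w_{\bS}\rangle = C_{\bS}$, $[w_{\bH},w_{\bH}] = -C_{\bH}$ collapses these to $q_{\bE} = \alpha_{\bE}^2$, $q_{\bS} = \mathrm{asin}(1) = \frac{\pi}{2}$ (here $C_{\bS}^{-1/2}w_{\bS}$ lands exactly on the boundary sphere of $B_{d_{\bS}}$, where $\mathrm{asin}$ is maximal), and $q_{\bH} = \mathrm{asinh}(-R^2 C_{\bH})$ --- the three right-hand sides claimed.

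On the representer side, substitute $\phi(w)_S = \sum_n\beta_n (M\psi(x_n))_S$ into $q_S$ and commute $M$ through using $M$ self-adjoint and $M^2 = \mathrm{Id}$; this reduces $q_S$ to $\sum_{i,j}\beta_i\beta_j\langle \psi(x_i)_S,\, M\psi(x_j)_S\rangle = \beta^{\T}K_S\beta$, and by the feature-map identities of \Cref{lem:taylor_series_kernel} applied to the rescaled data the $(i,j)$ entry $\langle \psi(x_i)_S, M\psi(x_j)_S\rangle$ is $\langle x_{\bE,i},x_{\bE,j}\rangle$ for $S=\bE$, $\mathrm{asin}(C_{\bS}\langle x_{\bS,i},x_{\bS,j}\rangle)$ for $S=\bS$, and $\mathrm{asinh}([R^{-1}x_{\bH,i},R^{-1}x_{\bH,j}]) = \mathrm{asinh}(R^{-2}[x_{\bH,i},x_{\bH,j}])$ for $S=\bH$ --- precisely the matrices $K_{\bE},K_{\bS},K_{\bH}$ of the statement. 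Equating the feature-side and representer-side values of $q_S$ for each factor (the overall $\alpha_S$'s appear on both sides and cancel) yields the three implications.

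I expect the hyperbolic block to be the only genuine difficulty. One must work with the indefinite pairing $\langle\cdot, M\cdot\rangle$ rather than the true Hilbert-space norm (the latter would produce $\mathrm{asin}(\|w_{\bH}\|_2^2)$, which is not an invariant of the hyperbolic classifier), keep the involution $H$ and the $R$ versus $R^{-1}$ rescalings absorbed into $M$ and the features consistently so that the decision rule \eqref{eq:decision_rule_RKHS} is reproduced and the pairing delivers a Lorentzian product, and --- crucially --- check that the rescaled argument $Rw_{\bH}$ lies inside the domain $B_{d_{\bH}}$ of $\phi_{d_{\bH}}$, which holds precisely under the standing hypothesis $\|w_{\bH}\|_2 \le 1/R$ (cf.\ \Cref{thm:mixed_curvature_perceptron}); note that $[w_{\bH},w_{\bH}] = -C_{\bH}$ alone does not bound $\|w_{\bH}\|_2$. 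The Euclidean and spherical cases are routine once the blockwise splitting and the trivial action of $M$ there are noted, the constant $\frac{\pi}{2}$ being simply $\mathrm{asin}(1)$.
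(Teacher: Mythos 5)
Your proof is correct and follows essentially the same route as the paper's: expand $\phi(w)$ blockwise via the representer identity and evaluate each block's $M$-quadratic form in two ways --- once through the kernel identities of the Taylor-series lemma (yielding $\alpha_{\bE}^2$, $\mathrm{asin}(1)=\frac{\pi}{2}$, and $\mathrm{asinh}(-R^2C_{\bH})$ from the norm constraints) and once through the expansion (yielding $\beta^\T K_S \beta$). The only cosmetic difference is that you absorb the involution $H$ into $M_{d_{\bH}}$ at the outset, which plays exactly the role of the paper's \Cref{lem:hyperbolic_phi_and_H} relating $\phi_{d_{\bH}}(RHw_{\bH})$ to the representer coefficients.
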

\end{tcolorbox}
The result of~\Cref{prop:three_conditions} lets us define a product space form classifier that satisfies the equality constraints in~\Cref{prop:linear_classifier_in_general_product_space_forms}. To do so, we define the following constraint sets:
\begin{align}
\mathcal{A}_{\bE} &= \set{ x \in \R^{N} : x^\T K_{\bE} x = \alpha_{\bE}^2} \label{eq:beta_eulicean} \\
\mathcal{A}_{\bS} &= \set{x \in \R^{N}: x^\T K_{\bS} x = \frac{\pi}{2} } \label{eq:beta_spherical} \\
\mathcal{A}_{\bH} &= \set{x \in \R^{N}: x^\T K_{\bH} x = \mathrm{asinh}(-R^2C_{\bH}) } \label{eq:beta_hyperbolic}.
\end{align}
In the product space form SVM, we ask for a weight vector $\beta \in \mathcal{A}_{\bE} \cap \mathcal{A}_{\bS} \cap \mathcal{A}_{\bH}$ such that the classification margin is maximized, i.e.,
\[
\max_{\beta  \in \mathcal{A}_{\bE} \cap \mathcal{A}_{\bS} \cap \mathcal{A}_{\bH} } \varepsilon \ \ \mbox{such that} \ \  y\sum_{n \in [N]} \beta_n K( x_n, x) \geq \varepsilon,
\]
for all $(x,y) \in \mathcal{X}$. To convexify the Euclidean and spherical constraint sets \eqref{eq:beta_eulicean} and \eqref{eq:beta_spherical}, we replace them with their convex hulls, i.e.,
\[
\mathrm{convhull}(\mathcal{A}_{\bE}) = \set{ x \in \R^{N} : x^\T K_{\bE} x \leq \alpha_{\bE}^2}, \ \ \mathrm{convhull}(\mathcal{A}_{\bS}) = \set{x \in \R^{N}: x^\T K_{\bS} x \leq \frac{\pi}{2} }.
\]
For the (nonconvex) hyperbolic constraint set \eqref{eq:beta_hyperbolic}, we let $K_{\bH} = K^{+}_{\bH}-K^{-}_{\bH}$ for two positive semidefinite matrices $K^{+}_{\bH}$ and $K^{-}_{\bH}$. Then, we relax the aforementioned set as follows
\begin{equation*}
\widetilde{\mathcal{A}_{\bH}} = \set{x \in \R^{N}: x^{\T} K^{-}_{\bH} x \leq r \ \mbox{and} \  x^{\T} K^{+}_{\bH} x \leq r+\mathrm{asinh}(-R^2C_{\bH}) },
\end{equation*}
where $r$ is a small positive scalar. \Cref{alg:mixed_curvature_svm} summarizes our proposed soft-margin SVM classifier, for points with noisy labels, in product space forms.
\begin{algorithm}[t] 
   \caption{A Product Space Form SVM} \label{alg:mixed_curvature_svm}
\begin{algorithmic}
	\STATE {\bfseries Input:} $(x_n, y_n)_{n=1}^{N}$: a set of point-labels in $\mathcal{M} \times \set{-1, 1}$. Let $r >0$. Then, solve for $\beta$ according to:
	\begin{align*}
	& \text{maximize}       & &  \varepsilon - \sum_{n \in [N]} \zeta_n  &  \\
	& \text{w.r.t}       & &  \varepsilon >0, \set{\zeta_n  \geq 0}&  \\
	& \text{subject to}   &&  \forall n \in [N]: ~~  y_n \sum_{m \in [N]} \beta_{m} K(x_n,x_m) \geq \varepsilon -\zeta_n   \\
	&                   	& & \beta \in \mathrm{convhull} (\mathcal{A}_{\bE}) \cap \mathrm{convhull} (\mathcal{A}_{\bS}) \cap  \widetilde{\mathcal{A}_{\bH}} & 
	\end{align*}
\end{algorithmic}
\end{algorithm} 
\subsection{Signature Estimation}\label{sec:signature_estimation}
One important question that arises in the context of learning in product space forms is \emph{how does one identify the best signature of the embedding space for the task at hand?} In this context, the \emph{optimal} geometry depends on the task-specific performance measure, e.g., classification accuracy, regression error, etc. This type of question has been addressed with limited success in the representation learning literature but only involve simple space forms, e.g., spectral method to estimate the metric signature of graphs~\citep{wilson2014spherical}, discrete version of the triangle comparison theorem for sectional curvature estimation~\citep{gu2018learning}; very little is known about how to find appropriate signatures in product space forms. A notable work, related to node classification tasks, is the \emph{constant curvature graph convolutional network} which allows for a differentiable interpolation between different space forms~\citep{bachmann2020constant}. In most cases, pertaining to simple space forms and arbitrary learning tasks, practitioners in the field heuristically examine a number of signatures to identify one that offers quality performance. Unfortunately, this can not be extended to product space forms due to the combinatorial complexity of possible signatures. A $d$-dimensional product space form can have up to $\sum_{K= 1}^{\frac{d}{2}} 3^K K^{d}$ different signatures.\footnote{Let us assume a $d$-dimensional product space has $K$ simple space forms. There are three choices for each space form and there are at most $\frac{d}{2}$ of these (at least) two-dimensional spaces.}

Nevertheless, a small number of recent works has partially addressed the signature identification problem but in a general context that does not cater to the specific need of a learning task. The authors of~\citep{gu2018learning} showed that one can combine space forms to learn low-dimensional representations for complex graph data with low distortions. In addition, mixed-curvature variational autoencoders (VAEs) have been introduced to streamline non-Euclidean feature extraction~\citep{Skopek2020Mixed-curvature}. These methods can be used to determine the optimal underlying geometry for manifold approximation and unsupervised tasks. Finally, Switch Spaces \citep{zhang2021switch} were proposed to select a mix of $K$ space forms from a given set of $N$ candidate space forms for each data point to be processed. In their proposed approach, space form selections depend on the data points. This makes the signature of the learned product space form switchable depending on the input data and the task-at-hand. 

In our numerical experiments, we use a heuristic bottom-up approach for signature selection. Our greedy algorithm aims at reducing the search space for signatures and relies on two assumptions: (1) We can combine small-dimensional space forms to form large-dimensional (product) space forms, e.g., $\bE^{d_1} \times \bE^{d_2} = \bE^{d_1+d_2}$, $\bS^{d_1} \times \bS^{d_2}$, etc; (2) Linear classifiers in a product space form of dimension $d$ are at least as expressive as linear classifiers in any simple space form of dimension $d$; see ~\Cref{thm:mixed_vc}. The proposed algorithm is illustrated in~\Cref{fig:signature_estimation} and can be summarized as follows: We start with a two-dimensional space form that yields the best classification accuracy compared to other space forms of the same dimension. Then, we consider all possible products of this space form with other two-dimensional space forms. We pick the space which improves upon the classification accuracy computed in the previous step. If the classification accuracy improvement is below a preset threshold, the process terminates. Otherwise, the current product space form is updated by including an additional two-dimensional space form. Note that this greedy approach eliminates space forms $S^{d}$, where $S \in \set{\bS, \bH}$, from the search space. Instead, it may pick product space forms of the type $S^2 \times S^2 \times \ldots$, where $S \in \set{\bS, \bH}$. \Cref{thm:mixed_vc} suggests that the latter types of spaces may provide improved classification performance over simple space forms. The proposed approach reduces the size of the search space to $3^{\frac{d}{2}}$. 

\begin{figure}[t]
  \center
  \includegraphics[width=1 \linewidth]{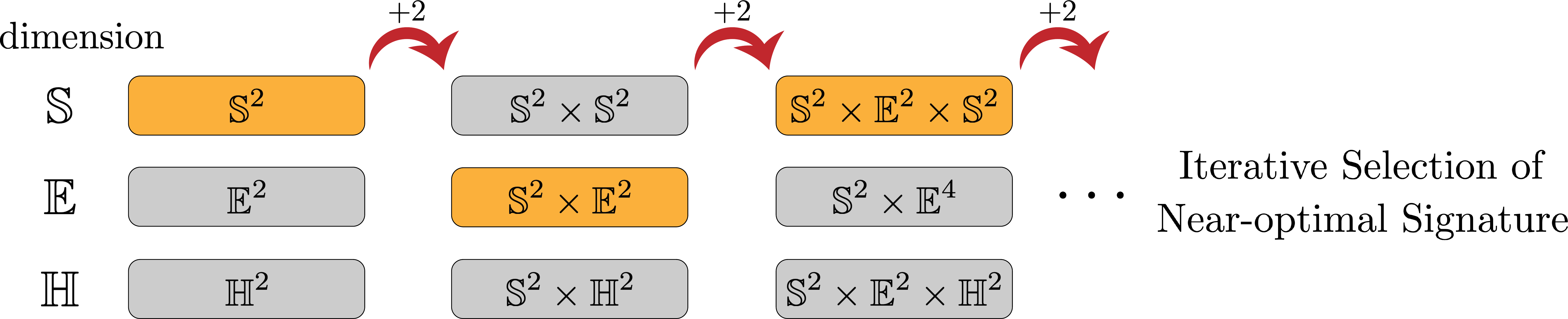}
  \caption{The greedy process for identifying the optimal signature. We first identify the best two-dimensional space form. Then, we compute its product with the best two-dimensional space form that improves upon the representation accuracy. We repeat the process until the increase in accuracy is lower than a preset threshold.}
  \label{fig:signature_estimation}
\end{figure}
\section{Numerical Experiments} \label{sec:Numerical_Experiments}
All  experiments were conducted on a Linux machine with 48 cores, 376GB of system memory. incomplete. Code, datasets, and
documentation needed to reproduce the experimental results are available
at \href{https://github.com/thupchnsky/product-space-linear-classifiers}{https://github.com/thupchnsky/product-space-linear-classifiers}.
\subsection{Synthetic Datasets} \label{sec:Synthetic_Datasets}
We first illustrate the performance of our product space form perceptron (\Cref{alg:mixed_curvature_perceptron}) on synthetic datasets. In order to establish the benefits of product space form embeddings and learning, we compare the performance of the proposed product space form perceptron with the results obtained from a purely Euclidean perceptron. Classification accuracies (macro F1 scores) are reported when training is performed on the entire dataset. In the experiments, data points are sampled from the product space $\bE^{2}\times \bS^{2}\times \bH^{2}$, and viewed as points in $\bE^{8}$ when simulating the Euclidean perceptron.

We generate binary-labeled synthetic data satisfying the $\varepsilon$-margin assumption as follows. First, we randomly generate the optimal decision hyperplane --- with parameter vector $w^*=(w^*_{\bE},0, w^*_{\bS},w^*_{\bH})$  --- under the constraints stated in~\Cref{thm:mixed_curvature_perceptron}. Then, we sample three points $x_{\bE},x_{\bS},x_{\bH}$ from a Gaussian distribution in each of the three space forms, $\bE^{2},\bE^{3},\bE^{3}$; subsequently, we project the latter two points onto $\bS^{2}$ and $\bH^{2}$, with curvatures $+1$ and $-1$, respectively. The points are concatenated to obtain the product space form embedding, i.e., $x = (x_{\bE}, x_{\bS}, x_{\bH})$. Next, we compute the inner product according to~\eqref{eq:mixed_curv_classifier} which ensures that the $\varepsilon$-margin assumption is satisfied and assign the labels accordingly. If the $\varepsilon$-margin assumption is not satisfied, we simply discard the generated point. We repeat this process until $N$ points are generated.

For a fair comparison, with each combination of $(N, \varepsilon)$ we use the same set of points for both Euclidean and product space form perceptrons. This allows us to demonstrate the efficiency and performance gain of our proposed method, which is informed by the geometry of data. In comparison, the Euclidean setting ignores the geometry of data points, i.e., its metric, domain, and other properties, and simply assumes that they lie in $\bE^{8}$. In~\Cref{fig:synthetic_perceptron}, we plot ten experimental convergence curves for different parameter settings, i.e., for $N=100, 200, 300$ points with the fixed optimal decision hyperplane parameterized by vector $w^*$, and with different separation margins, $\varepsilon = 0.01, 0.05, 0.1, 0.2$. From the results, we first observe that the number of updates made by the product space form perceptron (red line) is always smaller than the theoretical upper bound (green dotted line) described in~\Cref{thm:mixed_curvature_perceptron}, \emph{independent} of the size of the datasets. Second, when the margin $\varepsilon$ is small, the dataset generated from the described product space form may not be linearly separable in $\bE^{8}$ and thus the Euclidean perceptron cannot converge to achieve a 100\% accurate solution; this point is further reasserted by the examples of nonlinear classification boundaries in~\Cref{fig:product_space_classifiers_margin}. Note that as the separation margin increases, data points become linearly separable in $\bE^{8}$ as well, in which case the Euclidean perceptron may in some cases converge faster than its product space form counterpart.
\begin{figure}[t!]  
  \center
  \includegraphics[width=1 \linewidth]{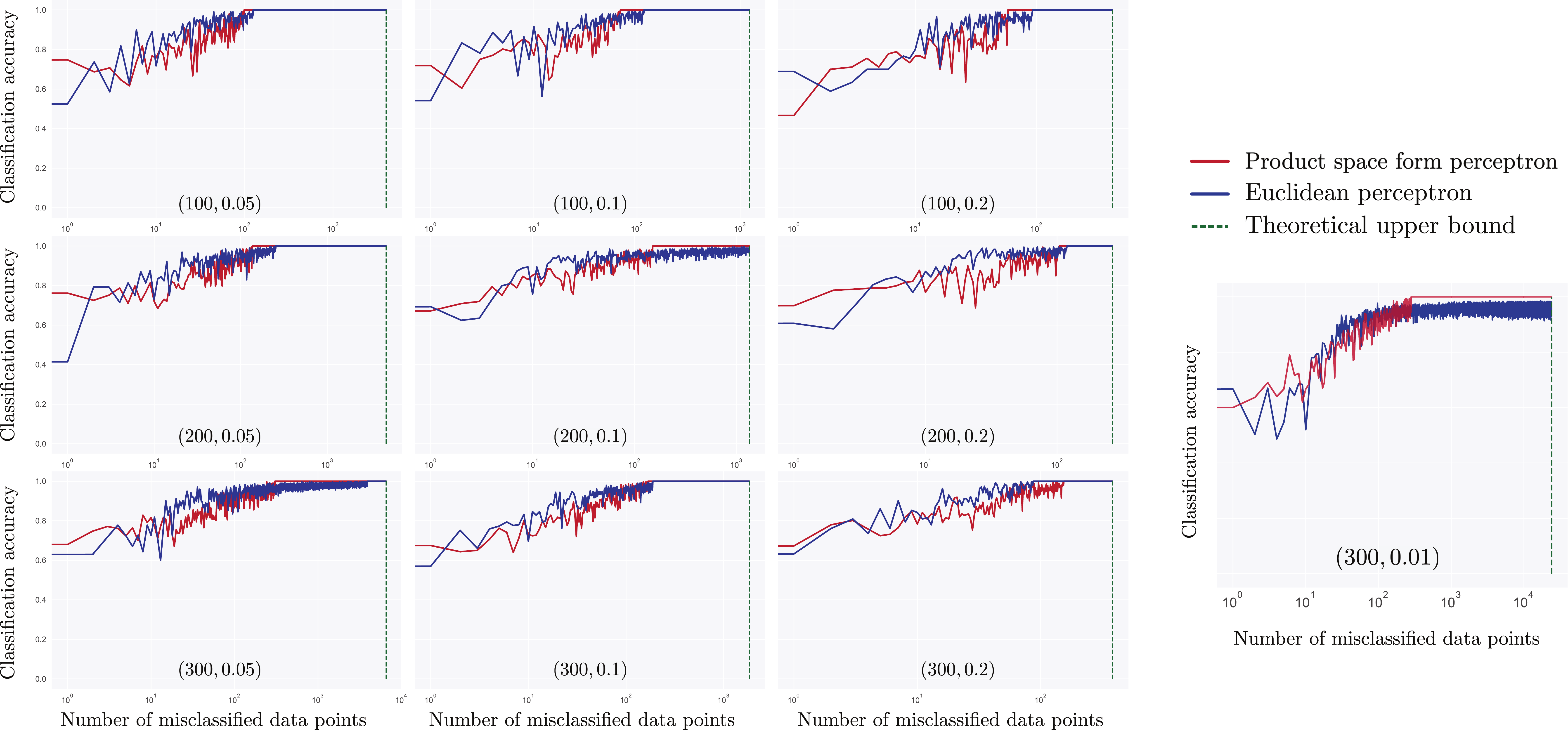}
  \caption{Classification accuracy (macro F1 scores) after each update of the Euclidean and product space form perceptron algorithms for different combinations of parameters $(N, \varepsilon)$.} 
  \label{fig:synthetic_perceptron}
\end{figure}

\subsection{Real-World Datasets}
In addition to synthetic datasets, we also examine real-world data and evaluate the practical performance of our product space form classifiers ---~\Cref{alg:mixed_curvature_perceptron,alg:mixed_curvature_svm} --- on such examples. In this part of the text, we focus on two scRNA-seq datasets and the CIFAR-100 dataset, with the detailed description of their properties and additional analysis of other datasets relegated to Appendix~\ref{sec:Additional_Real_World_Datasets}. All sensitive and privacy-revealing information has been removed from the datasets.

The datasets tested include:
\begin{enumerate}
\item Blood cell scRNA-seq datasets\footnote{\href{https://www.nature.com/articles/ncomms14049}{https://www.nature.com/articles/ncomms14049}} from~\citep{zheng2017massively}, including only information provided by $965$ landmark genes~\citep{subramanian2017next}. ``Landmark genes'' are genes that can be used to infer the activities of all other genes and are hence frequently used for scalable learning and genomic dimensionality reduction. In this case, we have $10$ classes with $94,655$ samples in total. 

\item Lymphoma\footnote{
\href{https://www.10xgenomics.com/resources/datasets/hodgkins-lymphoma-dissociated-tumor-targeted-immunology-panel-3-1-standard-4-0-0}{https://www.10xgenomics.com/resources/datasets/hodgkins-lymphoma-dissociated-tumor-targeted-immunology-panel-3-1-standard-4-0-0. Registration is required to access the content.}} and healthy donors\footnote{\href{https://www.10xgenomics.com/resources/datasets/pbm-cs-from-a-healthy-donor-targeted-compare-immunology-panel-3-1-standard-4-0-0}{https://www.10xgenomics.com/resources/datasets/pbm-cs-from-a-healthy-donor-targeted-compare-immunology-panel-3-1-standard-4-0-0. Registration is required to access the content.}} scRNA-seq binary-labeled datasets with $1020$-dimensional points and $13,410$ samples in total. 

\item CIFAR-100~\citep{krizhevsky2009learning}, an image dataset that contains $100$ classes of size $600$ each. Each image is of dimension $32 \times 32 \times 3$ (involving three colors).\footnote{\href{https://www.cs.toronto.edu/~kriz/cifar.html}{https://www.cs.toronto.edu/$\sim$kriz/cifar.html}}

\item Omniglot and MNIST data, for which the results are presented in~\Cref{sec:Additional_Real_World_Datasets}.
\end{enumerate}


To embed these datasets into different product space forms, we adapted and modified the mixed-curvature VAEs algorithm of~\citep{Skopek2020Mixed-curvature}. The original implementation of this algorithm does not allow the users to choose the number of layers and hidden dimensions of the network. So we introduce customized changes to make the approach suitable for use with datasets at different scales. For the Lymphoma dataset, we use two MLP layers with hidden dimension $200$ and train the network for $500$ epochs. For the blood cell landmark dataset, we use three MLP layers with hidden dimension $400$ and train the network for $200$ epochs. Other experimental setups are the same as the ones stated in~\citep{Skopek2020Mixed-curvature}.

We train linear classifiers on the low-dimensional (product) space form features extracted by the mixed-curvature VAEs. Hence, the performance of classification algorithms depends on the discriminative quality of the acquired features. Ideally, in order to maximize the classification performance, one may want to \emph{jointly} design and optimize the feature extraction and classification algorithms. However, joint optimization of feature extraction and classification objectives is technically challenging. We therefore decouple this process by first extracting low-dimensional features (via the previously described unsupervised mixed-curvature VAEs) and then train the linear classifiers.

\emph{Perceptron}: We split the datasets into $60\%$ training and $40\%$ test point sets. In general, embedded datasets are not linearly separable. For a fair comparison, we allow all perceptron algorithms to go over the whole dataset only once to simulate the online learning scenario. For datasets with two classes (Lymphoma and healthy donor dataset), we perform binary classifications on the whole dataset with $30$ different splits for training and testing sets. For datasets with multiple classes (CIFAR-100 and blood cell landmark gene datasets), we perform binary classifications on samples from two randomly chosen classes and repeat it three times with different splits for training and testing sets. For the CIFAR-100 dataset, we choose $30$ class pairs, whereas for the blood cell landmark gene dataset, we choose all possible pairs, i.e., a total of $\binom{10}{2}=45$ pairs. The mean and $95\%$ confidence interval of the obtained macro F-1 scores are reported in~\Cref{fig:numerical_results} (bottom row).

\emph{SVM}: For simplicity, we set $\alpha_{\bE}= \alpha_{\bS}=\alpha_{\bH}=1,$ and relax the optimization problem by leaving out nonconvex constraints. To lower the computational complexity, for each dataset, we also only use $150$ training samples and reserve the remaining ones for testing. Other experimental settings are identical to those of the perceptron.

\begin{figure}[t]
  \center
  \includegraphics[width=1 \linewidth]{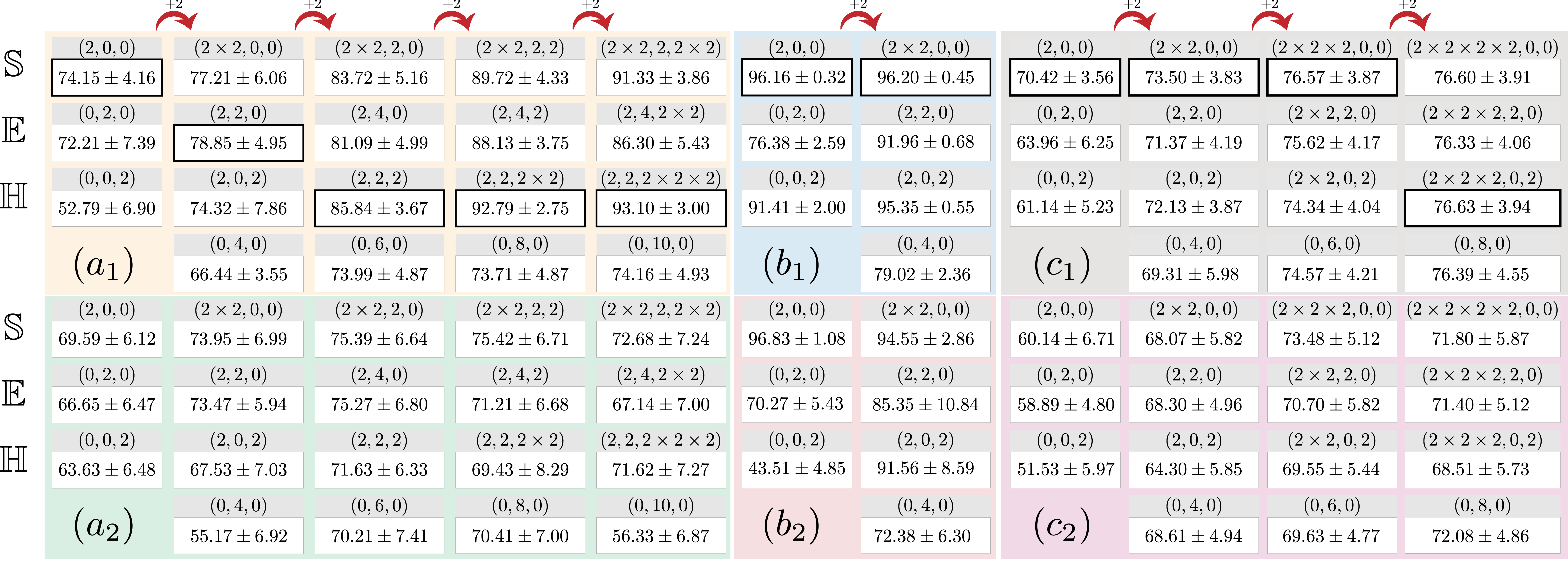}
  \caption{The average classification accuracy $(\%)$ and $95\%$-confidence interval for the SVM (top row) and the perceptron (bottom row) algorithms in product space form with different signatures. Datasets are blood cell landmark gene expressions (Figures $(a_1)$ and $(a_2)$), Lymphoma and healthy donors expressions (Figures $(b_1)$ and $(b_2)$), and CIFAR-100 (Figures $(c_1)$ and $(c_2)$). The choice of the embedding space is determined by the signature estimation method discussed in~\Cref{sec:signature_estimation}, and the resulting signatures are in shown bold rectangles.}
  \label{fig:numerical_results}
\end{figure}

The results across different datasets, learning methods, and embedding signatures (i.e., choices of dimensions of the components in the product spaces) suggest that product spaces offer significantly better low-dimensional representations for complex data structures, especially for scRNA-seq data; see \Cref{fig:numerical_results}. Generally, a \emph{higher-dimensional signature should lead to a better classification accuracy}. However, the performance of the classification method crucially depends on the discriminative quality of the features extracted from the mixed-curvature VAEs --- which are not guaranteed to return embeddings of accuracy that increases with the dimension of the ambient space (provided that other hyperparameters are fixed). 

Furthermore, finding an analytic expression for a signature that allows for near-optimal embedding distortion is a hard problem that requires a sophisticated analysis of the geometry of datasets, and is thus beyond the scope of this work. Nevertheless, the signature estimation heuristic, introduced in~\Cref{sec:signature_estimation}, lets use narrow down the choices for suitable signatures and lends itself to a process to progressively improves the classification results. In~ \Cref{fig:numerical_results} $(a_1)$, $(b_1)$, and $(c_1)$, we used the SVM classification results to estimate the near-optimal signatures for each of the three datasets. We then used these signatures for our perceptron experiments; see~\Cref{fig:numerical_results} $(a_2)$, $(b_2)$, and $(c_2)$. The improvements in classification accuracy for the CIFAR-100 data are modest ($\sim2\%$) but the performance of product space classifiers on the scRNA-seq datasets offers an average of $\sim15\%$ improvements compared to purely Euclidean classifiers. This is not surprising given the prior biological insight that populations of cells in a tissue follow hierarchical evolutionary trajectories (suitably 
captured by hyperbolic spaces) and \say{cyclic} cell-cycle phases (suitably captured by spherical spaces).

\acks{The authors would like to thank Prof. Ivan Dokmani\'c for helpful discussions and suggestions.}

\appendix
\section{Proof of \Cref{thm:mixed_vc}}\label{sec:mixed_vc_proof}
We let $x_1, \ldots, x_N \in \mathcal{M}$, where $\mathcal{M} = \R^{d_{\bE}} \times \bS^{d_{\bS}} \times \bH^{d_{\bH}}$ and $N = d_{\bE}+d_{\bS}+d_{\bH}+1$, such that
\[
x_n = \left\{\begin{aligned}\quad
   & (x_{\bE,n}, x_{\bS,N}, x_{\bH,N})^{\T}  & \mbox{ for } & n \in \set{1, \ldots, d_{\bE}}   \\
   & (x_{\bE,N}, x_{\bS,n}, x_{\bH,N})^{\T}  & \mbox{ for } & n \in \set{d_{\bE}+1, \ldots, d_{\bE}+d_{\bS}}   \\
   & (x_{\bE,N}, x_{\bS,N}, x_{\bH,n})^{\T}  & \mbox{ for } & n \in \set{d_{\bE}+d_{\bS}+1, \ldots, d_{\bE}+d_{\bS}+d_{\bH}} \\
   & (x_{\bE,N}, x_{\bS,N}, x_{\bH,N})^{\T}  & \mbox{ for } & n = d_{\bE}+d_{\bS}+d_{\bH}+1
     \end{aligned}\right.
\]
where $x_{\bE, N}, x_{\bS, N},$ and $x_{\bH, N}$ are three arbitrary points in $\R^{d_{\bE}}$, $\bS^{d_{\bS}}$, and $\bH^{d_{\bH}}$. Moreover, we restrict $\set{x_{\bE, n}}_{n = 1}^{d_{\bE}}$, $\set{x_{\bS, n}}_{n = 1}^{d_{\bS}}$, and $\set{x_{\bH, n}}_{n = 1}^{d_{\bH}}$ such that --- together with $x_{\bE, N}$, $x_{\bS, N}$, and $x_{\bH, N}$ ---  can be shattered with linear classifiers in $\R^{d_{\bE}}$, $\bS^{d_{\bS}}$, and $\bH^{d_{\bH}}$. Consequently, for any vector $t = (t_n)_{n=1}^{N-1} \in \R^{N-1}$, we can find Euclidean, spherical and hyperbolic linear classifiers --- with parameters $(w_{\bE}, b), w_{\bS}$, and $w_{\bH}$ --- such that the following conditions hold true:
\begin{align}
&\forall n \in \set{1, \ldots, d_{\bE} }:  & w_{\bE}^{\T} x_{\bE, n}+ b &= t_n; &  w_{\bE}^{\T} x_{\bE, N}+ b &= 0. \label{eq:vc_euclidean}\\
& \forall n \in \set{d_{\bE}+1, \ldots, d_{\bE}+d_{\bS}}: & \langle w_{\bS}, x_{\bS, n} \rangle &= \sin (t_n); & \langle w_{\bS}, x_{\bS, N}\rangle &= 0.  \label{eq:vc_spherical}\\
& \forall n \in \set{d_{\bE}+d_{\bS}+1, \ldots, d_{\bE}+d_{\bS}+d_{\bH}}: & [ w_{\bH}, x_{\bH, n}] &= \mathrm{sinh}(t_n); &  [w_{\bH}, x_{\bH, N}] &= 0.  \label{eq:vc_hyperbolic}
\end{align}
Now, let $w = (w_{\bE}, b, w_{\bS}, w_{\bH})$ be the parameter vector for a linear classifier in product space form $\mathcal{M}$. Then, we have
\[
l^{\mathcal{M}}_{w}(x_n) = \left\{\begin{aligned}\quad
   & \mathrm{sgn} ( w_{\bE}^{\T} x_{\bE, n}+ b)  & \mbox{ for } & n \in \set{1, \ldots, d_{\bE}}   \\
   & \mathrm{sgn} \big( \mathrm{asin} (  \langle w_{\bS}, x_{\bS, n}\rangle ) \big)  & \mbox{ for } & n \in \set{d_{\bE}+1, \ldots, d_{\bE}+d_{\bS}}   \\
   & \mathrm{sgn} \big( \mathrm{asinh} ( [ w_{\bH}, x_{\bH, n}]) \big) & \mbox{ for } & n \in \set{d_{\bE}+d_{\bS}+1, \ldots, d_{\bE}+d_{\bS}+d_{\bH}},
     \end{aligned}\right.
\]
or $l^{\mathcal{M}}_{w}(x_n)= \mathrm{sign}(t_n)$, for all $n \in [N-1]$. This is the direct result of constructing the point set according to conditions described in equations \eqref{eq:vc_euclidean}, \eqref{eq:vc_spherical}, and \eqref{eq:vc_hyperbolic}.
This means $\set{ x_n }_{n=1}^{N-1}$ can be shattered with linear classifiers in $\mathcal{M}$. Now, we want to show that the point $x_N = (x_{\bE,N}, x_{\bS,N}, x_{\bH,N})^{\T}$ can also be shattered.
\begin{lemma}
Let $w_{\varepsilon} = w + \varepsilon t_{N} (x_{\bE,N},1, x_{\bS,N}, -x_{\bH,N} )^{\T}$ for a positive scalar $\varepsilon$ and an arbitrary scalar $t_N \in \R$.\footnote{Note that \say{scaling} the components of the vector $w$ does not change the labels produced by $l_w$. However, it will violate the norm constraints required to define the distance-based classifiers. So, we can re-scale the components of $w$ such that  $w_{\varepsilon} = w + \varepsilon t_{N} (x_{\bE,N},1, x_{\bS,N}, -x_{\bH,N} )^{\T}$ complies with the required norm constraints for each Euclidean, spherical, and hyperbolic components, i.e., $w_{\bE, \varepsilon} = \frac{1}{\norm{ w_{\bE} + \varepsilon t_{N} x_{\bE,N}}} (w_{\bE} + \varepsilon t_{N} x_{\bE,N})$, $w_{\bS, \varepsilon} = \frac{1}{\norm{ w_{\bS} + \varepsilon t_{N} x_{\bS,N}}}(w_{\bS} + \varepsilon t_{N} x_{\bS,N})$, and $w_{\bH, \varepsilon} = \frac{1}{\sqrt{ -[w_{\bH} - \varepsilon t_{N} x_{\bH,N}, w_{\bH} - \varepsilon t_{N} x_{\bH,N}]}}(w_{\bH} - \varepsilon t_{N} x_{\bH,N})$. For vanishing $\varepsilon \rightarrow 0$, we have $w_{\bE, \varepsilon} \rightarrow w_{\bE}$, $w_{\bS, \varepsilon} \rightarrow w_{\bS}$, and $w_{\bH, \varepsilon} \rightarrow w_{\bH}$. However, we do not specifically reflect these normalizations in our proof since we are only interested labels produced by $l_{w_{\varepsilon}}$ when $\varepsilon \rightarrow 0$. } This perturbed classifier computes the following labels,
\[
l^{\mathcal{M}}_{ w_{\varepsilon} }(x_n) = \left\{\begin{aligned}\quad
   & \mathrm{sgn} ( w_{\bE}^{\T} x_{\bE, n}+ b + c_{1,n} \varepsilon +o(\varepsilon) )   & \mbox{ for } & n \in \set{1, \ldots, d_{\bE}}   \\
   & \mathrm{sgn} ( \langle w_{\bS}, x_{\bS, n}\rangle + c_{2,n} \varepsilon +o(\varepsilon))  & \mbox{ for } & n \in \set{d_{\bE}+1, \ldots, d_{\bE}+d_{\bS}}   \\
   & \mathrm{sgn} ( [ w_{\bH}, x_{\bH, n}] + c_{3,n} \varepsilon +o(\varepsilon))  & \mbox{ for } & n \in \set{d_{\bE}+d_{\bS}+1, \ldots, d_{\bE}+d_{\bS}+d_{\bH}} \\
      & \mathrm{sign} ( t_n	 c_4  + O(\varepsilon) )  & \mbox{ if } & n = d_{\bE}+d_{\bS}+d_{\bH}+1,
     \end{aligned}\right.
\]
where $c_{1,n} = (3+x_{\bE,0}^{\T} x_{\bE, n})t_N$, $c_{2,n} = \big( \norm{x_{\bE, 0}}_2^2+2+\langle x_{\bS,N}, x_{\bS,n}\rangle \mathrm{asin}^{\prime}( \langle w_{\bS}, x_{\bS,n}\rangle ) \big)t_N$, $c_{3,n} =  \norm{x_{\bE, 0}}_2^2+2-\mathrm{asinh}^{\prime} ( [ w_{\bH}, x_{\bH,n} ] ) [ x_{\bH,N}, x_{\bH,n} ] $, and
where $c_4 =  \norm{x_{\bE, 0}}_2^2+3 > 0$. This result proves that $l^{\mathcal{M}}_{ w_{\varepsilon} }(x_n) = \mathrm{sgn}(t_n)$ for all $n \in [N]$, when $\varepsilon \rightarrow 0$.
\end{lemma}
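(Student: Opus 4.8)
The plan is a direct first‑order perturbation analysis of the product‑space decision statistic $g_w(x)=w_{\bE}^{\T}x_{\bE}+b+\alpha_{\bS}\,\mathrm{asin}(\langle w_{\bS},x_{\bS}\rangle)+\alpha_{\bH}\,\mathrm{asinh}([w_{\bH},x_{\bH}])$, exploiting that for every $n\in[N-1]$ the point $x_n$ agrees with the anchor $x_N$ in two of its three components, so that only one block is ``active.'' Concretely, I would substitute $w_{\varepsilon}=w+\varepsilon t_N\,(x_{\bE,N},1,x_{\bS,N},-x_{\bH,N})^{\T}$ into $g$ and Taylor‑expand each of the three summands in $\varepsilon$. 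In the two inactive blocks of $x_n$ the perturbation enters only through the self inner products $\|x_{\bE,N}\|_2^2+1$, $\langle x_{\bS,N},x_{\bS,N}\rangle=C_{\bS}^{-1}$, $[x_{\bH,N},x_{\bH,N}]=C_{\bH}^{-1}$, combined with the anchor identities $w_{\bE}^{\T}x_{\bE,N}+b=0$, $\langle w_{\bS},x_{\bS,N}\rangle=0$, $[w_{\bH},x_{\bH,N}]=0$ of \eqref{eq:vc_euclidean}--\eqref{eq:vc_hyperbolic}; in the active block one differentiates $\mathrm{asin}$ (resp.\ $\mathrm{asinh}$) at the unperturbed argument $\sin t_n$ (resp.\ $\sinh t_n$), which is legitimate once the labels are realized with $|t_n|<\pi/2$ so that the $\mathrm{asin}$ argument stays strictly inside $(-1,1)$. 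Collecting the $\varepsilon$‑coefficients reproduces exactly the constants $c_{1,n},c_{2,n},c_{3,n}$ of the lemma (with $x_{\bE,0}\equiv x_{\bE,N}$).

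Next I would read off the signs. For $n\in[N-1]$ the zeroth‑order part of $g_{w_{\varepsilon}}(x_n)$ is a positive multiple of $t_n$ --- it equals $t_n$, $\alpha_{\bS}\mathrm{asin}(\sin t_n)=\alpha_{\bS}t_n$, or $\alpha_{\bH}\mathrm{asinh}(\sinh t_n)=\alpha_{\bH}t_n$ according to the block --- hence nonzero of sign $\mathrm{sgn}(t_n)$, and a sufficiently small $\varepsilon$ cannot flip it. For $n=N$ the zeroth‑order part vanishes and $g_{w_{\varepsilon}}(x_N)=\varepsilon\bigl(c_4 t_N+o(1)\bigr)$ with $c_4=(\|x_{\bE,N}\|_2^2+1)+\alpha_{\bS}C_{\bS}^{-1}-\alpha_{\bH}C_{\bH}^{-1}>0$ (which reduces to $\|x_{\bE,N}\|_2^2+3$ in the normalization $\alpha_{\bE}=\alpha_{\bS}=\alpha_{\bH}=1$, $C_{\bS}=1$, $C_{\bH}=-1$), so $\mathrm{sgn}\,g_{w_{\varepsilon}}(x_N)=\mathrm{sgn}(t_N)$ for small $\varepsilon$. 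Choosing $\varepsilon$ below the minimum of the finitely many per‑point thresholds gives $l^{\mathcal{M}}_{w_{\varepsilon}}(x_n)=\mathrm{sgn}(t_n)$ for all $n\in[N]$; since the sign pattern $(\mathrm{sgn}(t_n))_{n\in[N]}$ is arbitrary, the $N=\dim(\mathcal{M})+1$ points are shattered, establishing \Cref{thm:mixed_vc}. The renormalization of each block needed to meet the norm constraints of \Cref{prop:linear_classifier_in_general_product_space_forms} is harmless: the scaling factors are $1+O(\varepsilon)$, so they perturb the already $O(1)$ and $O(\varepsilon)$ block contributions only at higher order and do not change any of the limiting signs.

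The delicate point --- and the reason for the asymmetric perturbation direction --- is the hyperbolic block. The Euclidean and spherical anchors have positive self inner products, but $[x_{\bH,N},x_{\bH,N}]=C_{\bH}^{-1}<0$; perturbing $w_{\bH}$ along $+x_{\bH,N}$ would contribute a term of sign $-\mathrm{sgn}(t_N)$ to $g_{w_{\varepsilon}}(x_N)$ and could make $c_4$ vanish or turn negative for an unfavorable choice of anchors. Using $-x_{\bH,N}$ flips this contribution to $+\mathrm{sgn}(t_N)$, so all three blocks push $g_{w_{\varepsilon}}(x_N)$ the same way and $c_4>0$ holds uniformly in the arbitrary anchors $x_{\bE,N},x_{\bS,N},x_{\bH,N}$. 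Checking this uniform positivity, together with the fact that the active‑block first‑order coefficients are $O(1)$ so that a single $\varepsilon$ works for the whole finite configuration, is the only real obstacle; the remaining steps are routine differentiation.
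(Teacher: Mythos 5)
Your proposal is correct and follows essentially the same route as the paper: a first-order Taylor expansion of the decision statistic in $\varepsilon$, using the anchor identities $w_{\bE}^{\T}x_{\bE,N}+b=0$, $\langle w_{\bS},x_{\bS,N}\rangle=0$, $[w_{\bH},x_{\bH,N}]=0$ together with the self inner products of the anchors, and the sign flip on the hyperbolic perturbation direction to make all three blocks contribute $+\varepsilon t_N$ so that $c_4>0$. You add a few points of care the paper glosses over (the domain condition $|t_n|<\pi/2$ for $\mathrm{asin}$, the $1+O(\varepsilon)$ renormalization of the blocks, and general curvatures/weights rather than the normalized $C_{\bS}=1$, $C_{\bH}=-1$ used in the paper's computation), but the argument is the same.
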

\begin{proof}
Let $n \in \set{1, \ldots, d_{\bE}}$. Then, we have
\begin{align*}
&l^{\mathcal{M}}_{ w_{\varepsilon} }(x_n) \\
&= \mathrm{sgn} \big( w_{\bE}^{\T} x_{\bE, n}+ b+\varepsilon t_N(1+x_{\bE,N}^{\T} x_{\bE, n}) + \mathrm{asin}( \varepsilon t_N\langle x_{\bS,N}, x_{\bS,N}\rangle) + \mathrm{asinh}( \varepsilon t_N[-x_{\bH,N} , x_{\bH,N}]) \big) \\
&\stackrel{\text{(a)}}{=}  \mathrm{sgn} \big( w_{\bE}^{\T} x_{\bE, n}+ b+\varepsilon t_N(1+x_{\bE,N}^{\T} x_{\bE, n}) + \mathrm{asin}( \varepsilon t_N ) + \mathrm{asinh}( \varepsilon t_N) \big)\\
&\stackrel{\text{(b)}}{=}  \mathrm{s
gn} \big( w_{\bE}^{\T} x_{\bE, n}+ b+\varepsilon t_N(1+x_{\bE,N}^{\T} x_{\bE, n}) + 2\epsilon t_N+ o(\varepsilon) \big) \\
&=  \mathrm{sign} \big( w_{\bE}^{\T} x_{\bE, n}+ b+\varepsilon t_N(3+x_{\bE,N}^{\T} x_{\bE, n}) + o(\varepsilon) \big) .
\end{align*}
where $\text{(a)}$ is due to the norm constraints for points in hyperbolic and spherical spaces ($\langle x, x \rangle = 1$ if $x \in  \bS^d$, and $[x,x] = -1$ if $x \in \bH^d$), and $\text{(b)}$ is due to the facts that $\mathrm{asinh}(x) = x + o(x)$ and $\mathrm{asinh}(x) = x + o(x)$ for $x \rightarrow 0$. This give us $c_{1,n} = (3+x_{\bE,N}^{\T} x_{\bE, n})t_N$.

For $n \in \set{d_{\bE}+d_{\bS}+1, \ldots, d_{\bE}+d_{\bS}+d_{\bH}}$, we have
\begin{align*}
l^{\mathcal{M}}_{ w_{\varepsilon} }(x_n) &= \mathrm{sgn} \big( (\varepsilon t_N x_{\bE, N})^{\T} x_{\bE, N}+\varepsilon t_N+ \mathrm{asin}( \langle w_{\bS}, x_{\bS,n}\rangle + \varepsilon t_N \langle x_{\bS,N}, x_{\bS,n}\rangle) + \mathrm{asinh}( \varepsilon t_N) \big) \\
&\stackrel{\text{(a)}}{=} \mathrm{sgn} \big( \varepsilon t_N ( \norm{x_{\bE, N}}_2^2+2)+ \mathrm{asin}( \langle w_{\bS}, x_{\bS,n}\rangle ) + \varepsilon t_N \langle x_{\bS,N}, x_{\bS,n}\rangle \mathrm{asin}^{\prime}( \langle w_{\bS}, x_{\bS,n}\rangle)+o(\varepsilon)  \big) \\
&= \mathrm{sgn} \big( \mathrm{asin}( \langle w_{\bS}, x_{\bS,n}\rangle )  +\varepsilon t_N ( \norm{x_{\bE, N}}_2^2+2+\langle x_{\bS,N}, x_{\bS,n}\rangle \mathrm{asin}^{\prime}( \langle w_{\bS}, x_{\bS,n}\rangle))+o(\varepsilon) \big)
\end{align*}
where $\text{(a)}$ is due to the first order Taylor approximation of $\mathrm{asin}(\cdot)$ function. This gives $c_{2,n} = \big( \norm{x_{\bE, N}}_2^2+2+\langle x_{\bS,N}, x_{\bS,n}\rangle \mathrm{asin}^{\prime}( \langle w_{\bS}, x_{\bS,n}\rangle ) \big)t_N$.

For $n \in \set{d_{\bE}+d_{\bS}+1, \ldots, d_{\bE}+d_{\bS}+d_{\bH}}$, we have
\begin{align*}
&l^{\mathcal{M}}_{ w_{\varepsilon} }(x_n) \\
&= \mathrm{sign} \big( \varepsilon t_N ( \norm{x_{\bE, N}}_2^2+1)+ \mathrm{asin}( \varepsilon t_N  )+ \mathrm{asinh}( [ w_{\bH}, x_{\bH,n} ] - \varepsilon t_N [ x_{\bH,N}, x_{\bH,n} ] ) \big)  \\
&\stackrel{\text{(a)}}{=}  \mathrm{sgn} \big( \varepsilon t_N ( \norm{x_{\bE, N}}_2^2+2) + \mathrm{asinh}( [ w_{\bH}, x_{\bH,n} ] )-  \varepsilon t_N  \mathrm{asinh}^{\prime} ( [ w_{\bH}, x_{\bH,n} ] ) [ x_{\bH,N}, x_{\bH,n} ] + o(\varepsilon ) \big) \\
&= \mathrm{sgn} \big( \mathrm{asinh}( [ w_{\bH}, x_{\bH,n} ] ) + \varepsilon t_N ( \norm{x_{\bE, N}}_2^2+2-\mathrm{asinh}^{\prime} ( [ w_{\bH}, x_{\bH,n} ] ) [ x_{\bH,N}, x_{\bH,n} ] ) + o(\varepsilon)  \big)
\end{align*}
where $\text{(a)}$ is due to the first order Taylor approximation of $\mathrm{asinh}(\cdot)$ function.  This gives us $c_{3,n} =  \big( \norm{x_{\bE, N}}_2^2+2-\mathrm{asinh}^{\prime} ( [ w_{\bH}, x_{\bH,n} ] ) [ x_{\bH,N}, x_{\bH,n} ] \big) t_N$.

Finally, let $n = d_{\bE}+d_{\bS}+d_{\bH}+1$. Then, we have
\begin{align*}
l^{\mathcal{M}}_{ w_{\varepsilon} }(x_N) &= \mathrm{sgn} (\varepsilon t_N x_{\bE, N})^{\T} x_{\bE, N}+\varepsilon t_N + \mathrm{asin} ( \varepsilon t_N \langle x_{\bS, N}, x_{\bS, N} \rangle ) + \mathrm{asinh} (-\varepsilon t_N [ x_{\bH, N}, x_{\bH, N} ] ) \\
&=  \mathrm{sgn} ( \varepsilon t_N ( \norm{x_{\bE, N}}_2^2+1)  +\varepsilon t_N + \varepsilon t_N +o(\varepsilon) ) \\
&= \mathrm{sgn} ( t_N ( \norm{x_{\bE, N}}_2^2+3)  +O(\varepsilon) ).
\end{align*}
Therefore, $c_4 = \norm{x_{\bE, N}}_2^2+3 > 0$.
\end{proof}
This lemma directly shows that linear classifiers in $\mathcal{M}$ can shatter at least $\mathrm{dim}(\mathcal{M})+1$ points. This argument can be extended to general product space forms.

\section{Proof of \Cref{thm:mixed_curvature_perceptron}}\label{sec:perceptron_proof}
Let $x_1, \ldots, x_N \in \mathcal{M} = \bE^{d_{\bS}} \times \bS^{d_{\bS}} \times \bH^{d_{\bH}}$ and $R$ be an upper bound for the norm of the hyperbolic component of points, i.e., $\norm{x_{\bH,n}}_2 \leq R$ for all $n \in [N]$. The linear classifier in product space form can be written as
\begin{align*}
	l^{\mathcal{M}}_w(x_n) &= \mathrm{sgn} \big( w_{\bE}^\T x_{\bE,n} + b + \alpha_{\bS}\mathrm{asin}(w_{\bS}^\T x_{\bS,n}) +\alpha_{\bH} \mathrm{asinh}((Rw_{\bH})^\T  \frac{1}{R}H x_{\bH,n}) \big) \\
	&=  \mathrm{sgn}  \big( \langle \phi(w), M\psi(x_n) \rangle_{\mathcal{H}} \big),
\end{align*}
where $\mathcal{H}$ is the product of $\R^{d_{\bE}+1}$, $\mathcal{H}_{d_{\bS}}$ and $\mathcal{H}_{d_{\bH}}$ accompanied by their corresponding inner products, $M = \mathrm{diag}\set{I,I,M_{d_{\bH}}}$ is an operator on $\mathcal{H}$, and
\begin{align*}
\phi(w) &= \big( b, w_{\bE}, \sqrt{\alpha_{\bS}} \phi_{d_{\bS}}(\frac{1}{\sqrt{C_{\bS}}}w_{\bS}), \sqrt{\alpha_{\bH}} \phi_{d_{\bH}}(R w_{\mathbb{H}}) \big) \in \mathcal{H}, \\
\psi(x_n) &= \big( 1, x_{\bE,n}, \sqrt{\alpha_{\bS}}\phi_{d_{\bS}}(\sqrt{C_{\bS}}x_{\bS,n}),\sqrt{\alpha_{\bH}} \phi_{d_{\bH}}(H\frac{1}{R}x_{\bH,n}) \big),
\end{align*}
where $\phi_d(\cdot)$ is defined as in the proof of \Cref{lem:taylor_series_kernel}. Therefore, we have
\begin{align*}
&\langle \phi(w), M \psi(x)\rangle_{\mathcal{H}} \\
&=w_{\bE}^\T x_{\bE} + b + \alpha_{\bS}\langle \phi_{d_{\bS}}(\frac{1}{\sqrt{C_{\bS}}}w_{\bS}), \phi_{d_{\bS}}(\sqrt{ C_{\bS} }x_{\bS}) \rangle_{\mathcal{H}_{d_{\bS}}} + \alpha_{\bH} \langle \phi_{d_{\bH}}(R w_{\bH}), M_{d_{\bH}} \phi_{d_{\bH}}(\frac{1}{R} Hx_{\bH}) \rangle_{\mathcal{H}_{d_{\bH}}}.
\end{align*}
From the problem's assumptions, the data points are linearly separable, i.e.,
\[
    \forall n \in [N]: y_n \langle \phi(w^*), M \psi(x_n) \rangle_{\mathcal{H}}  \geq \varepsilon, 
\]
for a specific parameter $w^*$. Similar to the hyperbolic perceptron setting, we use the following update rule in RKHS
\[
\phi_{\mathcal{H}}^{k+1} = \phi_{\mathcal{H}}^{k} + y_n  M \psi(x_n) \ \mbox{ if } \ y_n \langle \phi_{\mathcal{H}}^{k}, M  \psi(x_n) \rangle_{\mathcal{H}} \leq 0.
\]
If we initialize $\phi_{\mathcal{H}}^0 = 0 \in \mathcal{H}$, we have
\begin{align*}
    \langle \phi(w^*), \phi_{\mathcal{H}}^{k+1} \rangle_{\mathcal{H}} &= \langle \phi(w^*),  \phi_{\mathcal{H}}^{k} \rangle_{\mathcal{H}} + \langle \phi(w^*), M y_n \psi(x_n) \rangle_{\mathcal{H}} \\
    &\geq  \langle \phi(w^*),  \phi_{\mathcal{H}}^{k} \rangle_{\mathcal{H}} + \varepsilon \\
    &\geq k \varepsilon.
\end{align*}
On the other hand, we can bound the norm as
\begin{align*}
    &\langle \phi_{\mathcal{H}}^{k+1}, \phi_{\mathcal{H}}^{k+1} \rangle_{\mathcal{H}} \\ &=  \langle \phi_{\mathcal{H}}^{k}, \phi_{\mathcal{H}}^{k} \rangle_{\mathcal{H}} +\langle y_n  M\psi(x_n), y_n  M\psi(x_n) \rangle_{\mathcal{H}} + 2 \langle \phi_{\mathcal{H}}^{k} ,  y_n  M\psi(x_n)\rangle_{\mathcal{H}} \\
    &\leq \langle \phi_{\mathcal{H}}^{k}, \phi_{\mathcal{H}}^{k} \rangle_{\mathcal{H}} + \langle \psi(x_n), \psi(x_n) \rangle_{\mathcal{H}} \\
    &\leq \langle \phi_{\mathcal{H}}^{k}, \phi_{\mathcal{H}}^{k} \rangle_{\mathcal{H}}  + 1 + \norm{x_{\bE,n}}_2^2 + \alpha_{\bS} \langle \phi_{d_{\bS}}(\sqrt{C_{\bS}}x_{\bS,n}), \phi_{d_{\bS}}(\sqrt{C_{\bS}}x_{\bS,n}) \rangle_{\mathcal{H}_{d_{\bS}}} \\
    &+ \alpha_{\bH} \langle \phi_{d_{\bH}}(\frac{1}{R}Hx_{\bH,n}),  \phi_{d_{\bH}}( \frac{1}{R}Hx_{\bH,n}) \rangle_{\mathcal{H}_{d_{\bH}}} \\
    &\stackrel{\mathrm{(a)}}{\leq}  k (1+R_{\bE}^2 + (\alpha_{\bS}+\alpha_{\bH})\frac{\pi}{2}),
\end{align*}
where $R_{\bE}$ is an upper bound for the norm of the Euclidean components of the vectors, and $\mathrm{(a)}$ is due to
\[
	  \langle \phi_{d_{\bS}}(\sqrt{C_{\bS}}x_{\bS,n}), \phi_{d_{\bS}}(\sqrt{C_{\bS}}x_{\bS,n}) \rangle_{\mathcal{H}_{d_{\bS}}}  = \mathrm{asin} ( C_{\bS} x_{\bS,n}^\T x_{\bS,n}) = \frac{\pi}{2},
\]
and 
\[
	  \langle \phi_{d_{\bH}}(\frac{1}{R}Hx_{\bH,n}),  \phi_{d_{\bH}}( \frac{1}{R}Hx_{\bH,n}) \rangle_{\mathcal{H}_{d_{\bH}}} = \mathrm{asin} ( \frac{1}{R^2}x_{\bH,n}^\T x_{\bH,n}) \leq \frac{\pi}{2}.
\]
Hence, 
\begin{align*}
	\frac{( \langle \phi_{\mathcal{H}}^{k+1}, \phi(w^*) \rangle_{\mathcal{H}} )^2}{\langle \phi_{\mathcal{H}}^{k+1}, \phi_{\mathcal{H}}^{k+1} \rangle_{\mathcal{H}} \langle \phi(w^*), \phi(w^*) \rangle_{\mathcal{H}}} &\geq \frac{k^2 \varepsilon^2}{ kB \langle \phi(w^*), \phi(w^*) \rangle_{\mathcal{H}} } \\
	&= k \frac{\varepsilon^2}{ B  \langle \phi(w^*), \phi(w^*) \rangle_{\mathcal{H}} },
\end{align*}
where $B = 1+R_{\bE}^2 +  (\alpha_{\bS}+\alpha_{\bH})\frac{\pi}{2}$. Therefore, convergence is guaranteed in $k \leq \frac{B \langle \phi(w^*), \phi(w^*) \rangle_{\mathcal{H}}}{\varepsilon^2}$ steps. Finally, the upper bound for the $\ell_2$ norm of $w_{\bH}$ guarantees the boundedness of $\langle \phi(w^*), \phi(w^*) \rangle_{\mathcal{H}}$.

\section{Proof of \Cref{thm:hyperbolic_perceptron}} \label{sec:hyperbolic_perceptron_proof}
Let $w^0 = 0 \in \R^{d+1}$ and let $w^{k} \in \R^{d+1}$ be the estimated normal vector at the $k$-th iteration of the perceptron algorithm (see~\Cref{alg:hyperbolic_perceptron}). If the point $x_n \in \bH^d$ ($y_n [w^{k} , x_n] < 0$) is missclassified, 
the perceptron algorithm produces the $(k+1)$-th estimate of the normal vector according to
\[
w^{k+1} = w^{k} + y_n H x_n.
\]
Let $w^*$ be the normal vector that classifies all the points with margin of at least $\varepsilon$, i.e., $y_n\, \mathrm{asinh} ( [w^*,x_n] ) \geq \varepsilon,$ $\forall \, n \in [N]$, and $[w^*, w^*] = 1$.
Then, we have
\begin{align*}
	(w^*)^\T w_{k+1} &= (w^*)^\T w^{k} + y_n [w^*, x_n] \\
	& \geq (w^*)^\T w^{k} +  \mathrm{sinh}(\varepsilon )\\
	& \geq  k  \mathrm{sinh}(\varepsilon ).
\end{align*}
In what follows, we provide an upper bound on $\norm{w^{k+1}}_2$,
\begin{align*}
    \norm{w^{k+1}}_2^2 &= \norm{w^{k} +  y_n H x_n}_2^2 \\
    &=\norm{w^{k}}_2^2+ \norm{x_n}_2^2 + 2 y_n [w^{k}, x_n] \\
    &\stackrel{\mathrm{(a)}}{\leq} \norm{w^{k}}_2^2 +  R^2 \\
    &=  k  R^2,
\end{align*}
where $\mathrm{(a)}$ is due to $ \norm{x_n}_2^2 \leq R^2$ and $y_n [w^{k}, x_n]  \leq 0$, due to the error in classifying the point $x_n$. Hence,
\begin{equation} \label{eq:norm_inequality}
		\norm{w^{k+1}}_2 \leq \sqrt{k} R \ \mbox{ and } \ (w^*)^\T w^{k+1}  \geq  k  \mathrm{sinh}(\varepsilon ).
\end{equation}
To complete the proof, define $\theta_k = \mathrm{acos} ( \frac{(w^k)^\T w^*}{\norm{w^k}_2\norm{w^*}_2})$. Then, 
\begin{align*}
	\ \frac{(w^{k+1})^\T w^*}{\norm{w^{k+1}}_2\norm{w^*}_2} &\stackrel{\mathrm{(a)}}{\geq}  \frac{ k \mathrm{sinh}(\varepsilon)}{ \sqrt{k}R\norm{w^*}_2} \\
	&= \sqrt{k}\frac{ \mathrm{sinh}(\varepsilon)}{R\norm{w^*}_2}, 
\end{align*}
where $\mathrm{(a)}$ follows from \eqref{eq:norm_inequality}. For $k \geq \Big( \frac{R \norm{w^*}_2}{\mathrm{sinh}( \varepsilon)} \Big)^2$, we have $w^{k+1} = \alpha_{k+1} w^*$ for a positive scalar $\alpha_{k+1} $. Hence,  $\frac{1}{\sqrt{[w^{k+1},w^{k+1}]}}w^{k+1} = w^*$.
\begin{algorithm}[b]
   \caption{Hyperbolic Perceptron} \label{alg:hyperbolic_perceptron}
\begin{algorithmic}
	\STATE {\bfseries Input:} $\set{x_n, y_n}_{n=1}^{N}$ : a set of point-labels in $\bH^{d_{\bH}} \times \set{-1, 1}$.
	\STATE {\bfseries Initialization:} $w^0 = 0 \in \R^{d_{\bH}+1}$, $k = 0$, $n = 1$.
	\REPEAT
	\IF{$ \mathrm{sgn} ( [w^k,x_n] ) \neq y_n $}
	\STATE $w^{k+1} = w^k + y_n H x_n$;
	\STATE $k = k+1$;
	\ENDIF
	\STATE $n = \mathrm{mod}(n, N)+1$;
   	\UNTIL{Convergence criteria is met.}
\end{algorithmic}
\end{algorithm}

\subsection{Discussion} \label{sec:further_discussions_hyperbolic}
 A purely hyperbolic perceptron (in the 'Loid model used in this work) was described in~\citep{weber2020robust}. The proposed update rules read as
\begin{align}
    & u^{k} = w^{k} + y_{n} x_{n} \ \mbox{ if }  -y_n [w^{k}, x_n] < 0 \label{eq:WebP_eq1}\\
    & w^{k+1} = u^k/ \min\{1,\sqrt{[u^{k}, u^k]}\}\label{eq:WebP_eq2},
\end{align}
where~\eqref{eq:WebP_eq2} is a \say{normalization step}. Unfortunately, the above update rule does not allow the hyperbolic perceptron algorithm (\Cref{eq:WebP_eq1,eq:WebP_eq2}) to converge, which is due to the choice of the update direction. The convergence issue is also illustrated by the following two examples. 

Let $x_1 = [\sqrt{2}, 1, 0]^\T  \in \bH^2$ with label $y_1 = 1$. We choose the initial vector in the update rule to be $w^{0} = [\frac{-\sqrt{2} + 3}{4}, \frac{-1+ 3\sqrt{2}}{4}, 0]^\T$ (in contrast to $w^{0} = e_2$, which was chosen in the proof~\citep{weber2020robust}). This is a valid choice because $[w^0,w^0] = \frac{1}{2} > 0$.
In the first iteration, we must hence update $w^{0}$ since $-y_1[w^{0},x_1] = -\frac{1}{4} < 0$. From~\eqref{eq:WebP_eq1}, we have $u^{0} = w^0+y_1 x_1 = [\frac{3\sqrt{2} + 3}{4}, \frac{3+ 3\sqrt{2}}{4}, 0]^\T$, and $[u^0,u^0] = 0$. This means that $w^1 = \frac{1}{\sqrt{[u^0, u^0]}} u^0$ is clearly ill-defined. 

As another example, let  $w^\star$ be the optimal vector with which we can classify all data points with margin $\varepsilon$. If we simply choose $w^0 = 0$, then we can satisfy the required condition $[w^0,w^\star] \geq 0$ postulated for the hyperbolic perceptron in~\cite{weber2020robust}. This leads to $u^0 = x_1$. Then, for any $x_1 \in \bH^2$, we have $[x_1,x_1] = -1$, which results in a normalization factor $\sqrt{[u^0,u^0]}$ that is a complex number.

\subsection{Simulated Convergence Analysis of The New Hyperbolic Perceptron}\label{sec:Convergence_Analysis_Hyperbolic_Perceptron}
As pointed out in~\Cref{sec:further_discussions_hyperbolic}, the hyperbolic perceptron described in~\citep{weber2020robust} does not converge. This fact can be easily observed through simulations and the two previously provided counterexamples why this may be the case. We report the experimental results to validate this point and to suggest using the newly developed perceptron, and in addition, to demonstrate that a convergence rate of $O\left(\frac{1}{\sinh(\varepsilon)}\right)$ is not possible.

First, we randomly generate a $w^*$ such that $[w^*,w^*]=1$. Then, we generate a random set of $N=5,000$ points $\{x_i\}_{i=1}^N$ in $\bH^2$. For margin values $\varepsilon \in [0.1, 1]$, we remove points that violate the required distance to the classifier (parameterized by $w^*$), i.e., we decimate the points so that the condition $\forall n: |[w^*, x_n]|\geq \sinh(\varepsilon)$ is satisfied. Then, we assign binary labels to each data point according to the optimal classifier so that $y_n = \mathrm{sgn} \big( \mathrm{asinh}( [w^*, x_n] ) \big)$. We repeat this process for $100$ different values of $\varepsilon$.

In the first experiment, we compare the performance of our new hyperbolic perceptron~\Cref{alg:hyperbolic_perceptron} to the one described in Algorithm 1 of~\citep{weber2020robust} by running both until the number of updates meets a preset upper bound (stated in Theorem 3) or until the classifier correctly classifies all data points. In~\Cref{fig:weber_not_converge} $(a)$, we report the classification accuracy of each method on the training data. Note that our theoretically established convergence rate $O\left(\frac{1}{\sinh^2(\varepsilon)}\right)$ is larger than $O\left(\frac{1}{\sinh(\varepsilon)}\right),$ the rate derived in Theorem 3.1 of~\citep{weber2020robust}. So, for the second experiment, we repeated the same process but terminated both algorithms after $O\left(\frac{1}{\sinh(\varepsilon)}\right)$ updates. The classification performance of the two in this setting is shown in~\Cref{fig:weber_not_converge} $(b)$. From these results, one can easily conclude that $(1)$ our algorithm  always converge within the theoretical upper bound provided in Theorem 3, and $(2)$ both methods violate the theoretical convergence rate upper bound of~\citep{weber2020robust}. 

\begin{figure}[t]
  \center
  \includegraphics[width=1 \linewidth]{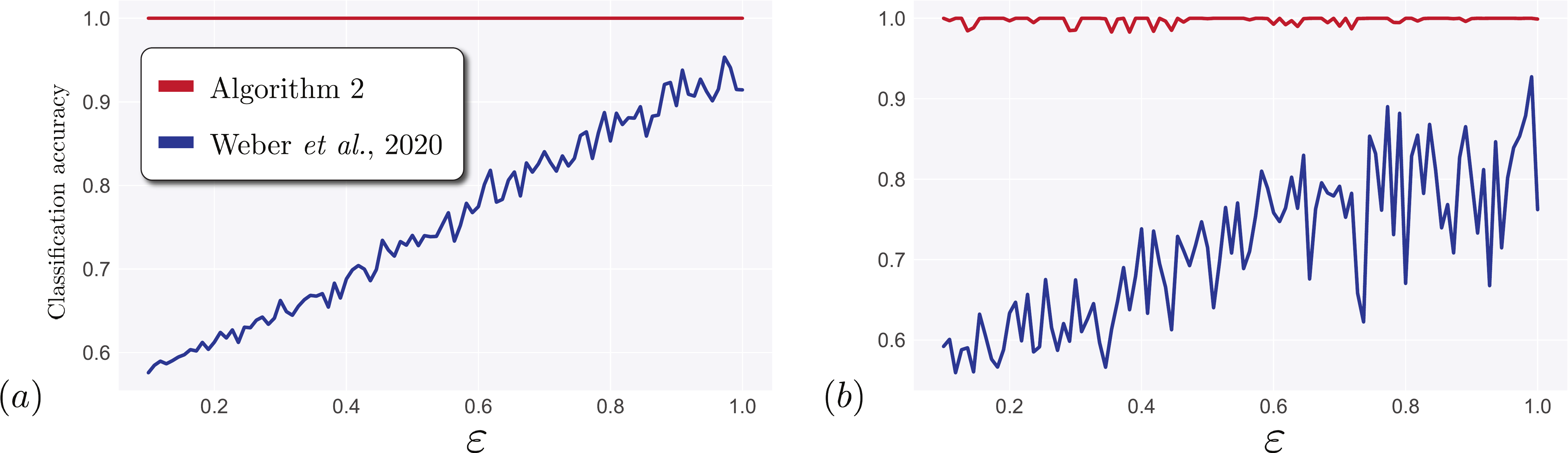}
  \caption{A comparison between the classification accuracy of our new hyperbolic perceptron~\Cref{alg:hyperbolic_perceptron} and the algorithm in~\citep{weber2020robust}, for different values of the margin $\varepsilon$. The classification accuracy is the average of five independent random trials. The stopping criterion is 
  either a $100\%$ classification accuracy or the theoretical upper bound in~\Cref{thm:hyperbolic_perceptron} (Figure $(a)$), and Theorem 3.1 in \citep{weber2020robust} (Figure $(b)$). }
  \label{fig:weber_not_converge}
\end{figure}

\section{Proof of \Cref{prop:three_conditions}}
Let $\phi(w) =[b, w_{\bE}, \sqrt{\alpha_{\bS}}\phi_{d_{\bS}}(\frac{1}{\sqrt{C_{\bS}}}w_{\bS}), \sqrt{\alpha_{\bH}} \phi_{d_{\bH}}(Rw_{\bH})]= \sum_{n \in [N]} \beta_n M \psi(x_n)$. We now  consider the norm constraint for each component separately.

The parameters for Euclidean component can be written as 
\[
	b = \sum_{n\in [N]} \beta_n, \ w_{\bE} = \sum_{n \in [N]} \beta_n  x_{\bE, n}
\]
The distance-based Euclidean classifier asks for a vector such that $\norm{w_{\bE}}_2 = \alpha_{\bE}$. We can impose this condition as a quadratic equality constraint on the vector $\beta = (\beta_1, \ldots, \beta_N)$ as follows
\[
\norm{w_{\bE}}^2 = \beta^\T K_{\bE} \beta = \alpha^2_{\bE},
\]
where $K_{\bE} = \big( \langle x_{\bE,i}, x_{\bE,j} \rangle \big)_{i,j \in [N]}$. The parameter of the spherical component can be written as
\[
	\phi_{d_{\bS}}(\frac{1}{ \sqrt{C_{\bS}} }w_{\bS}) = \sum_{n \in [N]} \beta_n \phi_{d_{\bS}}(\sqrt{C_{\bS}} x_{\bS, n})
\]
A distance-based spherical classifier requires 
\[
w_{\bS}: \norm{w_{\bS}}_2 = \sqrt{C_{\bS}},  \ \mbox{i.e.,} \ \phi_{d_{\bS}}(\frac{1}{ \sqrt{C_{\bS}} }w_{\bS})^\T \phi_{d_{\bS}}(\frac{1}{ \sqrt{C_{\bS}} }w_{\bS}) = \mathrm{asin}(1).
\] 
This is imposed by the following quadratic constraint
\[
\norm{\phi_{d_{\bS}}(\frac{1}{\sqrt{C_{\bS}}} w_{\bS})}^2 = \beta^\T K_{\bS} \beta = \frac{\pi}{2},
\]
where $K_{\bS} = \big( \mathrm{asin}( C_{\bS} \langle x_{\bS,i}, x_{\bS,j} \rangle ) \big)_{i,j \in [N]}$. Finally, we can write the hyperbolic component as follows
\[
	\phi_{d_{\bH}}(R w_{\bH}) = \sum_{n \in [N]} \beta_n M_{d_{\bH}} \phi_{d_{\bH}} (\frac{1}{R}H x_{\bH, n}).
\]
The distance-based hyperbolic classifier $w_{\bH}$ must satisfy the norm constraint of $[R w_{\bH}, R w_{\bH}] = -R^2C_{\bH}$. Consequently, we must have
\[
	\phi_{d_{\bH}}(R w_{\bH})^\T M_{d_{\bH}} \phi_{d_{\bH}}(R H w_{\bH})  = \mathrm{asinh}(-R^2C_{\bH}).
\]
\begin{lemma}\label{lem:hyperbolic_phi_and_H}
$\phi_{d_{\bH}}(R H w_{\bH}) = \sum_{i \in [N]} \beta_n M_{d_{\bH}} \phi_{d_{\bH}} (\frac{1}{R} x_{\bH, n})$.
\end{lemma}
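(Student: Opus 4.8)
The plan is to exploit the explicit tensor-power structure of the feature map $\phi_{d_{\bH}}$ coming from Steinwart's construction (the one used in the proof of \Cref{lem:taylor_series_kernel}), together with the fact that $H$ is a symmetric involution. Recall that for a kernel of the form $\sum_{n\ge 0} a_n (u^\T v)^{2n+1}$ the associated feature map sends a point $z$ to the sequence $\bigl(\sqrt{a_n}\, z^{\otimes(2n+1)}\bigr)_{n\ge 0}$, where $z^{\otimes m}$ is the $m$-fold tensor power. Consequently, for any square matrix $A$ one has the intertwining identity $\phi_{d_{\bH}}(Az) = \widetilde{A}\,\phi_{d_{\bH}}(z)$, where $\widetilde{A}$ denotes the block-diagonal operator on $\mathcal{H}_{d_{\bH}}$ acting as $A^{\otimes(2n+1)}$ on the degree-$(2n+1)$ block. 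The first step is to record this identity and to observe that it is well-defined on the ball $B_{d_{\bH}}$ in the present situation: since $H^\T H = H^2 = I_{d_{\bH}+1}$, the map $H$ is Euclidean-orthogonal, so $\norm{Hz}_2 = \norm{z}_2$ and hence $R H w_{\bH}$ and $\tfrac1R x_{\bH,n}$ lie in $B_{d_{\bH}}$ whenever $R w_{\bH}$ and $\tfrac1R x_{\bH,n}$ do.

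The second step is to record two algebraic facts about $\widetilde{H}$. Because $H$ is symmetric and $H^2 = I$, each block $H^{\otimes(2n+1)}$ is symmetric and squares to the identity, so $\widetilde{H}^\T = \widetilde{H}$ and $\widetilde{H}^2 = \mathrm{Id}$; that is, $\widetilde{H}$ is an orthogonal involution on $\mathcal{H}_{d_{\bH}}$ (bounded, being an isometry on each block and hence on the $\ell^2$-direct sum). Moreover $M_{d_{\bH}}$, which multiplies the degree-$(2n+1)$ block by $(-1)^n$, is block-diagonal with respect to the same degree decomposition as $\widetilde{H}$, so $M_{d_{\bH}} \widetilde{H} = \widetilde{H} M_{d_{\bH}}$.

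The third step is the computation. Start from the representation already derived in the proof of \Cref{prop:three_conditions},
\[
\phi_{d_{\bH}}(R w_{\bH}) \;=\; \sum_{n \in [N]} \beta_n\, M_{d_{\bH}}\, \phi_{d_{\bH}}\!\bigl(\tfrac1R H x_{\bH,n}\bigr) \;=\; \sum_{n \in [N]} \beta_n\, M_{d_{\bH}}\, \widetilde{H}\, \phi_{d_{\bH}}\!\bigl(\tfrac1R x_{\bH,n}\bigr),
\]
apply $\widetilde{H}$ to both sides, use $\widetilde{H}\phi_{d_{\bH}}(R w_{\bH}) = \phi_{d_{\bH}}(R H w_{\bH})$ on the left-hand side, and on the right-hand side commute $\widetilde{H}$ past $M_{d_{\bH}}$ and cancel $\widetilde{H}^2 = \mathrm{Id}$. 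This yields exactly $\phi_{d_{\bH}}(R H w_{\bH}) = \sum_{n \in [N]} \beta_n\, M_{d_{\bH}}\, \phi_{d_{\bH}}\!\bigl(\tfrac1R x_{\bH,n}\bigr)$, which is the claim.

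I expect the only genuine obstacle to be making the intertwining identity $\phi_{d_{\bH}}(Az) = \widetilde{A}\,\phi_{d_{\bH}}(z)$ rigorous at the level of the concrete Hilbert space $\mathcal{H}_{d_{\bH}}$ of \Cref{lem:taylor_series_kernel}: one must check that $\widetilde{H}$ is a well-defined bounded operator on that space and that the sign-flip operator $M_{d_{\bH}}$ is indeed block-diagonal with respect to the degree grading, so that the commutation relation holds on the nose. Once the tensor-power description of $\phi_{d_{\bH}}$ is in hand, everything else is routine bookkeeping with tensor powers.
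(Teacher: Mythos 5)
Your proof is correct, but it proceeds by a mechanism different from the paper's. The paper never introduces an intertwining operator: it instead expands $\mathrm{asinh}(x_{\bH}^\T w_{\bH})$ in two ways --- once by substituting the representation $\phi_{d_{\bH}}(Rw_{\bH})=\sum_n\beta_n M_{d_{\bH}}\phi_{d_{\bH}}(\tfrac1R Hx_{\bH,n})$ and using $M_{d_{\bH}}^2=\mathrm{Id}$ together with the kernel identity $\langle\phi_{d_{\bH}}(u),\phi_{d_{\bH}}(v)\rangle=\mathrm{asin}(u^\T v)$, and once via the defining relation $\mathrm{asinh}(x_{\bH}^\T w_{\bH})=\langle\phi_{d_{\bH}}(\tfrac1R Hx_{\bH}),M_{d_{\bH}}\phi_{d_{\bH}}(RHw_{\bH})\rangle$ --- and then reads off the claimed vector identity from the equality of the two expansions against the test family $\{M_{d_{\bH}}\phi_{d_{\bH}}(\tfrac1R Hx_{\bH})\}_{x_{\bH}}$. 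That route treats the feature map as a black box satisfying only the kernel identity, but its last step implicitly assumes that agreement of inner products against this family forces equality of the two vectors (a totality claim the paper leaves unstated). Your route, by contrast, commits to the explicit tensor-power realization of $\phi_{d_{\bH}}$ from \Cref{lem:taylor_series_kernel}, builds the block-diagonal involution $\widetilde{H}$, checks $\widetilde{H}^2=\mathrm{Id}$, $\widetilde{H}M_{d_{\bH}}=M_{d_{\bH}}\widetilde{H}$, and $\phi_{d_{\bH}}(Hz)=\widetilde{H}\phi_{d_{\bH}}(z)$, and then obtains the identity by a direct operator computation with no density argument. The trade-off is clear: the paper's argument is shorter and representation-independent but rigorously incomplete at the final step; yours is self-contained at the vector level but only as rigorous as the verification of the intertwining identity and the boundedness of $\widetilde{H}$ on the concrete space $\mathcal{H}_{d_{\bH}}$, which you correctly identify as the one point requiring care. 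Both arguments ultimately rest on the same three facts ($H^2=I_{d_{\bH}+1}$, $M_{d_{\bH}}^2=\mathrm{Id}$, and the Taylor-series structure of the kernel), and your domain check that $RHw_{\bH}\in B_{d_{\bH}}$ is a detail the paper omits.
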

\begin{proof}
\begin{align*}
\mathrm{asinh}(x_{\bH}^\T w_{\bH})  &= \phi_{d_{\bH}}(\frac{1}{R} x_{\bH})^\T M_{d_{\bH}} \phi_{d_{\bH}}(R w_{\bH}) \\
&\stackrel{\mathrm{(a)}}{=}  \sum_{n \in [N]} \beta_n \mathrm{asin}( \frac{1}{R^2}[ x_{\bH}, x_{\bH, n}]) \\
&= \sum_{n \in [N]} \beta_n \phi_{d_{\bH}}(\frac{1}{R} Hx_{\bH})^\T M_{d_{\bH}} M_{d_{\bH}} \phi_{d_{\bH}}(\frac{1}{R} x_{\bH, n}) \\
&=  \phi_{d_{\bH}}(\frac{1}{R} Hx_{\bH}) ^\T M_{d_{\bH}} \sum_{n \in [N]} \beta_n M_{d_{\bH}} \phi_{d_{\bH}}(\frac{1}{R} x_{\bH, n}) 
\end{align*}
where $\mathrm{(a)}$ is due to $\phi_{d_{\bH}}(R w_{\bH}) = \sum_{n \in [N]} \beta_n M_{d_{\bH}} \phi_{d_{\bH}} (\frac{1}{R}H x_{\bH, n})$.
From 
\[
\mathrm{asinh}(x_{\bH}^\T w_{\bH})= \phi_{d_{\bH}}(\frac{1}{R} H x_{\bH}) ^\T M_{d_{\bH}} \phi_{d_{\bH}}(RH w_{\bH}),
\]
we have $\phi_{d_{\bH}}(RH w_{\bH}) = \sum_{n \in [N]} \beta_n M_{d_{\bH}} \phi_{d_{\bH}}(\frac{1}{R} x_{\bH, n})$.
\end{proof}
From \Cref{lem:hyperbolic_phi_and_H}, we have
\begin{align*}
\mathrm{asinh}( [R w_{\bH},R w_{\bH}] ) &= \phi_{d_{\bH}}(R w_{\bH})^\T M_{d_{\bH}} \phi_{d_{\bH}}(R H w_{\bH}) \\
&= \sum_{i,j} \beta_i \beta_j \mathrm{asinh}(\frac{1}{R^2}[ x_{\bH,i}, x_{\bH, j}] )\\
&= \mathrm{asinh}(-R^2C_{\bH}).
\end{align*}
The kernel matrix $K_{\bH} = \big(  \mathrm{asinh}(\frac{1}{R^2}[ x_{\bH,i}, x_{\bH, j}] ) \big)_{i,j \in [N]}$ is an indefinite matrix. Therefore, we have the following non-convex second-order equality constraint
\[
\phi_{d_{\bH}}^\T (R w_{\bH}) M_{d_{\bH}} \phi_{d_{\bH}} (R H w_{\bH}) = \beta^\T K_{\bH} \beta = \mathrm{asinh}(-R^2C_{\bH}).
\]

\section{Supplementary Numerical Results on Real-World Datasets} \label{sec:Additional_Real_World_Datasets}
\subsection{Datasets}
We used the following publicly available datasets: 
\begin{enumerate}
    \item Lymphoma patient dataset~\citep{Hodgkin2020Lymphoma}\footnote{\href{https://www.10xgenomics.com/resources/datasets/hodgkins-lymphoma-dissociated-tumor-targeted-compare-immunology-panel-3-1-standard}{https://www.10xgenomics.com/resources/datasets/hodgkins-lymphoma-dissociated-tumor-targeted-compare-immunology-panel-3-1-standard}}. Human dissociated lymph node tumor cells of a 19-year-old male Hodgkins Lymphoma patient were obtained by 10x Genomics from Discovery Life Sciences. 
    \item Lymphoma healthy donor dataset~\citep{PBMCs2020HealthyDonor}\footnote{\href{https://www.10xgenomics.com/resources/datasets/pbm-cs-from-a-healthy-donor-targeted-immunology-panel-3-1-standard}{https://www.10xgenomics.com/resources/datasets/pbm-cs-from-a-healthy-donor-targeted-immunology-panel-3-1-standard}}. Human peripheral blood mononuclear cells (PBMCs) of a healthy female donor aged 25 were obtained by 10x Genomics from AllCells. This dataset contains $13,410$ samples (combined) and each for a class (binary classification). The dimension of each cell gene expression vector is $1,020$.
       \item Blood cells landmark dataset~\citep{zheng2017massively}\footnote{\href{https://www.nature.com/articles/ncomms14049}{https://www.nature.com/articles/ncomms14049}}. This dataset contains the gene expression data for (1) B cells, (2) Cd14 monocytes, (3) Cd34 monocytes, (4) Cd4 t helper cells, (5) Cd56 natural killer cells, (6) Cytotoxic T cells, (7) Memory T cells, (8) Naive cytotoxic cells, (9) Native T cells, and (10) Regulatory T cells. It contains $94,655$ samples from a total of $10$ classes. The dimension of each cell gene expression vector is $965$.
    \item MNIST,\footnote{\href{http://yann.lecun.com/exdb/mnist/}{http://yann.lecun.com/exdb/mnist/}} which contains images of handwritten digits~\citep{lecun1998gradient}.
    \item Omniglot,\footnote{ \href{https://github.com/brendenlake/omniglot}{https://github.com/brendenlake/omniglot}} which contains handwritten characters from a variety of world alphabets~\citep{lake2015human}.
    \item CIFAR-100\footnote{ \href{https://www.cs.toronto.edu/~kriz/cifar.html}{https://www.cs.toronto.edu/$\sim$kriz/cifar.html}}~\citep{krizhevsky2009learning}.
\end{enumerate}

\subsection{Omniglot, MNIST, and CIFAR-100 Datasets}
As suggested in~\citep{salakhutdinov2008quantitative} and~\citep{burda2015importance}, we down-sampled images in MNIST and Omniglot datasets to $28 \times 28$ pixels and preprocessed them through a dynamic-binarization procedure.

Following the procedure introduced in the work on mixed-curvature VAEs~\citep{Skopek2020Mixed-curvature} (under license ASL 2.0)\footnote{The code can be found at \href{https://github.com/oskopek/mvae}{https://github.com/oskopek/mvae}.} we embedded both datasets into the product space $\bE^{2}\times \bS^{2}\times \bH^{2}$. For the MNIST dataset, the curvatures of the hyperbolic and spherical spaces were set to $-0.129869$ and $0.286002$. For the Omniglot dataset, these curvatures were set to $-0.173390$ and $0.214189$. Our experiments reveal that the difference in the log-likelihood metric --- used to compare the quality of mixed-curvature and Euclidean embeddings --- is very small (see the results reported in~\citep{Skopek2020Mixed-curvature} and reproduced in Table~\ref{tab:LL}). 

\begin{table}[t!]
  \caption{Estimated marginal log-likelihood after embedding the data into lower-dimensional spaces.}
  \label{tab:LL}
  \centering
  \begin{tabular}{ccc}
    \toprule
     & MNIST & Omniglot\\
    \midrule
    $\bE^6$ & $-96.88\pm 0.16$ & $-136.05\pm 0.29$ \\
    $\bE^{2}\times \bS^{2}\times \bH^{2}$ & $-96.71\pm 0.19$ & $-135.93\pm 0.48$ \\
  \bottomrule
\end{tabular}
\end{table}

To enable $K$-class perceptron classification, we used $K$ binary classifiers --- represented by parameters $w^{(k)}, k\in [K]$ --- that were independently trained on the same training set to separate each single class from the remaining classes. For each classifier, we first transformed the resulting prediction scores into probabilities via Platt's scaling technique~\citep{platt1999probabilistic}. The predicted labels are decided by a maximum a posteriori criteria, using the probability of each class.

The embedded points --- from different classes --- are not guaranteed to be linearly separable. Hence, we restricted the perceptron algorithms to terminate after going through a fixed  number of passes. Subsequently, we computed the Macro F1 scores to determine the quality of the learned linear classifiers. For simplicity, we only report ternary classification results for MNIST and Omniglot datasets: We choose $500$ points from three randomly chosen classes from MNIST, and $20$ points from three randomly chosen classes from Omniglot.

\begin{figure}[t!]
  \center
  \includegraphics[width=1 \linewidth]{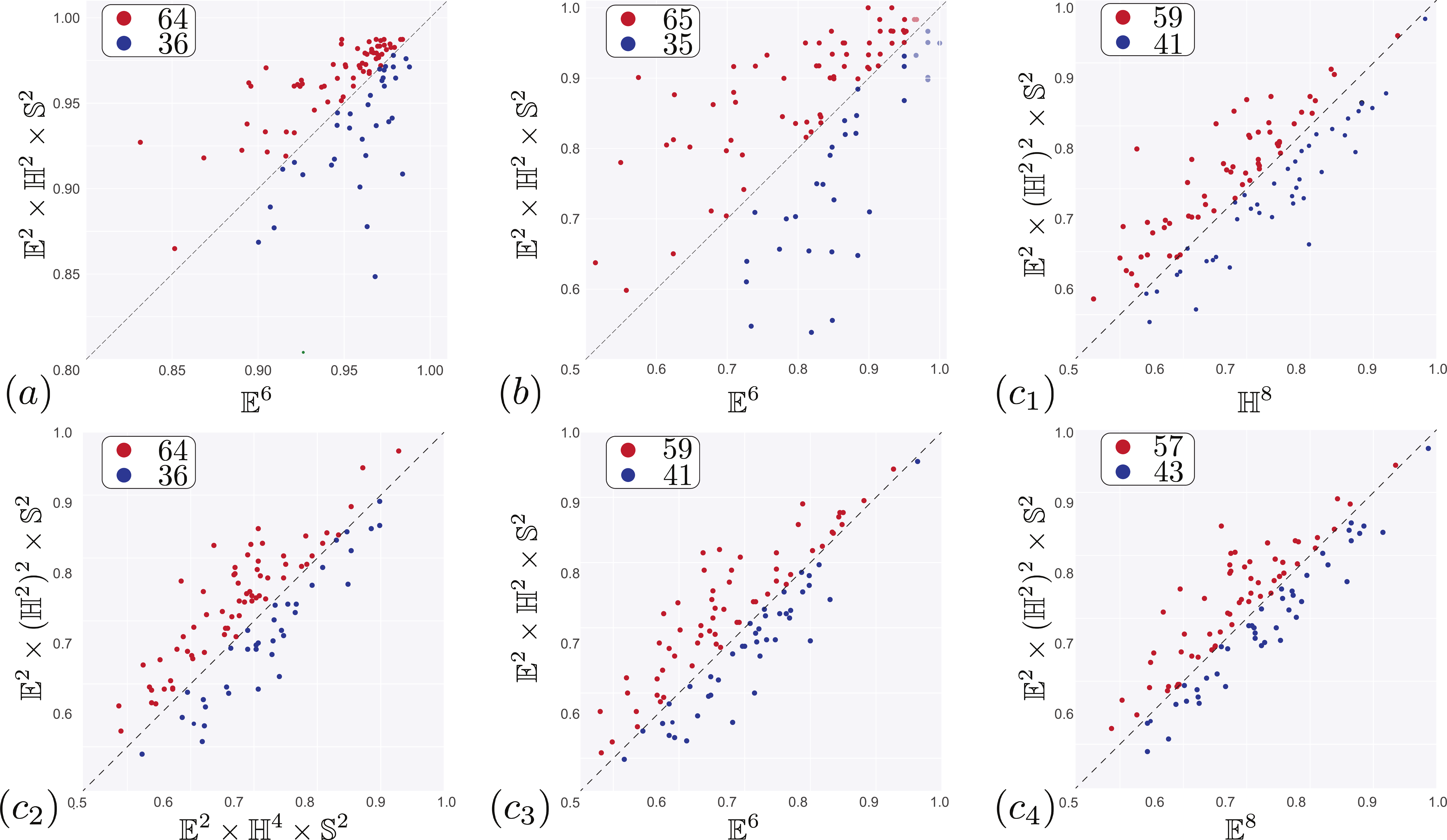}
  \caption{Comparison of Macro F1 scores of the Euclidean and product space form perceptron on $(a)$ MNIST, $(b)$ Omniglot, and $(c_1)-(c_4)$ CIFAR-100 datasets. The labels on the $x$ and $y$ axis indicate the embedding spaces, and the counts in the top-left-corner indicate how often a binary classifier in one space outperforms that in another.}
  \label{fig:additional_results_macro_f1}
\end{figure}

The performance of ternary classifiers is shown in~\Cref{fig:additional_results_macro_f1} $(a)$ and $(b)$. The results are obtained by randomly selecting $100$ sets of three classes; each point in the figure corresponds to one such combination, and its coordinate value equals the averaged Macro F1 score of three independent runs. Red-colored points indicate better performance of the product space form perceptron, while blue-colored points indicate better performance of the Euclidean perceptron. We observe that the product space form perceptron classifies almost twice as many points with higher accuracy compared to its Euclidean counterpart. The performance gain of the product space form perceptron compared to the Euclidean perceptron (in terms of the average gain of Marco F1 scores) is $0.2\%$ for MNIST and $1.27\%$ for Omniglot. These results show that the proposed product space form perceptron algorithm makes better use of the features from a product space form to perform the learning task. Finally, in~\Cref{fig:additional_results_macro_f1} $(c_1)-(c_4)$, we compare the averaged Macro F1 scores for embedded CIRFAR-100 data points in product space forms with their Euclidean counterparts. These supplementary results also indicate that product space form perceptrons offer improved classification results compared to the Euclidean perceptrons. Although the performance improvements are modest, they may be attributed and increased by further adaptation of the VAE algorithm in~\citep{Skopek2020Mixed-curvature}.

\bibliography{bibfile}

\end{document}